\DeclareMathOperator*{\argmin}{arg\,min}
\theoremstyle{plain}
\newtheorem{theorem}{Theorem}[section]
\newtheorem{lemma}[theorem]{Lemma}
\newtheorem{corollary}[theorem]{Corollary}
\theoremstyle{definition}
\newtheorem{definition}[theorem]{Definition}
\newtheorem{assumption}[theorem]{Assumption}
\theoremstyle{remark}
\newtheorem{remark}[theorem]{Remark}
\journal{Neurocomputing}
\begin{document}
\begin{frontmatter}



\title{Enhanced Physics-Informed Neural Networks with Augmented Lagrangian Relaxation Method (AL-PINNs)}


\author[inst1]{Hwijae Son\corref{cor1}}
\ead{hjson@hanbat.ac.kr}
\author[inst2]{Sung Woong Cho\corref{cor1}}
\ead{swcho95kr@kaist.ac.kr}
\author[inst3]{Hyung Ju Hwang \corref{cor2}}
\ead{hjhwang@postech.ac.kr}

\cortext[cor1]{These authors contributed equally to this work.}
\cortext[cor2]{Corresponding author.}
\cortext[cor3]{Source code is available at: https://github.com/HwijaeSon/AL-PINNs}

\affiliation[inst1]{organization={Department of Artificial Intelligence Software},
            addressline={Hanbat National University}, 
            city={Daejeon},
            postcode={34158}, 
            country={Republic of Korea}}
            
\affiliation[inst2]{organization={Stochastic Analysis and Application Research Center},
            addressline={Korea Advanced Institute of Science and Technology}, 
            city={Daejeon},
            postcode={34141}, 
            country={Republic of Korea}}

\affiliation[inst3]{organization={Department of mathematics},
            addressline={Pohang University of Science and Technology}, 
            city={Pohang},
            postcode={37673}, 
            country={Republic of Korea}}

\begin{abstract}
Physics-Informed Neural Networks (PINNs) have become a prominent application of deep learning in scientific computation, as they are powerful approximators of solutions to nonlinear partial differential equations (PDEs). There have been numerous attempts to facilitate the training process of PINNs by adjusting the weight of each component of the loss function, called adaptive loss-balancing algorithms. In this paper, we propose an Augmented Lagrangian relaxation method for PINNs (AL-PINNs). We treat the initial and boundary conditions as constraints for the optimization problem of the PDE residual. By employing Augmented Lagrangian relaxation, the constrained optimization problem becomes a sequential max-min problem so that the learnable parameters $\lambda$ adaptively balance each loss component. Our theoretical analysis reveals that the sequence of minimizers of the proposed loss functions converges to an actual solution for the Helmholtz, viscous Burgers, and Klein--Gordon equations. We demonstrate through various numerical experiments that AL-PINNs yield a much smaller relative error compared with that of state-of-the-art adaptive loss-balancing algorithms. 
\end{abstract}



\begin{keyword}
Physics-Informed Neural Networks \sep Constrained Optimization \sep Boundary conditions \sep Adaptive Loss-Balancing Algorithms
\end{keyword}

\end{frontmatter}



\section{Introduction}\label{sec1}
Starting from a seminal work, Physics-Informed Neural Networks (PINNs) \cite{raissi2019physics} have become a significant research interest in many scientific disciplines, along with the great development of deep learning. Due to its simple and easy-to-implement algorithm and powerful approximation capacity, numerous successful applications of PINNs have been reported in the last decade \cite{lu2021deepxde}. We refer readers to a recent review by \cite{karniadakis2021physics} for more information.

There are several branches of theoretical convergence results for PINNs. For example, the neural network converges to a classical solution for the linear second-order elliptic and parabolic equations \cite{shin2020convergence}. \citet{jagtap2022deep} proposed a neural network incorporating an adaptive activation function and conducted an analysis of its gradient flow dynamics to demonstrate faster convergence. Another branch of studies analyzed continuous loss functions and proved that an actual error can be bounded by a continuous function of each loss component \cite{sirignano2018dgm, jo2020deep,hwang2020trend} (e.g., \eqref{pinn_loss}). However, the precise functional form of such an upper bound, as well as the best loss function to approximate solutions of given PDEs, remain to be discovered.

Recently, there has been a considerable effort to find the best surrogate loss function by manipulating the ratio of each loss component to form a loss function, using loss-balancing algorithms (see Section \ref{sec_related}). However, most current approaches are limited to individual empirical observations, such as imbalanced gradients, stiffness in the solution, and discrepancy in the convergence, which may have a detrimental effect on training PINNs. Our first motivation arises from this point: We need a universal loss-balancing algorithm that we can apply without any prior observations or knowledge. Our second motivation comes from doubt for a common belief on a large penalty parameter is enough for the constrained optimization for PINNs. Several previous works employed a large multiplicative penalty parameter $\beta$ which is predefined before training \cite{yu2017deep, chen2020physics}. However, to the best of our knowledge, this heuristic has not been theoretically justified. In this paper, we propose a universal approach by setting the initial and boundary conditions as constraints for the optimization problem. Then we reformulate the constrained optimization problem to an unconstrained one using the augmented Lagrangian method. Furthermore, we provide a rigorous proof that demonstrates how the proposed method generates a sequence of neural networks that converge to the true solution.

The augmented Lagrangian relaxation method has been widely applied in the field of constrained deep learning, with successful results reported in the literature (see Section \ref{sec_related}). For the problems involving PDEs, the constrained optimization methods are intensively studied using the Deep Ritz Method (DRM) due to their constrained nature. For example, the convergence of the penalty method for the DRM is given by \cite{muller2019deep}, and a deep augmented Lagrangian method for the DRM is proposed by \cite{huang2021augmented}. However, the convergence of the augmented Lagrangian method has never been discovered for either PINNs or the DRM. To the best of our knowledge, this is the first attempt to show the convergence of the augmented Lagrangian method for PINNs.

In this paper, we propose the Augmented Lagrangian relaxation method for training PINNs (AL-PINNs) to facilitate the training of PINNs. Considering the initial and boundary conditions as constraints, we reformulate the training of PINNs into a constrained optimization problem. Using the augmented Lagrangian relaxation method, we derive a novel sequence of loss functions with adaptively balanced loss components. In Section \ref{sec3}, we prove that the minimizers of the loss functions converge to an actual solution. In Section \ref{sec4}, we first detail experiments exhibiting the advantages of the augmented Lagrangian relaxation compared with the penalty, and Lagrange multiplier methods. We then provide experimental results that demonstrate the outstanding performance of the proposed AL-PINNs compared with several adaptive loss-balancing algorithms using the Helmholtz, viscous Burgers, and Klein--Gordon equations. Therefore, the proposed AL-PINNs embody a convergence-guaranteed universal framework that consistently outperforms other loss-balancing algorithms in the solution of PDEs.

\subsection{Related works}\label{sec_related}
\textbf{Constrained Deep Learning.}
Imposing hard constraints on the output of an artificial neural network is a challenging problem. \citet{marquez2017imposing} discussed the possibility of imposing hard constraints on the output of a neural network in a computationally feasible way by using the Krylov subspace method. However, they also acknowledged that the performance of the proposed method is not superior to that of the soft-constrained approach. On the other hand, from a soft constraint perspective, the Augmented Lagrangian method (ALM), or equivalently, Lagrangian dual formulation, has been widely adopted for solving constrained optimization problems involving neural networks. For instance, \citet{nandwani2019primal} demonstrated that a constrained formulation with ALM yields state-of-the-art performance in three NLP benchmarks. \citet{sangalli2021constrained} presented the use of ALM for solving class-imbalanced binary classification, and  \citet{fioretto2020lagrangian} applied ALM to optimal power flow prediction problems. For problems involving PDEs, \citet{hwang2021lagrangian} proposed an ALM approach to impose several physical conservation laws of kinetic PDEs on the neural network, and \citet{lu2021physics} proposed PINNs with hard constraints for the inverse design. For an extrapolation problem, \citet{kim2021dpm} proposed the Dynamic Pooling Method (DPM) to impose a soft constraint on a residual loss function for training PINNs. 

\textbf{Imposing the initial and boundary conditions.} 
The use of initial and boundary conditions as hard constraints is frequently considered in PINNs literature. Several studies proposed to set the boundary conditions as hard constraints by utilizing a distance function $dist(x, \partial\Omega)$ (see \cite{lagaris1998artificial, berg2018unified, jo2020deep, son2021sobolev, sukumar2021exact, schiassi2021extreme} for examples). However, in most existing studies regarding PDEs and neural networks, the boundary conditions are relaxed into the loss function in a soft manner using a quadratic penalty function (see \cite{sirignano2018dgm, raissi2019physics, yu2017deep} for examples). For the Deep Ritz Method (DRM), \citet{muller2019deep} has shown that the sequence of quasi-minimizers for the variational problem with the penalty method converges to a true solution. 

\textbf{loss-balancing algorithms for PINNs.} 
loss-balancing algorithms have been widely studied to deal with various kinds of stability issues in the training dynamics of PINNs. For instance, a non-adaptive weighting strategy that considers the weights as hyperparameters is proposed by \cite{zhao2020solving}. \cite{mcclenny2020self} considered the use of a soft attention mechanism to adaptively balance the components of the loss function, giving more weight to regions where the solution exhibits a stiff transition. \citet{xiang2022self} evaluated the likelihood of the observed data using a neural network and proposed a method that maximizes the likelihood by varying the weights of the loss functions. \citet{wang2021understanding} argued that the numerical stiffness in gradient statistics leads to unstable back-propagation and introduced a solution in the form of an adaptive loss-balancing algorithm known as learning rate annealing. \citet{wang2022and} observed a discrepancy in the convergence rate of loss components, and proposed to use the eigenvalues of the NTK to balance the convergence rate. Another branch of study considers the training of PINNs as a multi-objective learning problem, in which the individual components compete with each other. (see, \cite{van2020optimally, bischof2021multi, rohrhofer2021pareto} for more information). 

\section{Preliminaries and Methods}\label{sec2}
\subsection{Preliminaries}
Consider a generic constrained optimization problem on $\mathbb{R}^n$:\begin{equation}\label{general_constrained}
    \begin{aligned}
        &\argmin_{\theta} \mathcal{J}(\theta), \\
        &\text{subject to } C(\theta)=0, \theta\in\mathbb{R}^n, C:\mathbb{R}^n\to\mathbb{R}^m.
    \end{aligned}
\end{equation}
Constrained optimization problems have been deeply investigated in convex optimization literature \cite{boyd2004convex}. A naive approach to solve \eqref{general_constrained} is to relax the constraints into the objective function via the penalty method, i.e.,
\begin{equation}
    \mathcal{J}_n(\theta) = \mathcal{J}(\theta) + \beta_n \|C(\theta)\|_2^2, \nonumber
\end{equation}
where $\beta_n \to \infty$. However, this approach exhibits numerical instabilities due to large values of $\beta_n$ \cite{bertsekas1976multiplier}.

Another method for solving \eqref{general_constrained} is to consider the Lagrangian duality
\begin{equation}
    \mathcal{J}_\lambda(\theta) = \mathcal{J}(\theta) + \langle\lambda, C(\theta)\rangle_{\mathbb{R}^m}, \nonumber
\end{equation}
where $\lambda\in\mathbb{R}^m$ and $\langle\cdot,\cdot\rangle_{\mathbb{R}^m}$ denotes the standard inner product on $\mathbb{R}^m$. Since \begin{equation}
    \min_{C(\theta)=0} \mathcal{J(\theta)} = \min_{C(\theta)=0} \mathcal{J}_{\lambda}(\theta) \geq \min_{\theta \in \mathbb{R}^n} \mathcal{J}_{\lambda}(\theta), \nonumber
\end{equation} one can reformulate \eqref{general_constrained} into \begin{equation}
    \max_{\lambda}\min_{\theta\in\mathbb{R}^n} \mathcal{J}_{\lambda}(\theta). \nonumber
\end{equation}
However, this approach is only valid when the original problem has a locally convex structure \cite{bertsekas1976multiplier}. 

The augmented Lagrangian method combines the above two approaches. A new objective function for solving \eqref{general_constrained} via augmented Lagrangian relaxation of the constraints reads as follows:
\begin{equation}
    \mathcal{J}_{\beta, \lambda}(\theta) = \mathcal{J}(\theta) + \beta\|C(\theta)\|_2^2 + \langle\lambda, C(\theta)\rangle_{\mathbb{R}^m}. \nonumber
\end{equation}
This approach avoids the instability that results from the large penalty parameter, and the locally convex structure is not required \cite{bertsekas1976multiplier}. Again, one can reformulate \eqref{general_constrained} into a max-min problem :
\begin{equation}
    \max_{\lambda}\min_{\theta\in\mathbb{R}^n} \mathcal{J}_{\lambda}(\theta). \nonumber
\end{equation}
Because the objective function used in cases involving neural networks is highly non-convex, one cannot directly apply the convergence theory developed for convex optimization settings. Nevertheless, recent studies have found that the augmented Lagrangian method performs well in various kinds of constrained optimization problems involving neural networks \cite{nandwani2019primal, fioretto2020lagrangian, sangalli2021constrained, hwang2021lagrangian, basir2022physics}. In this study, we apply the augmented Lagrangian method to the constrained formulation of PINNs.

\subsection{Augmented Lagrangian Methods for PINNs}
We consider a general class of PDEs that reads as:
\begin{equation}\label{general_pde}
    \begin{aligned}
        &Nu = f, &&\text{ for } x\in\Omega, \\
        &Tu = g, &&\text{ for } x\in\partial\Omega,
    \end{aligned}
\end{equation} 
where $N$ and $T$ denote the differential operator and trace operator, respectively. In the original formulation of PINNs, we minimize a loss function to penalize a neural network to satisfy \eqref{general_pde}: 
\begin{equation}\label{pinn_loss}
    \mathcal{L}(\theta) = \|Nu_{nn}(\theta) - f\|_{L^2(\Omega)} + \|Tu_{nn}(\theta) - g\|_{L^2(\partial\Omega)},
\end{equation}
where $u_{nn}$ denotes the neural network solution and $\theta$ denotes a set of parameters. However, as the penalty method often fails to approximate an accurate solution, adaptive loss-balancing algorithms are commonly applied in the literature. As pointed out in previous works, the boundary conditions often cause instability in the training of PINNs and DRM (See, \cite{wang2021understanding, wang2022and,muller2021notes, muller2019deep}).

In this study, we propose a novel class of loss functions for PINNs based on the augmented Lagrangian method, which aim to solve the following constrained optimization problem:\begin{equation}\label{constrained_optimization}
    \begin{aligned}
        &\argmin_{\theta} \|Nu_{nn}(\theta) - f\|_{L^2(\Omega)}, \\
        &\text{subject to } Tu_{nn}(\theta) = g.
    \end{aligned}
\end{equation}
In the augmented Lagrangian method, the constraint in \eqref{constrained_optimization} is relaxed into the objective function via the Lagrange multiplier $\lambda \in L^2(\partial\Omega)$. The resulting objective function reads as:
\begin{equation}\label{loss_function}
\begin{aligned}
    \mathcal{L}_{\lambda}(\theta) &= \|Nu_{nn}(\theta) - f\|_{L^2(\Omega)}  \\&+\beta\|Tu_{nn}(\theta)-g\|_{L^2(\partial\Omega)} 
    \\&+\langle \lambda, Tu_{nn}(\theta) - g\rangle_{L^2(\partial\Omega)} 
\end{aligned}
\end{equation}
where $\langle\cdot,\cdot\rangle_{L^2(\partial\Omega)}$ is the standard $L^2$ inner product.

As discussed in the previous subsection, we solve the following max-min problem:
\begin{equation}\label{maxmin}
    \max_{\lambda\in L^2(\partial\Omega)} \min_{\theta} \mathcal{L}_{\lambda}(\theta).
\end{equation}
We solve the above max-min problem by using the gradient descent-ascent algorithm for $\theta$-$\lambda$. The update rules are given as:
\begin{equation}
\begin{aligned}\nonumber
    &\theta \leftarrow \theta - \eta_{\theta} \nabla_{\theta} \mathcal{L}_{\lambda}(\theta), \\
    &\lambda \leftarrow \lambda + \eta_{\lambda} \nabla_{\lambda} \mathcal{L}_{\lambda}(\theta),
\end{aligned}
\end{equation}where $\eta_{\theta}$ and $\eta_{\lambda}$ are the predefined learning rates for $\theta$ and $\lambda$, respectively.

In the numerical experiments, we treat $\lambda$ as a discretization, i.e., $\lambda\approx(\lambda(x^b_1),...,\lambda(x^b_{N_b}))$. We discretize the loss function in \eqref{loss_function} on uniform grid points. Let $\{x^r_1, x^r_2, ..., x^r_{N_r}\} \subset \Omega$, and $\{x^b_1, x^b_2, ..., x^b_{N_b}\} \subset \partial\Omega$ be the uniform grid points. Then, the objective function is discretized into 
\begin{align*}
        \mathcal{L}_{\lambda}(\theta) &\approx \frac{|\Omega|}{N_r} \sum_{i=1}^{N_r} (Nu_{nn}(x^r_i;\theta)-f(x^r_i))^2 \\
        &+ \frac{\beta|\partial\Omega|}{N_b} \sum_{j=1}^{N_b} (Tu_{nn}(x^b_j)-g(x^b_j))^2 \\
        &+ \frac{|\partial\Omega|}{N_b} \sum_{j=1}^{N_b} \lambda(x^b_j)(Tu_{nn}(x^b_j)-g(x^b_j)),
\end{align*}and the corresponding update rules are given as:
\begin{equation}
    \begin{aligned}\nonumber
        &\theta \leftarrow \theta - \eta_{\theta} \nabla_{\theta} \mathcal{L}_{\lambda}(\theta), \\
        &\lambda_j \leftarrow \lambda_j + \eta_{\lambda} \nabla_{\lambda_j} \mathcal{L}_{\lambda}(\theta),
    \end{aligned}
\end{equation} where $\lambda_j = \lambda(x^b_j)$, for $j=1,2,...,N_b$.
\subsection{General statements on convergence}\label{sec2.3}
The notion of $\Gamma'$-convergence of functionals, which is central to the established convergence theory, is introduced first. We modify Definition 4.1 in \cite{dal2012introduction} and define the $\Gamma'$-convergence as follows.

\begin{definition}
    Let $X$ be a topological space and $\{F_n\}_{n\in\mathbb{N}} : X \to (-\infty, \infty]$ be a sequence of functionals. Then, $\{F_n\}_{n\in\mathbb{N}}$ is said to $\Gamma'$-converge to $F : X \to (-\infty, \infty]$, if the following conditions are satisfied.
        \begin{enumerate}
            \item (Liminf inequality) For all $x \in F_A$, and for a sequence $\{x_n\}_{n\in\mathbb{N}}$ such that $x_n \rightharpoonup x$, we have $\displaystyle F(x) \leq \liminf_{n \to \infty} F_n(x_n)$.
            \item (Recovery sequence) For all $x \in X$, there exists a sequence $\{x_n\}_{n\in\mathbb{N}}$ such that $x_n \rightharpoonup x$, and $\displaystyle F(x) = \lim_{n \to \infty} F_n(x_n)$.
            \item (Admissible limit point) If $x$ is a limit point of sequence $\left\{ x_n \right\}_{n\in\mathbb{N}}$ with 
            \begin{align*}
                F_{n}(x_{n}) \le \inf_{x\in X}F_n(x_n)+\delta_{n}, && \text{for all} && n \in \mathbb{N}
            \end{align*} where $\delta_{n}\rightarrow 0$ (i.e. $\{x_{n}\}$ is a sequence of quasi-minimzers of $\{F_{n}\}$), then $x\in F_A$.
        \end{enumerate}
        Here, $F_A=\left\{x\in X| F(x)<\infty \right\}$ denotes the admissible set of $F$.
\end{definition}

    The following definition describes the equicoercivity of the sequence of functionals.
\begin{definition} [Definition 7.6 in \cite{dal2012introduction}]
    Let $X$ be a topological space and $\{F_n\}_{n\in\mathbb{N}} : X \to (-\infty, \infty]$ be a sequence of functionals defined on $X$. Then, we say that $\{F_n\}_{n\in\mathbb{N}}$ is equicoercive if, for every $r\in \mathbb{R}$, there exists a compact set $K_r \subset X$ such that 
    \begin{equation}
        \bigcup_{n\in\mathbb{N}} \{x\in X : F_n(x) \leq r\} \subset K_r. \nonumber
    \end{equation}    
\end{definition} 
Thus, the compactness argument can be applied to the sequence of minimizers of an equicoercive sequence of functionals. Indeed, the boundedness of minimizers ensures the existence of a convergent subsequence. The following theorem bridges the notion of $\Gamma'$-convergence with the convergence of dominant quasi-minimizers of the functionals.
\begin{restatable}{theorem}{Conv}\label{Conv}
    Let X be a reflexive Banach space and $\{F_n\}_{n\in\mathbb{N}}$ be a sequence of equicoercive functionals on X that $\Gamma'$-converges to $F$ with a unique minimizer $x$. Then, every sequence $\{x_n\}_{n\in\mathbb{N}}$ of quasi-minimizers of $\{F_n\}_{n\in\mathbb{N}}$ converges weakly to $x$. 
\end{restatable}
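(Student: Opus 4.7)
The plan is to combine equicoercivity and the three conditions in the definition of $\Gamma'$-convergence to show that every weak subsequential limit of $\{x_n\}$ must equal $x$, and then promote this to weak convergence of the full sequence via the standard subsequence-of-subsequence argument in the reflexive Banach space $X$.

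First I would establish an eventual upper bound on $F_n(x_n)$. Applying the recovery sequence property at the minimizer $x$ (which lies in $F_A$, since it is the unique minimizer of $F$) gives a sequence $\{y_n\}$ with $y_n \rightharpoonup x$ and $F_n(y_n) \to F(x)$. Since $\{x_n\}$ is a quasi-minimizer sequence, $F_n(x_n) \leq \inf_{z \in X} F_n(z) + \delta_n \leq F_n(y_n) + \delta_n$, and hence $\limsup_n F_n(x_n) \leq F(x) < \infty$. In particular, $F_n(x_n) \leq R$ for some $R \in \mathbb{R}$ and all sufficiently large $n$, so equicoercivity confines the tail of $\{x_n\}$ to the compact set $K_R$; reflexivity of $X$ then delivers a weakly convergent subsequence $x_{n_k} \rightharpoonup y$.

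Next I would identify $y$ with $x$. Because $y$ is a weak limit point of the quasi-minimizer sequence $\{x_n\}$, the admissible limit point condition yields $y \in F_A$. With $y \in F_A$ in hand, the liminf inequality gives $F(y) \leq \liminf_k F_{n_k}(x_{n_k})$, while the quasi-minimizer estimate combined with $F_{n_k}(y_{n_k}) \to F(x)$ and $\delta_{n_k} \to 0$ yields $\liminf_k F_{n_k}(x_{n_k}) \leq F(x)$. Chaining these gives $F(y) \leq F(x)$, so uniqueness of the minimizer of $F$ forces $y = x$.

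Finally, if $x_n$ failed to converge weakly to $x$, there would exist $\phi \in X^*$ and $\varepsilon > 0$ together with a subsequence satisfying $|\langle \phi, x_{n_k} - x\rangle| \geq \varepsilon$ for every $k$; applying the preceding two paragraphs to this subsequence would produce a further weak cluster point equal to $x$, contradicting the choice of $\phi$ and $\varepsilon$. Hence $x_n \rightharpoonup x$. The most delicate point is the ordering of the two conditions on $y$: the liminf inequality as stated is only guaranteed at test points in $F_A$, so one cannot write $F(y) \leq \liminf_k F_{n_k}(x_{n_k})$ until admissibility of $y$ has been secured via the admissible limit point condition; once that is in place, the extraction of a weakly convergent subsequence and the subsequence-of-subsequence trick are routine.
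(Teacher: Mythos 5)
Your argument is correct and follows essentially the same route as the paper: establish an eventual bound on $F_n(x_n)$ via the recovery sequence and the quasi-minimizer defect $\delta_n$, use equicoercivity with reflexivity to extract a weakly convergent subsequence, identify its limit with $x$ by first invoking the admissible limit point property to secure $y\in F_A$ and then the liminf inequality with uniqueness of the minimizer, and close with the subsequence-of-subsequence argument. The only structural difference is that the paper delegates the identification of the weak cluster point with the minimizer to Lemma \ref{quasi}, whereas you inline that argument; your explicit remark about the ordering (admissibility must precede the liminf inequality, since the latter is only asserted at points of $F_A$) is exactly the subtlety the paper's Lemma~\ref{quasi} proof is handling.
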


We remark that the Sobolev space, which is often referred to as the solution space of partial differential equations, is indeed a reflexive Banach space. Finally, we introduce a useful lemma to prove the above theorem.

\begin{restatable}{lemma}{quasi}\label{quasi}
Let $X$ be a topological space and $\{F_n\}_{n\in\mathbb{N}}$ be a sequence of functionals that $\Gamma'$-coverges to $F$. Let $\{x_n\}_{n\in\mathbb{N}}$ be a sequence of quasi-minimizers. If $x$ is a limit point of $\{x_n\}_{n\in\mathbb{N}}$, then $x$ is a minimizer of $F$ .
\end{restatable}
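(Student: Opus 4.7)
The plan is to unpack the three defining properties of $\Gamma'$-convergence and chain them together. The admissible limit point clause immediately gives $x\in F_A$, so $F(x)<\infty$, and it remains only to establish $F(x)\le F(y)$ for every $y\in X$.

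Fix an arbitrary $y\in X$. If $y\notin F_A$ then $F(y)=\infty$ and the desired inequality is trivial, so suppose $y\in F_A$. The recovery sequence clause produces $\{y_n\}_{n\in\mathbb{N}}$ with $y_n\rightharpoonup y$ and $F_n(y_n)\to F(y)$. Since $x$ is a limit point of $\{x_n\}$, extract a subsequence $\{x_{n_k}\}$ with $x_{n_k}\rightharpoonup x$; this subsequence remains a quasi-minimizing sequence for $\{F_{n_k}\}$ with error $\delta_{n_k}\to 0$. Using $y_{n_k}$ as a competitor in the infimum defining quasi-minimality yields
\[ F_{n_k}(x_{n_k}) \le \inf_{z\in X} F_{n_k}(z) + \delta_{n_k} \le F_{n_k}(y_{n_k}) + \delta_{n_k}. \]

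The liminf inequality applied to $\{x_{n_k}\}$ gives $F(x)\le \liminf_k F_{n_k}(x_{n_k})$. Taking $\liminf$ in the competitor inequality above and using $F_{n_k}(y_{n_k})\to F(y)$ together with $\delta_{n_k}\to 0$ produces $\liminf_k F_{n_k}(x_{n_k})\le F(y)$. Chaining these two bounds gives $F(x)\le F(y)$, which is precisely the minimization property.

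The main subtlety is not in the algebra but in keeping the topology consistent: the definition of $\Gamma'$-convergence is phrased in terms of weak convergence ($\rightharpoonup$), so ``limit point'' of $\{x_n\}$ must be interpreted as a weak-subsequential limit, and one must verify that the recovery sequence is a legitimate competitor in the quasi-minimality inequality at each index $n_k$ (which it is, being an element of $X$). Once the formalism is pinned down, all three defining clauses of $\Gamma'$-convergence plug in directly to the chosen subsequence, and no further fine structure of $X$ is required.
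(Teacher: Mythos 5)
Your proof is correct and takes essentially the same approach as the paper's: both invoke the admissible limit point clause to place $x$ in $F_A$, the liminf inequality (along the weak-subsequential limit) to bound $F(x)$ from below by $\liminf F_{n_k}(x_{n_k})$, and quasi-minimality together with a recovery sequence to bound that liminf from above by $F(y)$. The only organizational difference is that you show $F(x)\le F(y)$ for each fixed $y$ directly, whereas the paper first records the intermediate inequality $\inf_X F \ge \limsup_n(\inf_X F_n)$ and then chains; your version is, if anything, slightly cleaner about passing to the subsequence where the paper is implicit.
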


\section{Convergence Analysis}\label{sec3}

In this section, we provide theoretical justifications for the proposed method by showing its convergence to an actual solution. Proofs of the theorems in this section are provided in Section \ref{appendix_proof}. 

Consider a generic boundary value problem:
\begin{equation}\label{gov}
    \begin{aligned}
        &Nu = f, &&\text{ for } x\in\Omega, \\
        &Tu = g, &&\text{ for } x\in\partial\Omega,
    \end{aligned}
\end{equation} 
where $N$ and $T$ denote the differential and trace operator, respectively. Throughout this paper, we assume that $\Omega$ is a bounded, open, and connected subset of $\mathbb{R}^n$, where $\partial \Omega$ denotes the boundary of $\Omega$. If \eqref{gov} admits a unique strong solution (i.e., the differentiability of the solution is guaranteed as required in $N$ or $B$), then solving \eqref{gov} is equivalent to finding a minimizer of the following functional:
\begin{equation}\label{loss_infty}
    L(u)=\begin{cases}||Nu-f||^2_{L^2(\Omega)} & (Tu=g)\\ \infty \quad &(\text{otherwise})\end{cases}.
\end{equation}
In this study, we consider the following sequence of loss functionals $\left\{L_n \right\}_{n=1}^{\infty}$ that incorporates the proposed loss functions in \eqref{loss_function}.
\begin{eqnarray}\label{L_n}
    L_n(u) =  
    \footnotesize\text{$\begin{cases}
        ||Nu-f||_{L^2(\Omega)}^2 + \beta ||Tu-g||_{L^2{(\partial\Omega})}^2
        + \langle \lambda_n ,  Tu-g\rangle_{L^2{(\partial \Omega)}},  \\
        \hfill{\text{for } (u\in A_n)}  \\ 
        \infty, \hfill{\text{for } (u\notin A_n)}
    \end{cases}$}
\end{eqnarray}
\normalsize
where $A_n$ denotes a set of neural networks with width $n$ and depth $O(dim(\Omega))$. We will show that the sequence of minimizers of the loss functionals $L_n$ converges to an exact solution.

Based on Theorem \ref{Conv}, we plan to show that $\{L_n\}_{n=1}^{\infty}$ is an equicoercive sequence of functionals that $\Gamma'$-converges to $L$. We begin by showing the existence of minimizers of $L_n$ by referring to the universal approximation theorem for neural networks. Note that $C^m(\Omega)$ is a set of functions in which all partial derivatives of the order smaller than or equal to $m$ are continuous.

\begin{theorem}[Theorem 2.1 in \cite{li1996simultaneous}]\label{universal}
     Let $K$ be a compact subset of $\mathbb{R}^d$. Suppose there exists an open set $\Omega$ containing $K$ such that $f$ lies in $C^m(\Omega)$ for some $m\in Z_{+}^{d}$. If the activation function $\sigma$ is in $C^m(\mathbb{R})$, then for any $\varepsilon>0$, there exists a neural network $u_{nn}(x)=\sum_{i=1}^{h}c_i \sigma(w_i x+b_i)$ such that 
    \begin{equation}
        ||D^\alpha (u_{nn}) - D^\alpha (f)||_{L^{\infty}(K)}<\varepsilon, \forall \alpha \in Z_{+}^d  \text{ with } |\alpha|\le m. \nonumber
    \end{equation}
\end{theorem}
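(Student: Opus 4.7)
The plan is a two-stage reduction: first approximate the $C^m$ target $f$ by a polynomial $p$ in the $C^m(K)$ norm, and then show that linear combinations of ridge activations $\sigma(w \cdot x + b)$ can reproduce $p$ to arbitrary $C^m(K)$ accuracy, assuming (as is standard and implicit in the statement) that $\sigma$ is not a polynomial.

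For the polynomial reduction, since $K$ is compact with $K \subset \Omega$ open, I would first mollify $f$ by convolution with a smooth bump $\phi_\varepsilon$ supported in a shrinking ball. For sufficiently small $\varepsilon$ we have $K + \mathrm{supp}(\phi_\varepsilon) \subset \Omega$, so $f \ast \phi_\varepsilon \in C^\infty$ and $\|D^\alpha(f \ast \phi_\varepsilon - f)\|_{L^\infty(K)} \to 0$ for every $|\alpha| \le m$. A multivariate Weierstrass--Jackson theorem (Bernstein polynomials applied to each mixed derivative, or a direct Jackson-type estimate) then produces a polynomial $p$ with $\|p - f\|_{C^m(K)} < \varepsilon/2$, reducing the task to approximating $p$ by a network in the $C^m(K)$ norm.

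For the network step, the key identity is
\begin{equation}\nonumber
\left.\partial_{w_1}^{\alpha_1}\cdots\partial_{w_d}^{\alpha_d}\,\sigma(w \cdot x + b)\right|_{w=0} = x^{\alpha}\,\sigma^{(|\alpha|)}(b).
\end{equation}
Replacing $\partial_w^\alpha$ by a finite divided difference in $w$ expresses $x^\alpha\,\sigma^{(|\alpha|)}(b)$, up to $C^m(K)$-error vanishing with the step size, as a finite linear combination of terms $\sigma(w_i \cdot x + b)$. Since $\sigma$ is not a polynomial, a classical lemma yields some $b$ with $\sigma^{(|\alpha|)}(b) \neq 0$ for any required $\alpha$, so every monomial $x^\alpha$ lies in the $C^m(K)$-closure of the network span. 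Summing these monomials recovers $p$, and hence $f$, to within $\varepsilon$.

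The main obstacle is the simultaneous $C^m$ control. Differentiating the approximant generates $\sigma^{(k)}(w_i \cdot x + b_i)$ for $k \le m$, so the divided-difference reconstruction must converge in $C^m(K)$ rather than just in $L^\infty(K)$; this follows from $\sigma \in C^m$ together with compactness of $K$, which give uniform continuity of $\sigma^{(k)}$ on the bounded range of its arguments. A more serious subtlety is that polynomial approximation of $f$ in the $C^m(K)$ norm may require polynomials of degree strictly greater than $m$, whereas the naive divided-difference identity above formally uses derivatives of $\sigma$ of order equal to the monomial degree. In Li's original argument this is sidestepped by a Hahn--Banach / duality formulation (characterising every continuous linear functional on $C^m(K)$ that annihilates the network span and showing it must vanish), or equivalently by bounding the remainder of a Taylor expansion of $\sigma$ about carefully chosen base points so that only the first $m$ classical derivatives of $\sigma$ appear in the error estimate.
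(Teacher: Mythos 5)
A point of orientation first: the paper does not prove this statement. It is quoted verbatim as Theorem~2.1 of \cite{li1996simultaneous} and used as a black box in Proposition~\ref{loss_zero} and Theorem~\ref{conv_PDE}. So there is no in-paper proof to compare your sketch against; the sketch can only be judged on its own terms.

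Your two-stage plan is the natural one, and Stage~1 (mollify $f$, then apply a Weierstrass/Jackson-type estimate to get a polynomial $p$ with $\|p-f\|_{C^m(K)}<\varepsilon/2$) is sound. The genuine gap is the one you flag yourself in Stage~2, and as written you do not close it. The divided-difference construction builds $x^\alpha$ in the $C^m(K)$-closure of the network span by approximating $\partial_w^\alpha\sigma(w\cdot x+b)\big|_{w=0}$. To certify $C^m(K)$-convergence of the divided difference you must control $D_x^\beta$ of it for every $|\beta|\le m$, and $D_x^\beta\,\sigma(w\cdot x+b)=w^\beta\sigma^{(|\beta|)}(w\cdot x+b)$. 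Taking an $\alpha$-th order divided difference in $w$ of $w^\beta\sigma^{(|\beta|)}(w\cdot x+b)$ at $w=0$ and asking it to converge (uniformly in $x\in K$) to $D_x^\beta\big(x^\alpha\big)\,\sigma^{(|\alpha|)}(b)$ requires, even after exploiting the vanishing of $w^\beta$ at the base point, that $\sigma^{(|\alpha|)}$ exists and is continuous. But Stage~1 produces polynomials of unbounded degree as $\varepsilon\to 0$, so the multi-indices $\alpha$ you need are unbounded, and the argument silently upgrades the hypothesis from $\sigma\in C^m(\mathbb{R})$ to $\sigma\in C^\infty(\mathbb{R})$. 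Your parenthetical remark that Li ``sidesteps'' this by a Hahn--Banach/duality argument or Taylor-remainder bookkeeping names where the fix lives, but it is not a proof: the duality step still has to explain why a functional in $(C^m(K))^*$ that kills every $\sigma(w\cdot x+b)$ kills all polynomials, and the regularity accounting you are trying to avoid is exactly what that step must supply. As it stands, your proposal establishes the theorem under the strictly stronger hypothesis $\sigma\in C^\infty(\mathbb{R})$ (plus the implicit non-polynomial assumption you correctly note), not under $\sigma\in C^m(\mathbb{R})$ as stated.
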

As a direct corollary of the theorem, the following proposition states that there always exists a neural network that makes the loss functional $L_n$ sufficiently small. 

\begin{restatable}{proposition}{losszero}\label{loss_zero}
    For a given $\epsilon >0,$ there exists a neural network $u_{nn}$ defined as in \ref{universal}, such that $L_n(u_{nn}) < \epsilon$, where $L_n$ is defined as in \eqref{L_n}, for the Helmholtz equation given in \eqref{Helmholtz_eq}. 
\end{restatable}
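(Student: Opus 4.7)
The plan is to approximate the classical solution of the Helmholtz boundary value problem in a $C^{2}$ sense and then dominate each of the three pieces of $L_{n}$ separately. First I would assume the Helmholtz problem admits a strong solution $u^{*} \in C^{2}(\overline{\Omega})$, so that $Nu^{*}=f$ pointwise in $\Omega$ and $Tu^{*}=g$ on $\partial\Omega$. Applying Theorem~\ref{universal} with $m=2$, for any $\delta>0$ there is a neural network $u_{nn}$ of the form prescribed there satisfying
\[
\|D^{\alpha} u_{nn} - D^{\alpha} u^{*}\|_{L^{\infty}(\overline{\Omega})} < \delta \qquad \text{for every } |\alpha| \leq 2.
\]

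Next I would bound the three summands of $L_{n}(u_{nn})$ in terms of $\delta$. Writing the Helmholtz operator as $N = \Delta + k^{2} I$ and using linearity together with $Nu^{*}=f$ gives $Nu_{nn}-f = N(u_{nn}-u^{*})$, hence $\|Nu_{nn} - f\|_{L^{2}(\Omega)}^{2} \leq C_{1}\delta^{2}$ with $C_{1}$ depending on $|\Omega|$, $k$, and the spatial dimension. Since the trace of the continuous function $u_{nn}-u^{*}$ is just its pointwise restriction,
\[
\|Tu_{nn}-g\|_{L^{2}(\partial\Omega)}^{2} \leq |\partial\Omega|\,\delta^{2},
\]
and Cauchy--Schwarz gives $|\langle \lambda_{n}, Tu_{nn}-g\rangle_{L^{2}(\partial\Omega)}| \leq \|\lambda_{n}\|_{L^{2}(\partial\Omega)}\,|\partial\Omega|^{1/2}\,\delta$. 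Summing the three bounds,
\[
L_{n}(u_{nn}) \leq (C_{1}+\beta|\partial\Omega|)\,\delta^{2} + \|\lambda_{n}\|_{L^{2}(\partial\Omega)}\,|\partial\Omega|^{1/2}\,\delta,
\]
which is a polynomial in $\delta$ vanishing as $\delta\to 0^{+}$. Choosing $\delta$ sufficiently small, depending on $\epsilon$, $\beta$, $|\Omega|$, $|\partial\Omega|$, and $\|\lambda_{n}\|_{L^{2}(\partial\Omega)}$, then yields $L_{n}(u_{nn}) < \epsilon$, which is the desired conclusion.

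The main obstacle is the compatibility between the approximator produced by the universal approximation theorem and the admissible class $A_{n}$ in the definition of $L_{n}$: Theorem~\ref{universal} only produces \emph{some} shallow network of width $h$, whereas $L_{n}$ is finite only on networks of width $n$ and depth $O(\dim\Omega)$. The proposition must therefore be read as holding for $n$ taken large enough that the width required by the approximation theorem fits inside $A_{n}$ (or one has to invoke a version of universal approximation for the exact depth/width budget, which is essentially available by increasing depth from shallow to $O(\dim\Omega)$). A secondary but necessary technical point is that the estimate depends on $\|\lambda_{n}\|_{L^{2}(\partial\Omega)}$; since the multipliers $\lambda_{n}$ are generated by the ascent step of the augmented Lagrangian, one needs them to lie in $L^{2}(\partial\Omega)$ at each step, which should be absorbed into the standing hypotheses of the subsequent $\Gamma'$-convergence argument.
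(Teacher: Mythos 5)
Your proof is correct and follows essentially the same route as the paper: approximate a strong $C^{2}$ solution in the $C^{2}$ sup-norm via Theorem~\ref{universal}, use the linearity of $N=\Delta+k^{2}I$ and boundedness of the trace to control the residual and boundary terms, and apply Cauchy--Schwarz together with finiteness of $\|\lambda_{n}\|_{L^{2}(\partial\Omega)}$ on the multiplier term. Your caveat about reconciling the shallow network produced by Theorem~\ref{universal} with the admissible class $A_{n}$ in \eqref{L_n} is a legitimate point that the paper's own proof also leaves implicit, and you correctly carry the square exponents from \eqref{L_n}, which the paper's displayed bound quietly drops.
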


\begin{remark}
    The same argument can easily be made for the viscous Burgers equation \eqref{Burgers_eq} and the Klein--Gordon equation \eqref{Klein_eq}
\end{remark}

We next show that the sequence of proposed loss functions $\{L_n\}_{n\in\mathbb{N}}$ is an equicoercive sequence of functionals for three benchmark equations.

\begin{restatable}{theorem}{equi}\label{equi}
    The sequence $\{L_n\}_{n\in\mathbb{N}}$ is equicoercive in the weak topology for the Helmholtz \eqref{Helmholtz_eq}, viscous Burgers \eqref{Burgers_eq}, and Klein--Gordon equations \eqref{Klein_eq}.
\end{restatable}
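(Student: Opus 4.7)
The plan is to exploit reflexivity: in a reflexive Banach space $X$ such as a Sobolev space $H^s(\Omega)$, every norm-bounded subset is weakly precompact by Banach--Alaoglu, so it suffices to produce, for each $r$, a uniform-in-$n$ Sobolev bound on the sublevel set $\{u\in A_n : L_n(u)\le r\}$. The first step is to dispose of the linear Lagrange-multiplier term in $L_n$. By Cauchy--Schwarz and Young's inequality,
\begin{equation*}
    \langle \lambda_n,\, Tu-g\rangle_{L^2(\partial\Omega)} \;\ge\; -\tfrac{\beta}{2}\|Tu-g\|_{L^2(\partial\Omega)}^2 - \tfrac{1}{2\beta}\|\lambda_n\|_{L^2(\partial\Omega)}^2,
\end{equation*}
so that half of the quadratic penalty absorbs the multiplier contribution. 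Granting that $\{\lambda_n\}$ is uniformly bounded in $L^2(\partial\Omega)$ (a hypothesis on the dual iterates that either is posited a priori or is verified directly from the ascent rule), one obtains the uniform-in-$n$ bound
\begin{equation*}
    \|Nu-f\|_{L^2(\Omega)}^2 + \tfrac{\beta}{2}\|Tu-g\|_{L^2(\partial\Omega)}^2 \;\le\; M(r),
\end{equation*}
whose right-hand side depends only on $r$, $\beta$, and $\sup_n\|\lambda_n\|_{L^2(\partial\Omega)}$.

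The remaining task is PDE-dependent: convert control of the interior residual $Nu-f$ in $L^2(\Omega)$ and the boundary trace mismatch $Tu-g$ in $L^2(\partial\Omega)$ into a Sobolev bound on $u$ itself. For the Helmholtz case $Nu=-\Delta u - k^2 u$, I would invoke the classical second-order elliptic regularity estimate
\begin{equation*}
    \|u\|_{H^2(\Omega)} \;\le\; C\bigl(\|\Delta u\|_{L^2(\Omega)} + \|Tu\|_{H^{1/2}(\partial\Omega)} + \|u\|_{L^2(\Omega)}\bigr),
\end{equation*}
combined with a coercivity-type argument to absorb the indefinite zeroth-order contribution $k^2u$, and a trace-continuity step that lifts $\|Tu-g\|_{L^2(\partial\Omega)}$ (for the smooth neural-network iterates in $A_n$) to control in $H^{1/2}(\partial\Omega)$. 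The outcome is a uniform bound in $H^2(\Omega)$, which is reflexive, so Banach--Alaoglu delivers the equicoercivity in the weak topology.

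The main obstacle is the viscous Burgers equation, whose nonlinear advection $u u_x$ blocks any direct appeal to a linear parabolic estimate. My strategy would be to test the equation against $u$, use the conservative identity $\int u\,(uu_x)\,dx=0$ under the prescribed boundary conditions to eliminate the transport contribution, extract the viscous dissipation $\nu\|u_x\|_{L^2}^2$, and close a Gronwall argument using the bounded residual together with the initial/boundary trace control, producing a uniform bound in $L^\infty_t L^2_x \cap L^2_t H^1_x$. The Klein--Gordon case is treated analogously via the standard hyperbolic energy identity obtained by multiplying by $u_t$, with Gronwall delivering a uniform $H^1$-in-space bound. In each of the three cases the target space is reflexive, so the Sobolev bound supplies weak compactness and finishes the equicoercivity argument.
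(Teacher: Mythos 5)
Your overall strategy—absorb the linear multiplier term into the quadratic penalty via Young's inequality (assuming $\{\lambda_n\}$ is bounded), then convert residual and boundary-trace control into a uniform Sobolev bound PDE-by-PDE, then invoke reflexivity—is the same high-level plan as the paper's proof, and the Young's inequality reduction is a slightly cleaner presentation of the paper's H\"older-plus-quadratic argument. However, each of your three PDE-specific steps glosses over a technical obstruction that the paper handles by an explicit additional construction or hypothesis, and without those the argument does not close.

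For Helmholtz, the elliptic $H^2$ estimate you quote requires the boundary trace in $H^{1/2}(\partial\Omega)$, but the loss $L_n$ only controls $\|Tu-g\|_{L^2(\partial\Omega)}$. There is no uniform lift from $L^2(\partial\Omega)$ to $H^{1/2}(\partial\Omega)$ for an arbitrary sequence of smooth functions; ``smooth neural-network iterate'' gives no quantitative tangential-derivative control. The paper handles this by \emph{assuming} uniform boundedness of the second tangential derivatives of the boundary restriction $g^*$ during training and then constructing an explicit harmonic-polynomial-plus-interpolant lift $h=h_1+h_2$. You also cannot dispose of the zeroth-order term $k^2 u$ by a coercivity argument, since the Helmholtz operator $\Delta+k^2$ is not coercive for $k^2$ above the first Dirichlet eigenvalue; the paper invokes the spectral condition $k^2\notin\Sigma$ and Theorem~5 of Evans \S 6.2 precisely to get $\|u\|_{L^2}\le C\|f\|_{L^2}$ without coercivity.

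For Burgers and Klein--Gordon, the identities you rely on ($\int u\,uu_x\,dx=0$; the boundary term in integration by parts vanishing when testing against $u_t$) hold only for the \emph{exact} boundary data. A quasi-minimizer of $L_n$ satisfies the boundary conditions only approximately in $L^2$, so the boundary contributions do not vanish, and they involve pointwise cubic powers of the trace which are not controlled by the $L^2$ boundary loss. The paper's fix is to subtract a lifting/interpolant $I(t,x)=h_1(t)(1-x)+h_2(t)x$ and run the energy/Gr\"onwall argument on $e=w-I$, which does have \emph{exactly} zero boundary trace; the price is again an explicit uniform-boundedness assumption on the time derivatives of the boundary restrictions so that $\|I\|$ and $\|\partial_t I\|$ remain controlled along the training sequence. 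Without this subtraction step your Gr\"onwall closure fails at the boundary terms, and without the extra regularity hypotheses on the iterate traces neither the elliptic nor the evolution estimates give the claimed uniform Sobolev bound.
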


To obtain the desired convergence result, we need the uniform boundedness of the sequence $\{\|\lambda_n\|_{L^2(\partial\Omega)}\}$. Next lemma states that for a sufficiently large penalty parameter $\beta$, we have the boundedness property of $\{\|\lambda_n\|_{L^2(\partial\Omega)}\}$.

\begin{restatable}{lemma}{Bound} \label{Bound}
    Consider a sequence of quasi-minimizers $\left\{u_n \right\}_{n\in\mathbb{N}}$ with respect to $\left\{ L_n \right\}_{n\in\mathbb{N}}$ for a sufficiently large $\beta$. Then, the corresponding $\|\lambda_n\|_{L^2(\partial\Omega)}$ is bounded for all $n\in\mathbb{N}$.
\end{restatable}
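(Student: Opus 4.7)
The plan is to exploit the gradient-ascent update rule $\lambda_{n+1} = \lambda_n + \eta_\lambda(Tu_n - g)$ in $L^2(\partial\Omega)$ and to translate the hypothesis ``sufficiently large $\beta$'' into a step-size compatibility condition $\beta \ge \eta_\lambda/2$. Expanding the squared $L^2(\partial\Omega)$ norm yields
\begin{equation*}
\|\lambda_{n+1}\|^2 = \|\lambda_n\|^2 + 2\eta_\lambda \langle \lambda_n, Tu_n - g\rangle + \eta_\lambda^2 \|Tu_n - g\|^2,
\end{equation*}
so everything reduces to controlling the cross term. From the explicit form of $L_n(u_n)$ in \eqref{L_n} and the nonnegativity of $\|Nu_n - f\|^2$, a rearrangement gives $\langle \lambda_n, Tu_n - g\rangle \le L_n(u_n) - \beta\|Tu_n - g\|^2$, and substitution produces
\begin{equation*}
\|\lambda_{n+1}\|^2 \le \|\lambda_n\|^2 + 2\eta_\lambda L_n(u_n) + \eta_\lambda(\eta_\lambda - 2\beta)\|Tu_n - g\|^2.
\end{equation*}
Under $\beta \ge \eta_\lambda/2$ the last term is non-positive, leaving the clean recurrence $\|\lambda_{n+1}\|^2 \le \|\lambda_n\|^2 + 2\eta_\lambda L_n(u_n)$.

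Telescoping the recurrence gives $\|\lambda_n\|^2 \le \|\lambda_0\|^2 + 2\eta_\lambda \sum_{k<n} L_k(u_k)$, so it suffices to show that $\sum_k L_k(u_k) < \infty$. Here I would invoke Proposition \ref{loss_zero}: for each $n$ pick a comparison network $\hat{u}_n \in A_n$ with $L_n(\hat{u}_n) < \epsilon_n$ for any prescribed target, and combine with the quasi-minimizer inequality $L_n(u_n) \le L_n(\hat{u}_n) + \delta_n$. Choosing $\epsilon_n$ and the tolerance $\delta_n$ summable (for instance $O(1/n^2)$) then produces the desired uniform bound on $\|\lambda_n\|_{L^2(\partial\Omega)}$.

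The main obstacle is that $L_n(\hat{u}_n)$ itself depends on $\lambda_n$ through the inner-product term $\langle \lambda_n, T\hat{u}_n - g\rangle$, so the width needed to realize $L_n(\hat{u}_n) < \epsilon_n$ a priori depends on $\|\lambda_n\|$. I would resolve this circularity by a bootstrap argument: set $M_n := \sup_{k \le n} \|\lambda_k\|$, and use the higher-order Sobolev-closeness provided by Theorem \ref{universal} to construct $\hat{u}_n$ approximating the true solution $u^*$ at a rate $a_n$ fast enough that both $a_n^2$ and $M_n a_n$ are summable. Inserting the resulting decomposition of $L_n(\hat{u}_n)$ into the telescoping inequality yields a discrete-Gronwall-type relation $y_{n+1} \le y_n + A_n + B_n \sqrt{y_n}$ with $\sum A_n, \sum B_n < \infty$; a standard argument shows that such a sequence remains bounded, closing the loop and delivering the claimed uniform bound on $\|\lambda_n\|_{L^2(\partial\Omega)}$.
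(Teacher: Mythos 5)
Your proposal follows the same skeleton as the paper's proof: both expand $\|\lambda_{n+1}\|_{L^2(\partial\Omega)}^2$ via the gradient-ascent update rule, bound the cross term through $\langle\lambda_n, Tu_n-g\rangle \le L_n(u_n) - \beta\|Tu_n-g\|_{L^2(\partial\Omega)}^2$ using nonnegativity of the residual, and then invoke quasi-minimality together with the universal approximation theorem to control $L_n(u_n)$. The divergence is only in the final step. The paper retains the term $\eta_\lambda(\eta_\lambda-2\beta)\|Tu_n-g\|^2$ and reads ``sufficiently large $\beta$'' as the per-step inequality $\beta \ge \eta_\lambda/2 + \delta_n/\|Tu_n-g\|^2$ for all $n$, from which $\|\lambda_n\|$ is monotone decreasing and hence bounded by $\|\lambda_0\|$. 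You instead discard that nonpositive term already at the weaker threshold $\beta\ge\eta_\lambda/2$, telescope, and reduce the claim to summability of $L_k(u_k)$; this needs the extra hypothesis that the tolerances $\delta_n$ and the achievable approximation rates $a_n$ are summable, which the paper's statement does not impose and which Theorem~\ref{universal} as quoted does not supply quantitatively. What your version buys is a cleaner $\beta$-condition not involving the uncontrolled ratios $\delta_n/\|Tu_n-g\|^2$, and an honest treatment of a circularity the paper elides: the comparison value $L_n(\hat{u}_n)$ still contains $\langle\lambda_n, T\hat{u}_n-g\rangle$, so it is really $O(\|\lambda_n\|\varepsilon+\varepsilon^2)$ rather than an unconditional $O(\varepsilon)$, and your bootstrap into the discrete-Gronwall recurrence $y_{n+1}\le y_n + A_n + B_n\sqrt{y_n}$ with $\sum A_n,\sum B_n<\infty$ is a legitimate way to close that loop. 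In short: same method, a different and somewhat more careful finishing argument, at the cost of an additional summability assumption on $\delta_n$ and the approximation rate.
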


\begin{remark} \label{beta_reamrk}
    In convex optimization literature, a large value of $\beta$ stabilizes the augmented Lagrangian method. Since the uniform boundedness of $\{\|\lambda_n\|_{L^2(\partial\Omega)}\}$ is essential to guarantee the $\Gamma'$-convergence, a large $\beta$ stabilizes the proposed AL-PINNs as in convex optimization. We will numerically confirm the uniform boundedness of $\{\|\lambda_n\|_{L^2(\partial\Omega)}\}$ in Section \ref{sec4.1}.
\end{remark}

We are now ready to prove the convergence of dominant quasi-minimizers of the proposed loss functionals $\{L_n\}_{n\in\mathbb{N}}$. The following theorem states the convergence by proving the $\Gamma'$-convergence of the proposed loss functionals $\{L_n\}_{n\in\mathbb{N}}$ to $L$. 

\begin{restatable}{theorem}{convPDE}\label{conv_PDE}
    Consider a sequence of quasi-minimizers $\left\{u_{n} \right\}_{n\in \mathbb{N}}$ with respect to $\left\{L_{n}\right\}_{n \in \mathbb{N}}$. Then, $u_n \rightarrow u$ in the weak topology, where $u$ is the minimizer of functional $L$ (i.e., $u$ is the solution of equation \eqref{gov}). This holds for the Helmholtz \eqref{Helmholtz_eq}, viscous Burgers \eqref{Burgers_eq}, and Klein--Gordon equations \eqref{Klein_eq}.
\end{restatable}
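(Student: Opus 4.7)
The strategy is to invoke Theorem~\ref{Conv}, which reduces the claim to proving that $\{L_n\}$ is equicoercive and $\Gamma'$-converges to $L$, with $L$ having a unique minimizer. Equicoercivity is supplied by Theorem~\ref{equi}; uniqueness of the minimizer follows from the well-posedness of the three benchmark equations, for each of which $L$ attains its value $0$ exactly at the classical PDE solution $u^{\star}$. What remains is to verify the three conditions in the definition of $\Gamma'$-convergence.

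For the liminf inequality at $u \in L_A$ and $u_n \rightharpoonup u$ (assuming $u_n \in A_n$, else trivial), the residual $\|Nu_n - f\|_{L^2}^2$ is weakly lower semi-continuous in the linear Helmholtz case; for Burgers and Klein--Gordon I would pass to a subsequence and use Rellich--Kondrachov to upgrade weak Sobolev convergence to strong $L^p$ convergence, which handles the nonlinearities $u u_x$ and $\sin(u)$ in the limit. The penalty term is non-negative; the multiplier term vanishes because Lemma~\ref{Bound} gives uniformly bounded $\|\lambda_n\|_{L^2(\partial\Omega)}$ while compactness of the trace yields $Tu_n \to Tu = g$ strongly in $L^2(\partial\Omega)$, so Cauchy--Schwarz closes the step. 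For the recovery sequence at $u \in L_A$, Theorem~\ref{universal} together with the argument of Proposition~\ref{loss_zero} produces $u_n \in A_n$ approximating $u$ in the required $C^m$ sense; direct estimates give $L_n(u_n) \to L(u)$ and a diagonal extraction finishes. For $u \notin L_A$, any strong approximant $u_n \to u$ in $A_n$ keeps the penalty term bounded below by a positive constant, which for $\beta$ sufficiently large forces $L_n(u_n) \to \infty = L(u)$.

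For admissibility, let $\{u_n\}$ be a sequence of quasi-minimizers with limit point $u$. Proposition~\ref{loss_zero} yields $\inf_v L_n(v) \to 0$, hence $L_n(u_n) \to 0$. Applying Young's inequality to the cross term gives
\begin{equation*}
\|Nu_n - f\|_{L^2(\Omega)}^2 + \tfrac{\beta}{2}\|Tu_n - g\|_{L^2(\partial\Omega)}^2 \leq L_n(u_n) + \tfrac{1}{2\beta}\|\lambda_n\|_{L^2(\partial\Omega)}^2.
\end{equation*}
Combining Lemma~\ref{Bound} with the choice of $\beta$ large (Remark~\ref{beta_reamrk}) drives the left-hand side to arbitrarily small size; passing to the weak limit (using trace compactness and, for nonlinear $N$, compact Sobolev embeddings) yields $Tu = g$ and $Nu = f$, placing $u \in L_A$ as a minimizer of $L$. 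Theorem~\ref{Conv} then delivers $u_n \rightharpoonup u^{\star}$.

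The main obstacle, in my view, is passing weak convergence through the nonlinear operator $N$ in the Burgers and Klein--Gordon cases; the proof must supply strong subsequential convergence in a fixed Sobolev space via compact embeddings, which leans on Theorem~\ref{equi} to keep the quasi-minimizers in a common Sobolev space independently of $n$. A secondary delicate point is calibrating $\beta$ large enough, together with Lemma~\ref{Bound}, to absorb the cross multiplier term without destroying the residual control---precisely the mechanism signaled by Remark~\ref{beta_reamrk}.
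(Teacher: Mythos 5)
Your reduction to Theorem~\ref{Conv} plus $\Gamma'$-convergence plus equicoercivity is exactly the paper's plan, and your treatment of the liminf inequality and the recovery sequence matches the paper's in spirit --- indeed, your use of trace compactness to get $Tu_n \to g$ strongly on $\partial\Omega$ is arguably cleaner than the paper's appeal to the convexity of $|x|^2$, which leaves the negative multiplier term less explicitly controlled.

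The genuine gap is in your admissibility step. Your Young-inequality estimate only produces
\begin{equation*}
\|Nu_n - f\|_{L^2(\Omega)}^2 + \tfrac{\beta}{2}\|Tu_n - g\|_{L^2(\partial\Omega)}^2 \leq L_n(u_n) + \tfrac{1}{2\beta}\|\lambda_n\|_{L^2(\partial\Omega)}^2,
\end{equation*}
and since $L_n(u_n)\to 0$ but $\|\lambda_n\|$ is merely \emph{bounded} (Lemma~\ref{Bound}), the right-hand side does not tend to zero --- it tends to $\frac{1}{2\beta}\limsup\|\lambda_n\|^2$, which is a finite constant, not something you can drive small. Your proposal to rescue this by ``choosing $\beta$ large'' does not work: $\beta$ is a fixed parameter of the loss $L_n$, not a free sequence, and in any case the constant in Lemma~\ref{Bound} may itself depend on $\beta$, so you cannot legitimately claim $\frac{1}{2\beta}\|\lambda_n\|^2 \to 0$ in any limit consistent with the theorem's hypotheses. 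The paper's admissibility argument takes a genuinely different route that you missed: assuming $Tu \ne g$, it forms the telescoping partial sums $\sum_{n=1}^m(Tu_n - g)$ and observes --- via the multiplier update rule $\lambda_{m+1} = \lambda_1 + \eta_\lambda \sum_{n=1}^m(Tu_n - g)$ read together with Lemma~\ref{Bound} --- that these partial sums are uniformly bounded in $L^2(\partial\Omega)$. On the other hand, weak convergence gives $\langle Tu_n - g, Tu - g\rangle \to \|Tu - g\|^2 > 0$, so the inner product of the same partial sum against the fixed vector $Tu - g$ diverges; Cauchy--Schwarz then produces a contradiction. This is the mechanism you need: boundedness of the cumulative sum, not smallness of each term.

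One minor slip: the Klein--Gordon nonlinearity in the paper is polynomial ($u^3$), not sinusoidal ($\sin u$); your sketch of handling it via strong $L^p$ convergence through Rellich--Kondrachov, with the uniform Sobolev bounds from Theorem~\ref{equi}, is nonetheless the right idea for both it and the Burgers convection term.
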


\begin{proof}
    We begin the proof by showing that $u \in {L_A}$. Suppose that $u \notin {L_A}$. That is, $u$ is not a function satisfying $T(u)=g$ in $L^2(\partial \Omega)$. The trace operator $T$, which is a bounded and linear operator, admits an adjoint operator $T^{*} : H^1(\Omega) \to L^2(\partial \Omega)$. This implies that $T(u_n)\rightarrow T(u)\ne g$ weakly whenever $u_n \rightarrow u$ in the weak sense, from the below observation.
    \begin{align*}
        \langle T(u_n)-T(u), v\rangle_{L^2(\partial \Omega)} &= \langle T(u_n - u), v\rangle_{L^2(\partial \Omega)}\\&=\langle u_n - u, T^*(v)\rangle_{H_{0}^{1}(\Omega)}.
    \end{align*}
    Consequently, we obtain the following contradiction by the Cauchy--Schwarz inequality with Lemma \ref{Bound}.
    \begin{align*}
        \infty &= \liminf_{m \rightarrow \infty} \sum_{n=1}^{m}\langle T(u_n)-g, T(u)-g\rangle_{L^2(\partial \Omega)}\\ & = \liminf_{m \rightarrow \infty} \langle \sum_{n=1}^{m} T(u_n)-mg, T(u)-g\rangle_{L^2(\partial \Omega)} \\
        & \le \limsup_{m\rightarrow \infty} \|\sum_{n=1}^{m} T(u_n)-mg\|_{L^{2}(\partial \Omega)}\|T(u)-g\|_{L^2(\partial \Omega)}<\infty.
    \end{align*}
    Now we assume that $u \in L_A$, for a sequence $\{u_n\}_{n\in\mathbb{N}}$ with $u_n\rightarrow u$. The convexity of the mapping $x\mapsto |x|^2$ yields the inequality
    \begin {align*}
        \| T(u_n) -g\|^2_{L^2(\Omega)} \ge &\| T(u)-g\|^2_{L^2{(\Omega})}+ \\&2 \langle  T(u)-g ,  T(u) - T(u_n) \rangle _{L^2 (\Omega)}, 
    \end{align*} 
    where $\langle  T(u)-g, T(u)-T(u_n) \rangle _{L^2(\Omega)}\rightarrow 0$ by the weak convergence. By the Cauchy--Schwarz inequality, 
    \begin{equation*}
        \langle \lambda_n, T(u_n)-g \rangle_{L^2 (\partial\Omega)} \ge -||\lambda_n||_{L^2(\partial \Omega)} \cdot ||T(u_n)-g||_{L^2{(\partial\Omega})},
    \end{equation*} and therefore, we obtain the following inequality by the boundedness of $\|\lambda_n\|_{L^2(\partial \Omega)}$ with the lower semi-continuity of $L^2(\partial \Omega)$ norm.
    \begin{align*}
        L(u)= &\|Nu - f\|_{L^2(\Omega)} \\ \le&\liminf_{n\rightarrow \infty} (\|Nu_n -f\|_{L^2(\Omega)}^2 +\beta\|T(u_n)-g\|^2_{L^2(\partial\Omega)}+\\&\langle \lambda_n ,  Tu_n-g\rangle_{L^2{(\partial \Omega)}})\\=&\liminf_{n\rightarrow \infty} L_n (u_n)
    \end{align*}
    Next, we prove that a recovery sequence exists. For a given sequence $\left\{\lambda_n\right\}_{n\in \mathbb{N}}$ and $u\in L_A$, a sequence $\left\{u_n\right\}_{n\in N}$ that satisfies the following inequality holds in $A_n$.
    \begin{align}\nonumber
        \limsup_{n\rightarrow \infty} L_n (u_n )\le L(u).
    \end{align}
    If $u\notin L_A$, consider any sequence $\left\{u_n \right\}_{n\in \mathbb{N}}$ in $A_n$. Since $F(u)=\infty$, it is obvious that the above inequality holds. Conversely, suppose that $u\in L_A$. By Theorem \ref{universal}, there exists a sequence $\left\{u_n \right\}_{n\in \mathbb{N}}$ in $A_n$ such that $\|Nu-Nu_n\|_{V (\Omega)}<{1}/{n}$. Then $\left\{u_n \right\}_{n \in \mathbb{N}}$ is a desired sequence since $T(u_n)\rightarrow T(u)$ in $L^2(\partial \Omega)$ by the continuity of the trace operator so that
    \begin{align*}
        \limsup_{n\rightarrow\infty} L_n(u_n)=&\limsup_{n\rightarrow \infty} (\|Nu_n -f\|_{L^2(\Omega)}^2 +\beta\|T(u_n)-g\|^2_{L^2(\partial\Omega)}+\\& \langle \lambda_n ,  Tu_n-g\rangle_{L^2{(\partial \Omega)}})\\\leq&\limsup_{n\rightarrow\infty}(||Nu -f||_{L^2(\Omega)}^2+\frac{1}{n})=L(u).
    \end{align*}
    In conclusion, $u_{n}\rightarrow u$ weakly in $H^{1}(\Omega)$ by Theorem \ref{Conv}. By Rellich-embedding theorem from $W^{1,2}$ to $L^2$, the embedding of $H^1(\Omega)$ into $L^2(\Omega)$ is a compact operator so that it maps a weakly convergent sequence to a strongly convergent sequence. Therefore, $u_{n}\rightarrow u$ in $L^2(\Omega)$ strongly. 
\end{proof}

\begin{remark}
    Although we prove the convergence for three benchmark equations, we expect that some analogous results can be derived for a variety of PDEs. It would be sufficient to verify that the proposed sequence of loss functional is equi-coercive and $\Gamma'-$ convergent to the objective functional. Further discussion on linear elliptic equations is briefly introduced in \ref{appendix_proof}.
\end{remark}


\section{Numerical Experiments}\label{sec4} 
We detail our experimental settings and results that demonstrate the superior performance of the proposed AL-PINNs from two different perspectives. First, we compare the approximation error of the AL-PINNs with that of other constrained optimization strategies, such as the penalty and Lagrange multiplier methods. In this comparative analysis, we observe that both the penalty and multiplier terms are necessary for obtaining an accurate approximation. Next, we compare the approximation error of the proposed method with that of several adaptive loss-balancing algorithms. In this experiment, we provide evidence that AL-PINNs approximate the solution accurately, whereas the existing adaptive loss-balancing algorithms fail to reduce the boundary error sufficiently.

Throughout this section, we denote the number of points for the residual loss by $N_r$, the number of points for the boundary conditions by $N_B$, and the number of points for the initial conditions by $N_I$. We employ the hyperbolic tangent function as an activation function, and the ADAM optimizer proposed in \cite{kingma2014adam} for training. 

\subsection{AL-PINNs as a deep constrained optimization method for PINNs}\label{sec4.1}
\textbf{Experimental Setup.} In this subsection, we present several experimental results that demonstrate the superior performance of AL-PINNs compared to both the penalty and standard Lagrange multiplier methods (see Section \ref{sec2} for algorithms). We measure the relative $L^2$ error of the neural network solution $u_{nn}$, given by $\|u-u_{nn}\|_{L^2}/\|u\|_{L^2}$, for each algorithm, using the Helmholtz equation as a benchmark PDE. The equation is as follows: \begin{equation}
\begin{aligned}\nonumber
    &\Delta u + u = f(x,y), &&\text{ for } (x, y) \in \Omega, \\
    &u(x,y) = 0, &&\text{ for } (x,y) \in \partial\Omega, 
\end{aligned}
\end{equation}
where $\Omega = [-1,1]\times[-1,1]$. If we take 
\begin{align}
    f(x,y) &= -\pi^2 \sin(\pi x)\sin(4\pi y) \nonumber \\
    &- (4\pi)^2 \sin(\pi x)\sin(4\pi y) + \sin(\pi x)\sin(4\pi y), \nonumber
\end{align}
then it can readily be shown that $u(x,y)=\sin(\pi x)\sin(4\pi y)$ is an analytic solution. The loss functions are given as:
\begin{equation}
\begin{aligned}
    \mathcal{L}_{\beta_n}^{(P)} &\approx \frac{1}{N_r}\sum_{i=1}^{N_r}(\Delta u_{nn}(x_i) + u(x_i) - f(x_i))^2  \nonumber\\
        &+ \frac{\beta_n}{N_B}\sum_{j=1}^{N_b} u_{nn}^2(x_j),\nonumber\\
        \mathcal{L}_{\lambda_n}^{(L)} &\approx \frac{1}{N_r}\sum_{i=1}^{N_r}(\Delta u_{nn}(x_i) + u(x_i) - f(x_i))^2  \nonumber\\
        &+ \frac{1}{N_B} \sum_{j=1}^{N_B}\lambda_n(x_j)u_{nn}(x_j),\nonumber\\
        \mathcal{L}_{\lambda_n, \beta}^{(A)} &\approx \frac{1}{N_r}\sum_{i=1}^{N_r}(\Delta u_{nn}(x_i) + u(x_i) - f(x_i))^2  \\
        &+ \frac{\beta}{N_B}\sum_{j=1}^{N_b} u_{nn}^2(x_j) + \frac{1}{N_B} \sum_{j=1}^{N_B}\lambda_n(x_j)u_{nn}(x_j), \nonumber
\end{aligned}
\end{equation}
where $\mathcal{L}_{\beta_n}^{(P)}$ denotes a loss function for the penalty method, $ \mathcal{L}_{\beta_n}^{(L)}$ denotes that of the Lagrange multiplier method, $ \mathcal{L}_{\beta_n}^{(A)}$ denotes that of AL-PINNs, $\beta_n \to \infty$, and $\beta$ is a predefined constant. In this subsection, the layers of the neural networks consist of neurons 2-256-256-1.

\textbf{Results.} Figure \ref{loss_error} shows the trajectories of the total loss given in \eqref{pinn_loss} and the relative $L^2$ error during training with the penalty method with a linearly increasing sequence of $\{\beta_n\}_{n=1}^{50000}$, the Lagrange multiplier method, and our AL-PINNs with $\beta=1$. We observe that a neural network hardly learns the solution of the Helmholtz equation when training with the Lagrange multiplier method. This finding coincides with the theoretical observation made in Remark \ref{beta_reamrk} that there exists a lower bound for $\beta$ to achieve the boundedness of the sequence $\{\lambda_n\}$. We also observe that both the total loss and the relative error rapidly converge to relatively high values as $\beta_n$ increases. This phenomenon is somewhat explained by the fact that a large value of $\beta$ causes the dominance of the boundary condition over the residual loss, which makes the training unstable. Finally, it is evident that our AL-PINNs yield a significantly smaller relative error (approximately 40 times smaller) compared to the penalty method. 

\begin{figure}[H]
\begin{center}
\centerline{\includegraphics[width=1\columnwidth,height=0.4\columnwidth,draft=False]{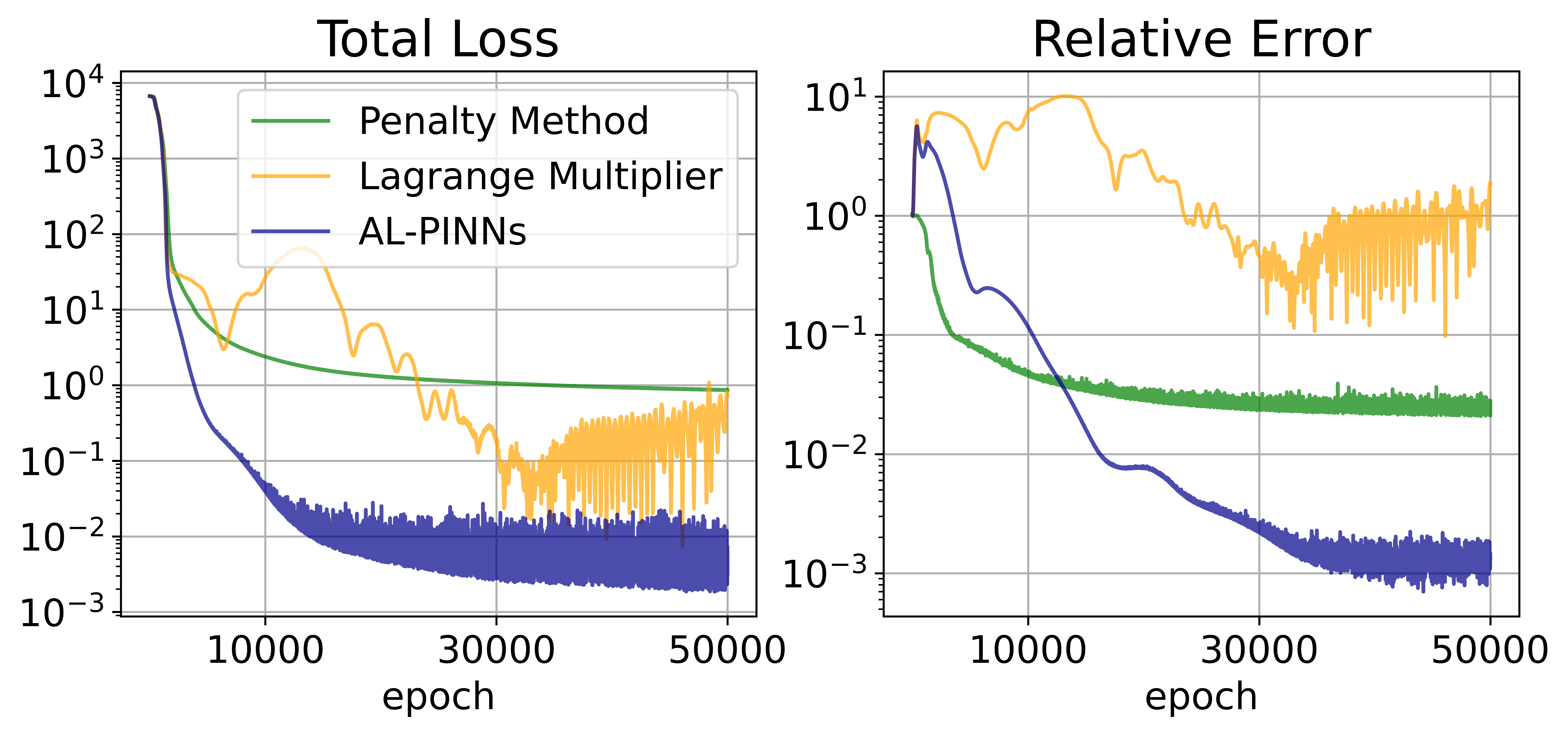}}
\caption{Left: Trajectories of the total loss given in \eqref{pinn_loss} for the penalty method (green), Lagrange multiplier method (orange), and the proposed AL-PINNs (blue). Right: Trajectories of the relative $L^2$ error for the same algorithms. Here, we set $\beta_n = 10n$ for the penalty method, $\eta_\lambda = 10^{-4}$ for the Lagrange multiplier method, and $\eta_\lambda = 10^{-4}, \beta=1$ for the proposed AL-PINNs.}
\label{loss_error}
\end{center}
\vskip -0.2in
\end{figure}

Figure \ref{err_heatmap} shows the analytic solution and pointwise absolute errors for the training algorithms. Due to the training instability of the penalty method, we set $\beta_n \equiv 1$ as in the original work by \cite{raissi2019physics}. We observe that most errors arise from the near boundary points in both the penalty and Lagrange multiplier methods. This indicates that the boundary condition is indeed a critical factor for a relatively high error level. As it exhibits negligible boundary errors, we can conclude that the proposed loss function given in \eqref{loss_function} is the best relaxation method of the boundary condition into the loss function.

\begin{figure}[H]
\begin{center}
\centerline{\includegraphics[width=\columnwidth,draft=False]{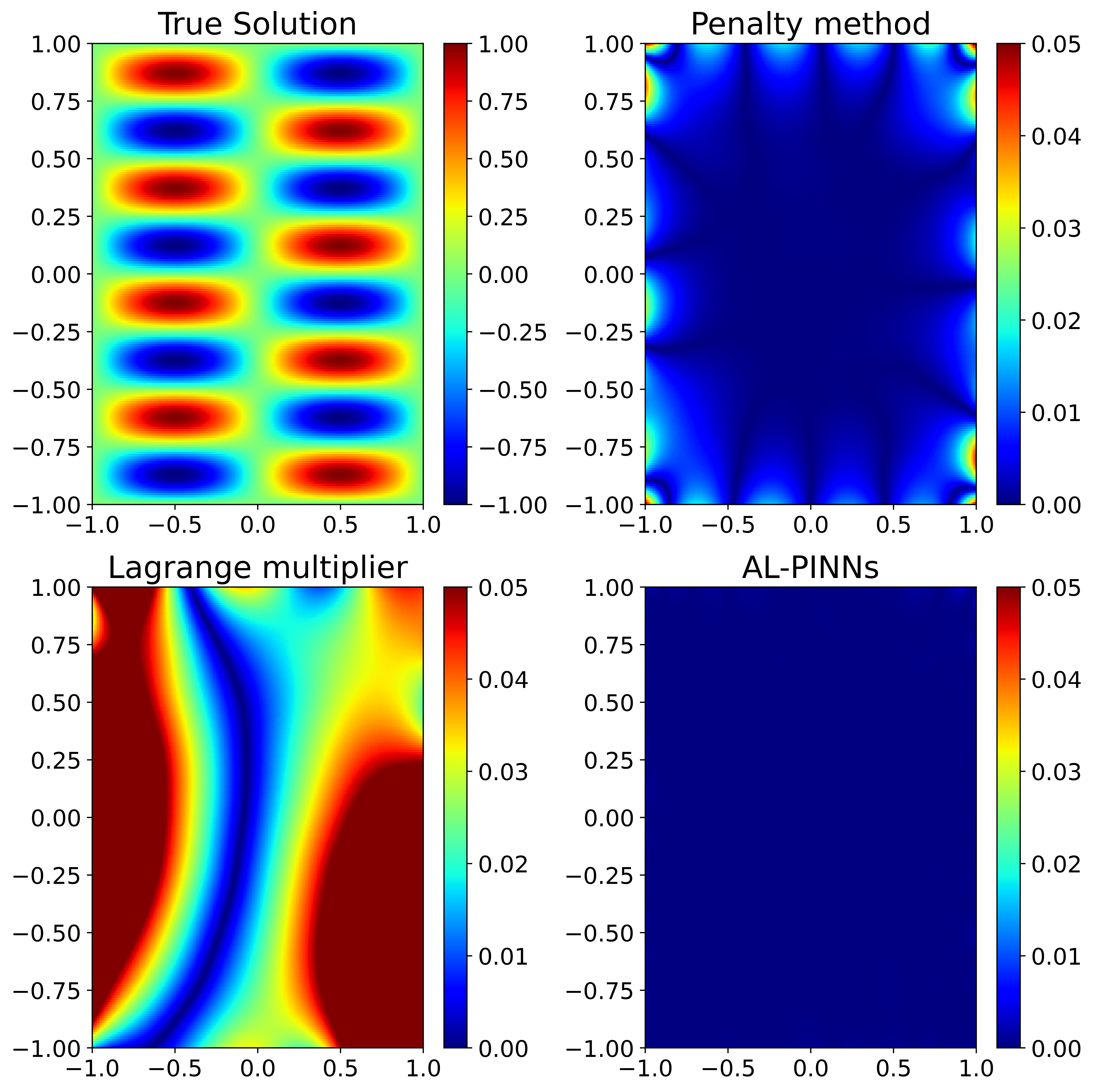}}
\caption{Top left: True solution of the Helmholtz equation. Top right: Pointwise absolute error for the penalty method with $\beta_n \equiv 1$. Bottom left: Pointwise absolute error for the Lagrange multiplier method with $\eta_{\lambda}=10^{-4}$. Bottom right: Pointwise absolute error for the proposed AL-PINNs with $\eta_{\lambda} = 10^{-4}, \beta=1$. Errors are computed using the best model over 50,000 training epochs.}
\label{err_heatmap}
\end{center}
\vskip -0.2in
\end{figure}

$\beta$ plays a significant role in training a neural network with both the penalty method and our AL-PINNs. In Figure \ref{beta_comparison}, we provide the relative $L^2$ errors for the penalty method with a constant penalty parameter and our AL-PINNs for different values of $\beta$. The left panel of Figure \ref{beta_comparison} shows the relative errors for different values of $\beta$ after 50,000 training epochs. One can see that the relative errors highly depend on the values of $\beta$ in both cases, yet the proposed AL-PINNs uniformly outperform (up to 70 times) the penalty method. The right panel of Figure \ref{beta_comparison} supports the uniform boundedness argument of $\{\|\lambda_n\|_{L^2(\partial\Omega)}\}$ stated in Lemma \ref{Bound} throughout the training. As we can see in Figure \ref{beta_comparison}, the $L^2$ norm of $\lambda_n$ converges to a certain value for all $\beta$ in $\{1,10,100,1000\}$ even with a relatively high value of $\eta_{\lambda}=1$. Thus, we can say that the sequence $\{\lambda_n\}$ has an upper bound, and an important condition to guarantee the convergence is satisfied.

\begin{figure}[H]
\begin{center}
\centerline{\includegraphics[width=\columnwidth,height=0.4\columnwidth,draft=False]{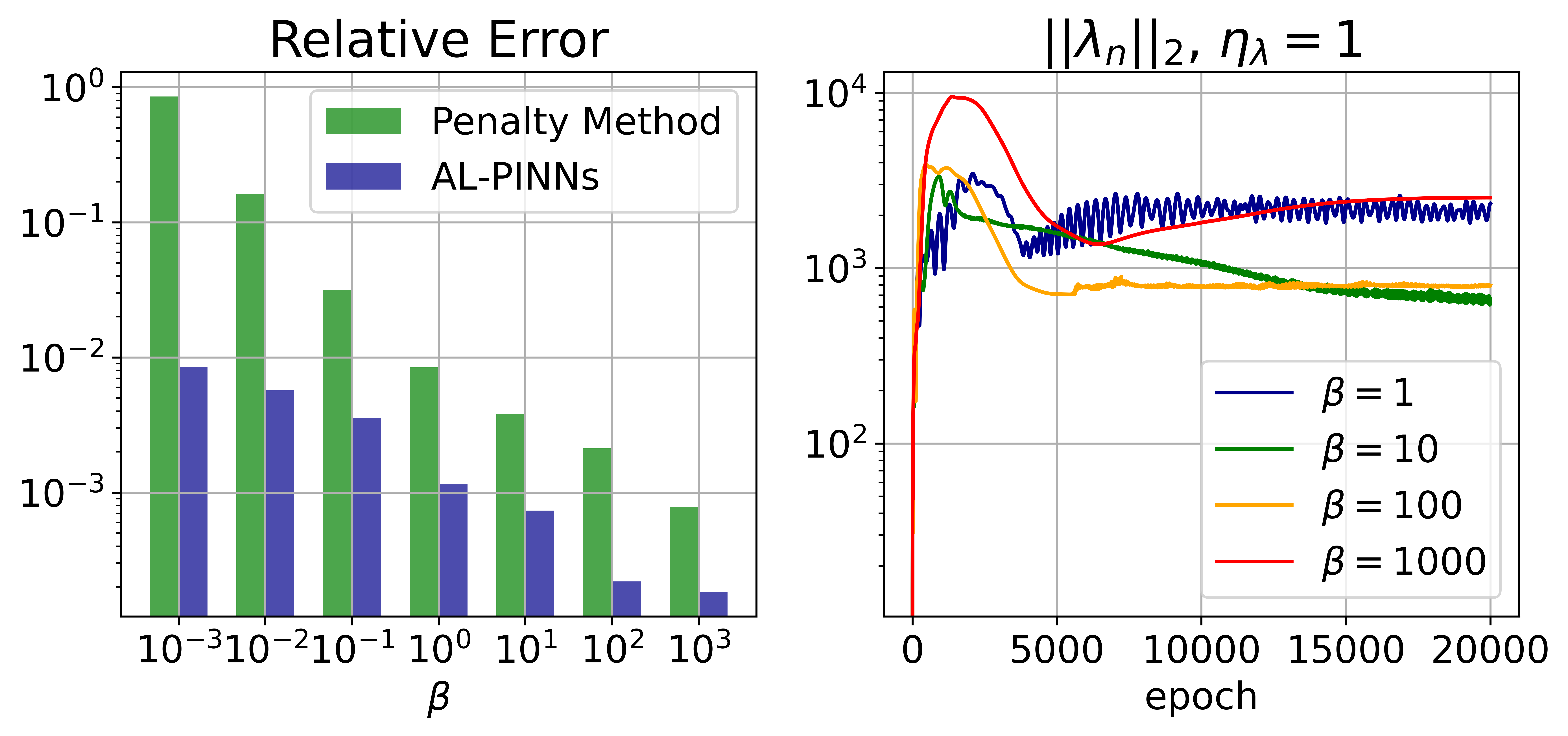}}
\caption{Left: Relative $L^2$ errors for the penalty method with constant $\beta_n \equiv \beta$ and the proposed AL-PINNs with the same $\beta$. Right: The $L^2$ norm of $\lambda_n$ in training epoch for different $\beta$'s.}
\label{beta_comparison}
\end{center}
\vskip -0.2in
\end{figure}

\textbf{Lower Bound for $\beta$.} Lemma \ref{Bound} states that $\|\lambda_n\|_{L^2}$ is uniformly bounded if $\beta$ is large enough. Here, we empirically investigate the question "which $\beta$ is large enough?". Figure \ref{beta_lowerbound} shows the value of $\|\lambda_n\|_{L^2}$ during training for three benchmark equations in section \ref{appendix_pde}. Interestingly, we observed that the values of $\|\lambda_n\|_{L^2}$ seem to be uniformly bounded, in all three cases, although $\beta$ is small enough. This implies that our assumption in Lemma \ref{Bound} saying "$\beta$ is large enough" is only a sufficient condition and there is room for improvement. 

\begin{figure}[H]
\begin{center}
\centerline{\includegraphics[width=\columnwidth,height=0.3\columnwidth,draft=False]{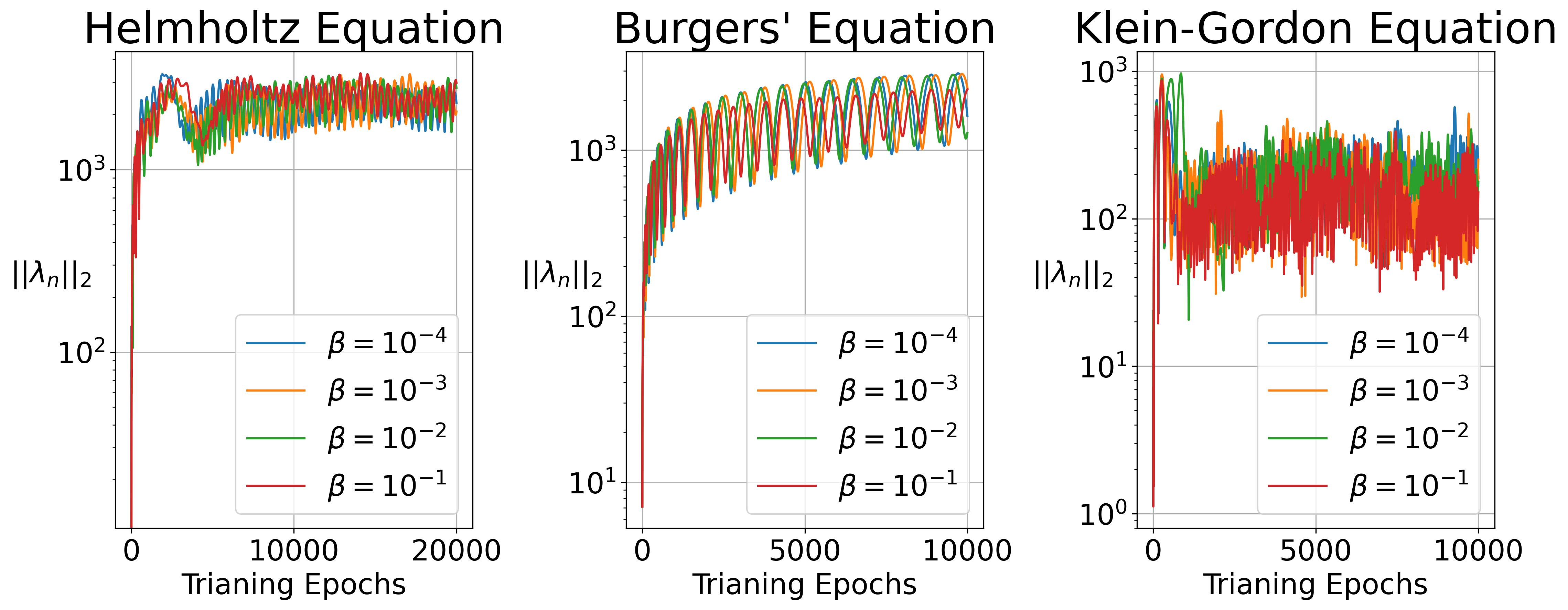}}
\caption{$L^2$ norms of $\lambda_n$ in training epochs for three benchmark equations.}
\label{beta_lowerbound}
\end{center}
\vskip -0.2in
\end{figure}

\subsection{AL-PINNs as an adaptive loss-balancing Algorithm for PINNs}\label{sec4.2}

\begin{table*}[ht]
\centering
\caption{Average relative $L^2$ errors and standard deviations over 10 different trials for the Helmholtz equation.}
\label{table_helmholtz}
\vskip 0.1in
\resizebox{2\columnwidth}{!}{
\begin{tabular}{|c|ccccc|}
\hline
\multirow{2}{*}{} & \multicolumn{5}{c|}{Helmholtz equation}                                                                                                                                                           \\ \cline{2-6} 
                  & \multicolumn{1}{c|}{PINNs}                    & \multicolumn{1}{c|}{SA}                      & \multicolumn{1}{c|}{LRA}                     & \multicolumn{1}{c|}{NTK}   & AL-PINNs                      \\ \hline
M1                & \multicolumn{1}{c|}{5.12e-01 $\pm$ 2.08e-01} & \multicolumn{1}{c|}{2.32e-01 $\pm$ 5.13e-02} & \multicolumn{1}{c|}{2.85e-01 $\pm$ 8.50e-02} & \multicolumn{1}{c|}{3.80e-01 $\pm$ 1.09e-01} & \textbf{6.44e-03 $\pm$ 3.65e-03} \\ \hline
M2                & \multicolumn{1}{c|}{2.11e-02 $\pm$ 4.82e-03} & \multicolumn{1}{c|}{2.40e-02 $\pm$ 2.26e-03} & \multicolumn{1}{c|}{1.06e-02 $\pm$ 1.91e-03} & \multicolumn{1}{c|}{2.18e-02 $\pm$ 6.05e-03} & \textbf{6.00e-04 $\pm$ 1.13e-04} \\ \hline
M3                & \multicolumn{1}{c|}{1.14e-01 $\pm$ 3.28e-02} & \multicolumn{1}{c|}{9.65e-02 $\pm$ 2.27e-02} & \multicolumn{1}{c|}{3.61e-02 $\pm$ 9.64e-03} & \multicolumn{1}{c|}{1.23e-01 $\pm$ 1.88e-02} & \textbf{4.90e-04 $\pm$ 6.47e-05} \\ \hline
M4                & \multicolumn{1}{c|}{1.93e-02 $\pm$ 4.78e-03} & \multicolumn{1}{c|}{2.43e-02 $\pm$ 3.18e-03} & \multicolumn{1}{c|}{8.89e-03 $\pm$ 1.05e-03} & \multicolumn{1}{c|}{2.25e-02 $\pm$ 4.16e-03} & \textbf{7.46e-04 $\pm$ 1.10e-04} \\ \hline
\end{tabular}}

\centering
\caption{Average relative $L^2$ errors and standard deviations over 10 different trials for the Burgers equation.}
\label{table_Burgers}
\vskip 0.1in
\resizebox{2\columnwidth}{!}{
\begin{tabular}{|c|ccccc|}
\hline
\multirow{2}{*}{} & \multicolumn{5}{c|}{Burgers' equation}                                                                                                                                                                               \\ \cline{2-6} 
                  & \multicolumn{1}{c|}{PINNs}                    & \multicolumn{1}{c|}{SA}                      & \multicolumn{1}{c|}{LRA}                     & \multicolumn{1}{c|}{NTK}                     & AL-PINNs                  \\ \hline
M1                & \multicolumn{1}{c|}{8.92e-02 $\pm$ 2.43e-02} & \multicolumn{1}{c|}{9.80e-02 $\pm$ 5.54e-02} & \multicolumn{1}{c|}{1.48e-01 $\pm$ 4.56e-02} & \multicolumn{1}{c|}{1.22e-01 $\pm$ 3.58e-02} & \textbf{5.45e-02 $\pm$ 9.41e-03} \\ \hline
M2                & \multicolumn{1}{c|}{7.29e-02 $\pm$ 7.14e-03} & \multicolumn{1}{c|}{8.40e-02 $\pm$ 8.48e-03} & \multicolumn{1}{c|}{6.64e-02 $\pm$ 5.86e-03} & \multicolumn{1}{c|}{6.61e-02 $\pm$ 8.13e-03} & \textbf{5.91e-02 $\pm$ 6.37e-03} \\ \hline
M3                & \multicolumn{1}{c|}{4.85e-02 $\pm$ 5.95e-03} & \multicolumn{1}{c|}{4.71e-02 $\pm$ 1.82e-02} & \multicolumn{1}{c|}{4.38e-02 $\pm$ 6.39e-03} & \multicolumn{1}{c|}{4.52e-02 $\pm$ 6.50e-03} & \textbf{4.10e-02 $\pm$ 7.32e-03} \\ \hline
M4                & \multicolumn{1}{c|}{1.21e-01 $\pm$ 3.26e-02} & \multicolumn{1}{c|}{1.06e-01 $\pm$ 2.16e-02} & \multicolumn{1}{c|}{6.34e-02 $\pm$ 6.39e-03} & \multicolumn{1}{c|}{7.40e-02 $\pm$ 1.21e-02} & \textbf{5.89e-02 $\pm$ 4.22e-03} \\ \hline
\end{tabular}}

\centering
\caption{Average relative $L^2$ errors and standard deviations over 10 different trials for the Klein--Gordon equation.}
\label{table_KG}
\vskip 0.1in
\resizebox{2\columnwidth}{!}{
\begin{tabular}{|c|ccccc|}
\hline
\multirow{2}{*}{} & \multicolumn{5}{c|}{Klein--Gordon equation}                                                                                                                                                       \\ \cline{2-6} 
                  & \multicolumn{1}{c|}{PINNs}                    & \multicolumn{1}{c|}{SA}                      & \multicolumn{1}{c|}{LRA}                     & \multicolumn{1}{c|}{NTK}   & AL-PINNs                      \\ \hline
M1                & \multicolumn{1}{c|}{3.86e-01 $\pm$ 1.21e-01} & \multicolumn{1}{c|}{2.45e-01 $\pm$ 1.23e-01} & \multicolumn{1}{c|}{2.39e-01 $\pm$ 4.62e-02} & \multicolumn{1}{c|}{8.11e-01 $\pm$ 2.77e-01} & \textbf{1.42e-02 $\pm$ 7.34e-03} \\ \hline
M2                & \multicolumn{1}{c|}{5.25e-02 $\pm$ 1.42e-02} & \multicolumn{1}{c|}{4.32e-02 $\pm$ 1.21e-02} & \multicolumn{1}{c|}{2.22e-02 $\pm$ 1.30e-02} & \multicolumn{1}{c|}{1.33e-02 $\pm$ 6.80e-03} & \textbf{5.73e-03 $\pm$ 1.45e-03} \\ \hline
M3                & \multicolumn{1}{c|}{1.10e-01 $\pm$ 4.93e-02} & \multicolumn{1}{c|}{1.40e-01 $\pm$ 4.77e-02} & \multicolumn{1}{c|}{7.19e-02 $\pm$ 2.56e-02} & \multicolumn{1}{c|}{3.01e-02 $\pm$ 1.17e-02} & \textbf{7.35e-03 $\pm$ 1.98e-03} \\ \hline
M4                & \multicolumn{1}{c|}{5.74e-02 $\pm$ 1.76e-02} & \multicolumn{1}{c|}{3.65e-02 $\pm$ 1.09e-02} & \multicolumn{1}{c|}{2.35e-02 $\pm$ 1.26e-02} & \multicolumn{1}{c|}{1.24e-02 $\pm$ 3.88e-03} & \textbf{5.28e-03 $\pm$ 1.37e-03} \\ \hline
\end{tabular}}
\end{table*}

\textbf{Baselines.} One can readily see that the proposed method naturally belongs to a class of adaptive loss-balancing algorithms. In this subsection, we demonstrate the superior performance of the proposed AL-PINNs compared with the vanilla Physics-Informed Neural Networks (PINNs), a Soft Attention mechanism (SA) proposed in \cite{mcclenny2020self}, a Learning Rate Annealing algorithm (LRA) presented in \cite{wang2021understanding}, and a loss-balancing algorithm via the eigenvalues of the Neural Tangent Kernel (NTK) proposed in \cite{wang2022and}. We use the Helmholtz, viscous Burgers, and the Klein--Gordon equations as benchmark PDEs, as they are widely used for this purpose in PINNs literature (For example, see \cite{son2021sobolev, mcclenny2020self, wang2021understanding, bischof2021multi}). Although \cite{wang2022and} did not investigate those equations, the idea is easily generalizable to those equations. We compare the relative $L^2$ error of the proposed AL-PINNs with those of the vanilla PINNs, SA, LRA, and NTK algorithms using the above PDEs. We provide detailed equations and loss functions in Appendix \ref{appendix_pde}.

\textbf{Experimental Setup.} We compare the algorithms for four different neural network architectures, namely M1, M2, M3, and M4 in Tables \ref{table_helmholtz}--\ref{table_KG}. M1 consists of 8 hidden layers with 64 neurons, M2 consists of 2 hidden layers with 256 neurons, M3 denotes M1 equipped with residual connections, and M4 denotes M2 equipped with residual connections. We uniformly sampled the test dataset from the domain of each PDE and computed the relative $L^2$ error of the neural network solution $u_{nn}$, given by $\|u-u_{nn}\|_{L^2}/\|u\|_{L^2}$, on the test dataset. For each training algorithm-architecture pair, we train 10 instances of neural networks with the Kaiming uniform initialization method presented in \cite{he2015delving}. We report the average relative $L^2$ errors and the standard deviations across 10 trials. The hyperparameter configurations for $\beta, \eta_{\theta}, \eta_{\lambda}$ are provided in appendix \ref{appendix_hyper}.


\textbf{Helmholtz equation.} We define the loss function $\mathcal{L}_{\lambda_n, \beta}^{(A)}$ as in \eqref{Helmholtz_ALM}, by using the proposed AL-PINNs. We train a neural network on a fixed uniform rectangular grid where $N_r=2500$ and $N_B=200$. We train the neural networks for 10000 epochs, and use the early stopping strategy as the stopping criteria. We summarized the average test errors and standard deviations for the Helmholtz equation in Table \ref{table_helmholtz}.

\begin{figure}[ht]
\begin{center}
\centerline{\includegraphics[width=\columnwidth,draft=False]{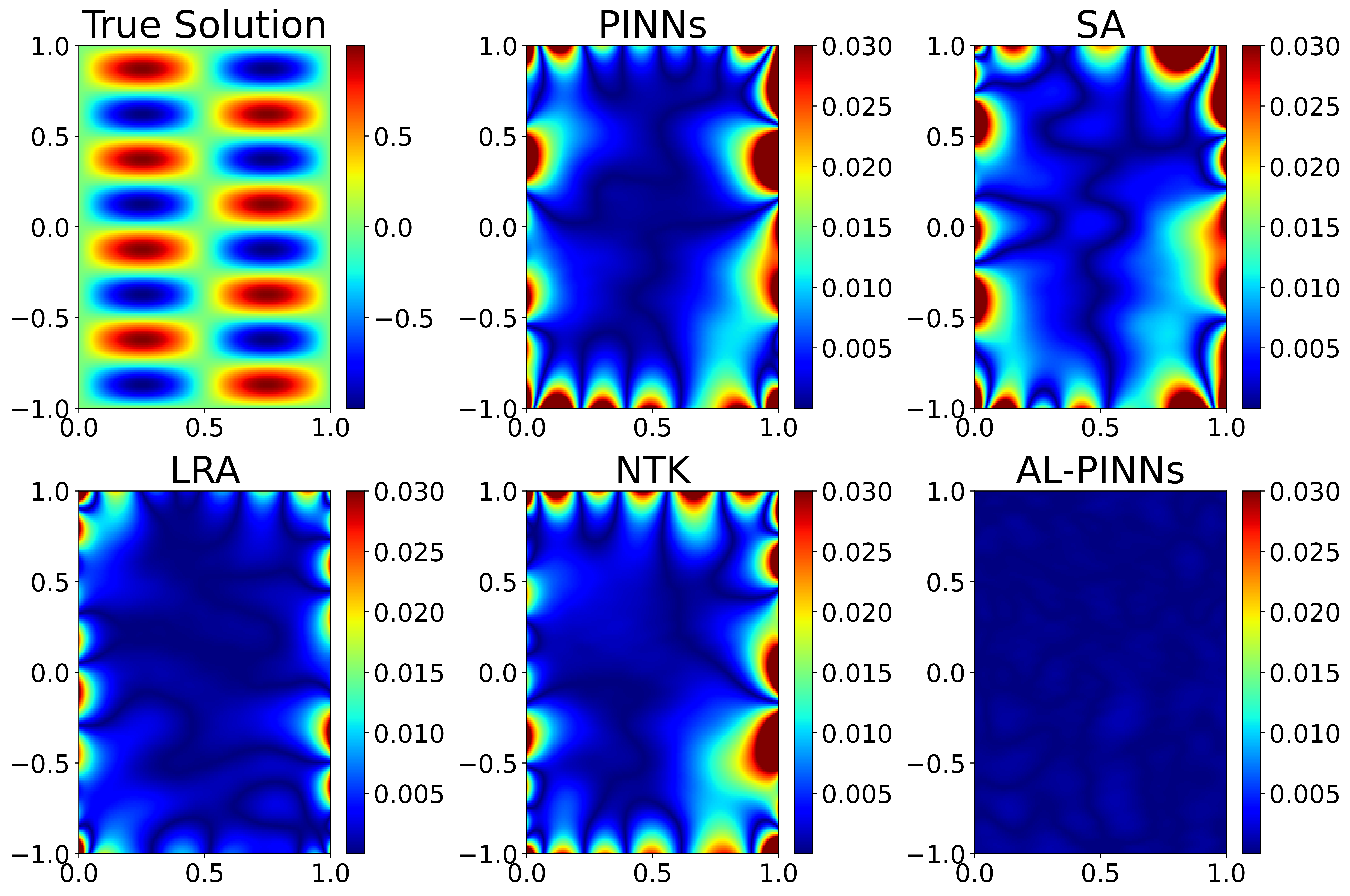}}
\caption{Pointwise absolute errors of the baseline algorithms for the Helmholtz equation. All adaptive loss-balancing algorithms suffer from boundary errors except our AL-PINNs.}
\label{helmholtz_loss_balancing_heatmap}
\end{center}
\vskip -0.2in
\end{figure}

Figure \ref{helmholtz_loss_balancing_heatmap} shows the analytic solution of the Helmholtz equation and pointwise absolute errors for the baseline algorithms with the model M2. All adaptive loss-balancing algorithms result in severe boundary errors except for our AL-PINNs. This implies that existing adaptive loss-balancing algorithms fail to find optimal $\lambda$'s to achieve an accurate approximation. On the other hand, the proposed AL-PINNs converges to a highly accurate approximate solution with a uniform error distribution. Table \ref{table_helmholtz} shows that the proposed AL-PINNs achieves a much smaller relative error than other adaptive loss-balancing algorithms throughout the models M1-M4.

\textbf{Viscous Burgers equation.} The viscous Burgers' equation \eqref{burgers_equation} admits an analytic solution presented in \cite{basdevant1986spectral}, which we use to compute the relative $L^2$ error. We define the loss function $\mathcal{L}_{\lambda_n, \beta}^{(A)}$ as in \eqref{Burgers_ALM} by using the proposed AL-PINNs.  We train the neural networks on a fixed uniform rectangular grid with $N_r = 2500, N_B = 100$, and $N_I = 50$, for 10,000 epochs, and employ the early stopping strategy as the stopping criteria.

Figure \ref{burgers_err_heatmap} shows the analytic solution and pointwise absolute errors for the vanilla PINNs, an adaptive loss-balancing algorithm using a soft attention mechanism (SA), and the proposed AL-PINNs with the model M1. \cite{mcclenny2020self} argued that PINNs suffers from an accuracy problem where the solution has a sharp spatio-temporal transition. For example, the solution of the viscous Burgers equation \eqref{burgers_equation} exhibits a sharp transition near $x=0$ (the horizontal line). The absolute errors in Figure \ref{burgers_err_heatmap} show that the proposed AL-PINNs results in a much smaller error near $x=0$, compared to other methods. This result demonstrates that the proposed AL-PINNs outperforms both the vanilla PINNs and the soft attention mechanism in a problem with sharp spatio-temporal transitions. We summarized the average relative $L^2$ errors and standard deviations in Table \ref{table_Burgers}.

\begin{figure}[]
\begin{center}
\centerline{\includegraphics[width=\columnwidth,draft=False]{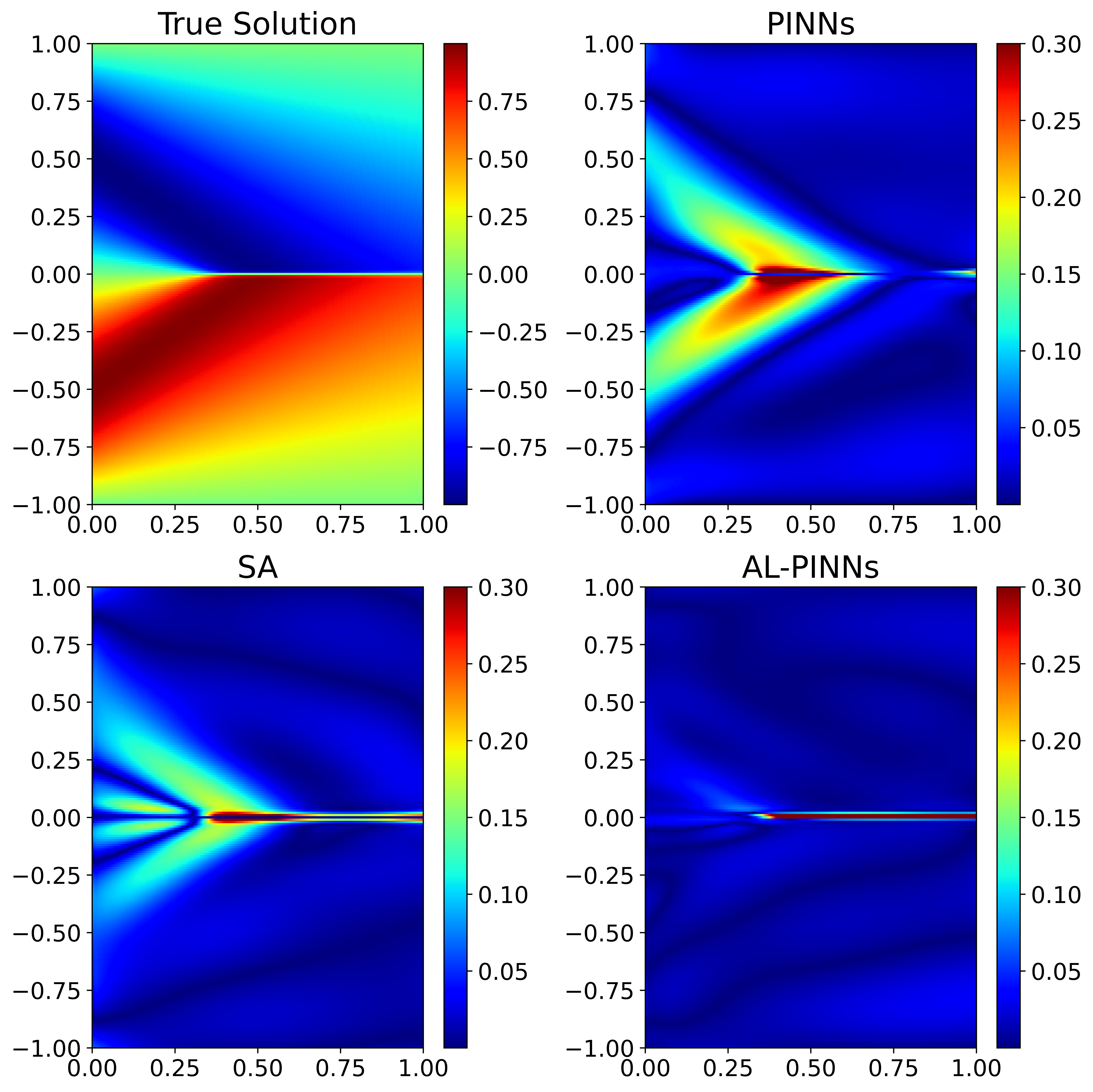}}
\caption{Top left: True solution of the viscous Burgers equation. Top right: Pointwise absolute error for the vanilla PINNs. Bottom left: Pointwise absolute error for the soft attention mechanism (SA). Bottom right: Pointwise absolute error for our AL-PINNs.}
\label{burgers_err_heatmap}
\end{center}
\vskip -0.2in
\end{figure}

\textbf{Klein--Gordon equation.} The equation and proposed loss function are given in \eqref{Klein_Gordon_equation} and \eqref{Klein-Gordon_ALM}, respectively. We train a neural network on a fixed uniform mesh with $N_r=2500, N_B=100$, and $N_I=50$. We train the neural networks for 10,000 epochs and employ the early stopping strategy as a stopping criterion. We summarize the results in Table \ref{table_KG}, which shows that the proposed AL-PINNs outperform other loss-balancing algorithms for all network architectures we considered.

\subsection{AL-PINNs with sinusoidal features}\label{sec4.3}
\textbf{Experimental Setup.} Recently, there have been a number of works that try to utilize sinusoidal feature mappings. For example, for the training of neural networks \cite{sitzmann2020implicit, tancik2020fourier}, and for the training of physics-informed neural networks \cite{wang2021eigenvector, wong2022learning} to name a few. In this subsection, we investigate the effect of the proposed AL-PINNs when applied to sinusoidal-feature physics-informed neural networks (sf-PINNs) proposed in \cite{wong2022learning}. We compare four models, a fully connected network with the original loss functions (PINNs), a fully connected network with the proposed loss functions (AL-PINNs), sinusoidal features with the original loss functions (sf-PINNs), and sinusoidal features with the proposed loss functions (AL-sf-PINNs). We fixed the network architecture with [$(t,x,y)$-64-50-50-50-[50-50-50-$\hat{u}$, 50-50-50-$\hat{v}$, 50-50-50-$\hat{p}$]] as in \cite{wong2022learning}. We use the hyperbolic tangent as activation functions and the layers are initialized via Xavier initialization. We computed the mean square error on a pre-defined test set which is uniformly sampled from the domain.

\textbf{2D transient Navier-Stokes equation.} We use the Navier-Stokes equation, describing the motion of viscous fluid, as a benchmark problem for this purpose. The equation and the exact closed form solution are given in \ref{Navier_Stokes_equation}. Table \ref{Navier_Stokes_table} shows the mean square errors and their standard deviations over 10 trials. As the exact solution consists of sines and cosines, sinusoidal feature mapping shows quite an improvement in the mean square error. Interestingly, we observe that AL-PINNs also shows quite an improvement compared to PINNs. We think that the proposed loss function enables the network to learn the sinusoidal features from the initial and boundary conditions. Finally, training a network with sinusoidal feature mappings with the proposed loss function (AL-sf-PINNs) results in the best mean square error, showing that the proposed loss function can be applied to a variety of networks.

\begin{table}[h]
\centering
\caption{Mean square errors and standard deviations over 10 trials for the Navier-Stokes equation.}
\label{Navier_Stokes_table}
\begin{tabular}{|c|c|}
\hline
            & MSE                     \\ \hline
PINNs       & 2.87e-04 $\pm$ 4.51e-05 \\ \hline
AL-PINNs    & 5.53e-05 $\pm$ 1.32e-05 \\ \hline
sf-PINNs    & 2.71e-05 $\pm$ 2.58e-05 \\ \hline
AL-sf-PINNs & \textbf{2.45e-06 $\pm$ 5.58e-06} \\ \hline
\end{tabular}
\end{table}

\section{Conclusions}\label{sec5}
In this paper, we proposed AL-PINNs, a convergence-guaranteed highly accurate adaptive loss-balancing algorithm for PINNs. We proved the convergence of a sequence generated by the proposed method in Section \ref{sec3}. We evaluated the proposed method in two different aspects. In Section \ref{sec4.1}, we observed that the penalty and multiplier terms should both be considered in the loss function to obtain an accurate approximation. In Section \ref{sec4.2}, we demonstrated the superior performance of AL-PINNs compared with other loss-balancing algorithms in various settings. We also investigated a successful application of the proposed method to neural networks with sinusoidal feature mappings in \ref{sec4.3}. To summarize, we believe that our AL-PINNs form a universal framework that can be successfully applied to a variety of PDEs.

\section{Authors' Contribution}
Hwijae Son and Sung Woong Cho conceived the presented idea, developed the theory, and performed the simulations. Hyung Ju Hwang verified the analytical methods and supervised the findings of this work. All authors discussed the results and contributed to the final manuscript.

\section{Acknowledgement} Hwijae Son was supported by the National Research Foundation of Korea (NRF) grant funded by the Korea government (MSIT) (No. NRF-2022R1F1A1073732) and the research fund of Hanbat National University in 2022. Hyung Ju Hwang was supported by the National Research Foundation of Korea (NRF) Grants (RS-2022-00165268), by the Institute for the Information and Communications Technology Promotion (IITP) Grant through Korean Government [Artificial Intelligence Graduate School Program (Pohang University of Science and Technology (POSTECH))] (2019-0-01906), and by the Information Technology Research Center (ITRC) Support Program (IITP-2020-2018-0-01441).

\appendix

\section{PDEs and Loss Functions} \label{appendix_pde}
\subsection{Helmholtz equation}
The 2-D Helmholtz equation reads: \begin{equation}
\begin{aligned}\label{Helmholtz}
    &\Delta u + u = f(x,y), &&\text{ for } (x, y) \in \Omega, \\
    &u(x,y) = 0, &&\text{ for } (x,y) \in \partial\Omega, 
\end{aligned}
\end{equation}
where $\Omega = [-1,1]\times[-1,1]$. If we take 
\begin{align}
    f(x,y) &= -\pi^2 \sin(\pi x)\sin(4\pi y) \nonumber \\
    &- (4\pi)^2 \sin(\pi x)\sin(4\pi y) + \sin(\pi x)\sin(4\pi y), \nonumber
\end{align}
then one can readily show that $u(x,y)=\sin(\pi x)\sin(4\pi y)$ is an analytic solution. The proposed loss function reads as:
\begin{equation}
    \begin{aligned}\label{Helmholtz_ALM}
    \mathcal{L}_{\lambda_n, \beta}^{(A)} &\approx \frac{1}{N_r}\sum_{i=1}^{N_r}(\Delta u_{nn}(x_i) + u(x_i) - f(x_i))^2 \\
    &+ \frac{\beta}{N_B}\sum_{j=1}^{N_b} u_{nn}^2(x_j) \\
    &+ \frac{1}{N_B} \sum_{j=1}^{N_B}\lambda_n(x_j)u_{nn}(x_j),
    \end{aligned}
\end{equation}

\subsection{Viscous Burgers equation}
We consider the viscous Burgers equation: 
\begin{equation}
    \begin{aligned}\label{burgers_equation}
        &\partial_{t} u + \partial_x(\frac{1}{2}u^2 - c\partial_xu) = 0, &&\text{ for } (t,x)\in[0,1]\times[-1,1], \\
        &u(0,x) = -\sin(\pi x), &&\text{ for } x\in[-1,1], \\
        &u(t, -1) = u(t, 1) = 0, &&\text{ for } t\in[0,1],
    \end{aligned}
\end{equation}
for $(t,x) \in [0,1]\times[-1,1]$, and $c=\frac{0.01}{\pi}$. This setting is the same as in \cite{mcclenny2020self,kim2021dpm}. The viscous Burgers' equation admits an analytic solution as presented in \cite{basdevant1986spectral}. In this case, the proposed loss function reads as:

\begin{equation}
\begin{aligned}\label{Burgers_ALM}
        \mathcal{L}_{\lambda_n, \beta}(\theta) &\approx \frac{1}{N_r} \sum_{i=1}^{N_r} (\partial_t u_{nn}(t_i, x_i) +\partial_x( \frac{1}{2}u_{nn}^2 - c \partial_x u_{nn})(t_i,x_i)^2 \\
        &+ \frac{\beta}{N_I}\sum_{j=1}^{N_I} (u_{nn}(0,x_j) + \sin(\pi x_j))^2 \\
        &+ \frac{1}{N_I} \sum_{j=1}^{N_I} (\lambda^{(1)}_n (x_j))(u_{nn}(0,x_j) + \sin(\pi x_j))\\ 
        &+ \frac{\beta}{N_B}\sum_{k=1}^{N_B} (u_{nn}(t_k, -1) - u_{nn}(t_k,1))^2 \\
        &+ \frac{1}{N_B}\sum_{k=1}^{N_B} (\lambda^{(2)}_n(t_k))(u_{nn}(t_k, -1) - u_{nn}(t_k,1)).
\end{aligned}
\end{equation}

\subsection{Klein--Gordon equation} 
The Klein--Gordon equation we consider reads:
\begin{equation}
\begin{aligned}\label{Klein_Gordon_equation}
    &\partial_{t}^2u - \partial_{x}^2u + u^3 = f(t, x), &&\text{ for } (t,x)\in[0,1]\times[0,1], \\
    &u(0, x) = g_1(x), &&\text{ for } x\in[0,1], \\
    &\partial_{t} u(0, x) = g_2(x), &&\text{ for } x\in[0,1], \\
    &u(t,x) = h(t,x), &&\text{ for } (t,x)\in[0,1]\times\{0,1\},
\end{aligned}
\end{equation}
where $f, g_1, g_2, h$ are computed using a fabricated solution 
\begin{equation}
    u(t,x) = x\cos(5\pi t) + (tx)^3, \nonumber 
\end{equation}
as in \cite{wang2021understanding}. In this example, the proposed loss function reads as: 
\begin{equation}
\begin{aligned}\label{Klein-Gordon_ALM}
        \mathcal{L}_{\lambda_n, \beta}(\theta) &\approx \frac{1}{N_r} \sum_{i=1}^{N_r}(\partial^2_t u_{nn}(x_i) - \partial^2_x u_{nn}(x_i) + u_{nn}^3(x_i) - f(x_i))^2 \\
        &+ \frac{\beta}{N_I}\sum_{j=1}^{N_I} (u_{nn}(0,x_j)-g_1(x_j))^2 \\
        &+ \frac{1}{N_I}\sum_{j=1}^{N_I} (\lambda^{(1)}_n(x_j)) (u_{nn}(0,x_j)-g_1(x_j))\\
        &+ \frac{\beta}{N_I}\sum_{j=1}^{N_I} (\partial_t u_{nn}(0,x_j)-g_2(x_j))^2\\
        &+ \frac{1}{N_I}\sum_{j=1}^{N_I} (\lambda^{(2)}_n(x_j))(\partial_t u_{nn}(0,x_j)-g_2(x_j)) \\ 
        &+ \frac{\beta}{N_B}\sum_{k=1}^{N_B} (u_nn(t_k,x_k) - h(t_k,x_k))^2 \\
        &+ \frac{\beta}{N_B}\sum_{k=1}^{N_B} (\lambda^{(3)}_n(t_k,x_k)) (u_{nn}(t_k,x_k)-h(t_k,x_k)).
\end{aligned} 
\end{equation}

\subsection{2D transient Navier-Stokes equation} 
The 2D transient incompressible Navier-Stokes equation is given by :
\begin{equation}
    \begin{aligned}\label{Navier_Stokes_equation}
        &\partial_x u + \partial_y v = 0, \\
        &\partial_t u + u \partial_x u + v \partial_y u = -\partial_x p + \nu (\partial_{xx}u + \partial_{yy}u), \\
        &\partial_t v + u \partial_x v + v \partial_y v = -\partial_y p + \nu (\partial_{xx}v + \partial_{yy}v), \\
    \end{aligned}
\end{equation} where $u, v$  are the velocity and $p$ is the pressure. We assume the Dirichlet boundary conditions for $u$ and $v$ and the Neumann boundary condition for $p$. 

Here, the exact closed form solution is given by :
\begin{equation}
    \begin{aligned}
    &u(t,x,y) = \sin(\pi x) \cos(\pi y) \exp(-2\pi^2 \nu t), \\
    &v(t,x,y) = -\cos(\pi x) \sin(\pi y) \exp(-2\pi^2 \nu t), \\
    &p(t,x,y) = \frac{1}{4} (\cos(2\pi x) + \sin(2\pi y)) \exp(-4 \pi^2 \nu t).
    \end{aligned}
\end{equation}

We solve the problem for $(t,x,y) \in [0,2] \times [0.5, 4.5] \times [0.5, 4.5]$. We impose the initial conditions using $u(0,x,y), v(0,x,y),$ and $p(0,x,y)$. the proposed loss function is defined in the same manner: 
\footnotesize
\begin{equation}
\begin{aligned}\label{Navier-Stokes_ALM}
        \mathcal{L}_{\lambda_n, \beta}(\theta) 
        &\approx \frac{1}{N_r} \sum_{i=1}^{N_r} (\partial_x \hat{u}_i - \partial_y \hat{v}_i)^2\\
        &+ \frac{1}{N_r} \sum_{i=1}^{N_r} (\partial_t \hat{u}_i + \hat{u}_i \partial_x \hat{u}_i + \hat{v}_i \partial_y \hat{u}_i + \partial_x p_i - \nu(\partial_{xx}\hat{u}_i + \partial_{yy}\hat{u}_i))^2 \\
        &+ \frac{1}{N_r} \sum_{i=1}^{N_r} (\partial_t \hat{v}_i + u_i \partial_x \hat{v}_i + \hat{v}_i \partial_y \hat{v}_i + \partial_y p_i - \nu(\partial_{xx}\hat{v}_i + \partial_{yy}\hat{v}_i))^2 \\
        &+ \frac{\beta}{N_I}\sum_{j=1}^{N_I} (\hat{u}_j-u(0,x_j,y_j))^2 + \frac{1}{N_I}\sum_{j=1}^{N_I} (\lambda^{(1)}_n(x_j,y_j)) (\hat{u}_j-u(0,x_j,y_j))\\
        &+ \frac{\beta}{N_I}\sum_{j=1}^{N_I} (\hat{v}_j-v(0,x_j,y_j))^2 + \frac{1}{N_I}\sum_{j=1}^{N_I} (\lambda^{(2)}_n(x_j,y_j)) (\hat{v}_j-v(0,x_j,y_j))\\
        &+ \frac{\beta}{N_I}\sum_{j=1}^{N_I} (\hat{p}_j-p(0,x_j,y_j))^2 + \frac{1}{N_I}\sum_{j=1}^{N_I} (\lambda^{(3)}_n(x_j,y_j)) (\hat{p}_j-p(0,x_j,y_j))\\
        &+ \frac{\beta}{N_B}\sum_{k=1}^{N_B} (\hat{u}_k - u(t_k,x_k,y_k))^2 + \frac{1}{N_B}\sum_{k=1}^{N_B} (\lambda^{(4)}_n(t_k, x_k, y_k)) (\hat{u}_k-u(t_k,x_k,y_k))\\
        &+ \frac{\beta}{N_B}\sum_{k=1}^{N_B} (\hat{v}_k - v(t_k,x_k,y_k))^2 + \frac{1}{N_B}\sum_{k=1}^{N_B} (\lambda^{(5)}_n(t_k, x_k, y_k)) (\hat{v}_k-u(t_k,x_k,y_k))\\
        &+ \frac{\beta}{N_B}\sum_{k=1}^{N_B} (\partial_n\hat{p}_k - \partial_n p(t_k,x_k,y_k))^2 \\
        &+ \frac{1}{N_B}\sum_{k=1}^{N_B} (\lambda^{(6)}_n(t_k, x_k, y_k)) (\partial_n\hat{p}_k-\partial_n p(t_k,x_k,y_k)),
\end{aligned} 
\end{equation}
\normalsize
where $\hat{u}, \hat{v},$ and $\hat{p}$ are the corresponding neural network functions.

\section{Hyperparameters}\label{appendix_hyper}
For all algorithm-architecture pairs in the tables in Section \ref{sec4}, we test the learning rate $\eta_{\theta}$ from $\{10^{-3},10^{-4},10^{-5}\}$. We choose $\beta$ from $\{10^{0},10^{1},5\times10^{1},10^{2},5\times10^{2},10^{3},10^{4}\}$ and $\eta_{\lambda}$ from $\{10^{0},10^{-1},10^{-2},10^{-3},10^{-4}\}$ and report the best error. We set the initial value of the multiplier to be $\lambda_0=\{0,0,...,0\}$ for the proposed AL-PINNs. Table \ref{best_hyper} shows the best hyperparameters for each equation.

\renewcommand{\arraystretch}{1.5}
\begin{table}[h]
\centering

\caption{Best hyperparameter configurations.}
\label{best_hyper}
\vskip 0.1in
\resizebox{1\columnwidth}{!}{
\begin{tabular}{|c|ccc|ccc|ccc|}
\hline
\multirow{2}{*}{Best Hyperparameters} & \multicolumn{3}{c|}{Helmholtz equation}                                                & \multicolumn{3}{c|}{Viscous Burgers equation}                                                  & \multicolumn{3}{c|}{Klein--Gordon equation}                                            \\ \cline{2-10} 
                  & \multicolumn{1}{c|}{$\beta$} & \multicolumn{1}{c|}{$\eta_{\lambda}$} & $\eta_{\theta}$ & \multicolumn{1}{c|}{$\beta$} & \multicolumn{1}{c|}{$\eta_{\lambda}$} & $\eta_{\theta}$ & \multicolumn{1}{c|}{$\beta$} & \multicolumn{1}{c|}{$\eta_{\lambda}$} & $\eta_{\theta}$ \\ \hline
M1                & \multicolumn{1}{c|}{$10^3$}  & \multicolumn{1}{c|}{1}                & $10^{-3}$       & \multicolumn{1}{c|}{1}       & \multicolumn{1}{c|}{$10^{-4}$}        & $10^{-4}$       & \multicolumn{1}{c|}{$5\times10^{2}$}  & \multicolumn{1}{c|}{1}                & $10^{-3}$       \\ \hline
M2                & \multicolumn{1}{c|}{$5\times10^2$}  & \multicolumn{1}{c|}{1}                & $10^{-4}$       & \multicolumn{1}{c|}{1}       & \multicolumn{1}{c|}{$10^{-3}$}        & $10^{-4}$       & \multicolumn{1}{c|}{$5\times10^{2}$}  & \multicolumn{1}{c|}{1}                & $10^{-3}$       \\ \hline
M3                & \multicolumn{1}{c|}{$10^3$}  & \multicolumn{1}{c|}{1}                & $10^{-4}$       & \multicolumn{1}{c|}{1}       & \multicolumn{1}{c|}{$10^{-3}$}        & $10^{-4}$       & \multicolumn{1}{c|}{$5\times10^{2}$}  & \multicolumn{1}{c|}{1}                & $10^{-3}$       \\ \hline
M4                & \multicolumn{1}{c|}{$5\times10^2$}  & \multicolumn{1}{c|}{1}                & $10^{-3}$       & \multicolumn{1}{c|}{1}       & \multicolumn{1}{c|}{$10^{-3}$}        & $10^{-3}$       & \multicolumn{1}{c|}{$5\times10^{2}$}  & \multicolumn{1}{c|}{1}                & $10^{-3}$       \\ \hline
\end{tabular}}
\end{table}
\renewcommand{\arraystretch}{1}

For the baseline algorithms, we followed the best hyperparameter settings provided in each paper. For example, we set the initial $\lambda$ to be $(1,1,...,1)$ for SA method in \cite{mcclenny2020self}. For the LRA method, we set $\alpha=0.1$ and the initial $\lambda$ to be ones, as in \cite{wang2021understanding}.

\section{Computational Cost}\label{appendix_Computational_Cost}
We report the actual computing time for one epoch for different models and equations. We use NVIDIA GeForce RTX 3090 for the measurement. 

Quantitatively, if we adopt a uniform mesh($100\times100 \text{ in } [0, T] \times \Omega$) for the Burgers equations, AL-PINNs require 200 additional learnable parameters, the Soft Attention(SA) algorithm requires 10,200 additional learnable parameters, and the Learning Rate Annealing(LRA) algorithm requires no additional parameters. Thus, in terms of the number of additional learnable parameters, AL-PINNs are more efficient than the SA algorithm and less efficient than the LRA algorithm. The additional learnable parameters are the key factors of computational cost. However, as we can see in Table \ref{table_10000epoch}, there is no significant difference in the computing time per epoch among the algorithms. Therefore, we conclude that the increment of the computational complexity of AL-PINNs is negligible.

\begin{table}[h]\resizebox{1\columnwidth}{!}{
\begin{tabular}{|c|ccccc|}
\hline
\multirow{2}{*}{Computing time (ms)} & \multicolumn{5}{c|}{Helmholtz equation}                                                                                          \\ \cline{2-6} 
                                    & \multicolumn{1}{c|}{PINNs}  & \multicolumn{1}{c|}{SA}     & \multicolumn{1}{c|}{LRA}    & \multicolumn{1}{c|}{NTK}    & AL-PINNs \\ \hline
M1                                  & \multicolumn{1}{c|}{39.9} & \multicolumn{1}{c|}{40.3} & \multicolumn{1}{c|}{50.2} & \multicolumn{1}{c|}{N/A} & 40.5   \\ \hline
M2                                  & \multicolumn{1}{c|}{23.1} & \multicolumn{1}{c|}{23.1} & \multicolumn{1}{c|}{30.6} & \multicolumn{1}{c|}{N/A} & 23.7   \\ \hline
M3                                  & \multicolumn{1}{c|}{41.1} & \multicolumn{1}{c|}{41.1} & \multicolumn{1}{c|}{50.6} & \multicolumn{1}{c|}{N/A} & 41.4   \\ \hline
M4                                  & \multicolumn{1}{c|}{23.2} & \multicolumn{1}{c|}{23.5} & \multicolumn{1}{c|}{30.6} & \multicolumn{1}{c|}{N/A} & 23.7   \\ \hline
\end{tabular}}
\resizebox{1\columnwidth}{!}{
\begin{tabular}{|c|ccccc|}
\hline
\multirow{2}{*}{Computing time (ms)} & \multicolumn{5}{c|}{Viscous Burgers Equation}                                                                                    \\ \cline{2-6} 
                  & \multicolumn{1}{c|}{PINNs}  & \multicolumn{1}{c|}{SA}     & \multicolumn{1}{c|}{LRA}    & \multicolumn{1}{c|}{NTK}    & AL-PINNs \\ \hline
M1                & \multicolumn{1}{c|}{34.4} & \multicolumn{1}{c|}{34.5} & \multicolumn{1}{c|}{35.0} & \multicolumn{1}{c|}{N/A} & 35.8   \\ \hline
M2                & \multicolumn{1}{c|}{17.1} & \multicolumn{1}{c|}{17.6} & \multicolumn{1}{c|}{18.0} & \multicolumn{1}{c|}{N/A} & 18.4   \\ \hline
M3                & \multicolumn{1}{c|}{37.2} & \multicolumn{1}{c|}{34.9} & \multicolumn{1}{c|}{38.0} & \multicolumn{1}{c|}{N/A} & 38.7   \\ \hline
M4                & \multicolumn{1}{c|}{17.3} & \multicolumn{1}{c|}{17.7} & \multicolumn{1}{c|}{17.8} & \multicolumn{1}{c|}{N/A} & 18.6   \\ \hline
\end{tabular}}
\resizebox{1\columnwidth}{!}{
\begin{tabular}{|c|ccccc|}
\hline
\multirow{2}{*}{Computing time (ms)} & \multicolumn{5}{c|}{Klein Gordon Equation}                                                                                       \\ \cline{2-6} 
                  & \multicolumn{1}{c|}{PINNs}  & \multicolumn{1}{c|}{SA}     & \multicolumn{1}{c|}{LRA}    & \multicolumn{1}{c|}{NTK}    & AL-PINNs \\ \hline
M1                & \multicolumn{1}{c|}{45.2} & \multicolumn{1}{c|}{47.0} & \multicolumn{1}{c|}{61.2} & \multicolumn{1}{c|}{N/A} & 46.0   \\ \hline
M2                & \multicolumn{1}{c|}{28.5} & \multicolumn{1}{c|}{29.0} & \multicolumn{1}{c|}{40.4} & \multicolumn{1}{c|}{N/A} & 29.6   \\ \hline
M3                & \multicolumn{1}{c|}{46.8} & \multicolumn{1}{c|}{48.0} & \multicolumn{1}{c|}{61.0} & \multicolumn{1}{c|}{N/A} & 48.1   \\ \hline
M4                & \multicolumn{1}{c|}{28.7} & \multicolumn{1}{c|}{29.3} & \multicolumn{1}{c|}{40.4} & \multicolumn{1}{c|}{N/A} & 30.1   \\ \hline
\end{tabular}} \caption{Average computing time in milliseconds (ms) for one training epoch.}
\label{table_10000epoch}
\end{table}

\section{Proof of Theorems}\label{appendix_proof}
\Conv*

\begin{proof}
    If $r > \inf F$, then $x_k$ is contained in the bounded set $\bigcup_{n\in\mathbb{N}} \{x\in X : F_n(x) \leq r\}$ for a sufficiently large $k$ by the inequality 
    \begin{align*}
        \inf_{x\in X} F \ge \limsup_{n \to \infty} (\inf_{x\in X} F_n)= \limsup_{n \to \infty}(\inf_{x\in X} F_n + \delta_n).
    \end{align*}
    The reflexivity of $X$ ensures that $\left\{ x_n \right\}_{n\in\mathbb{N}}$ has a weakly convergent subsequence $\left\{x_{n_k}\right\}_{k\in \mathbb{N}}$. And the limit point $x$ should be a unique minimizer of $F$ by Lemma \ref{quasi}. Suppose that there exists a subsequence $\left\{x_{n_{m}}\right\}_{m\in\mathbb{N}}$ and an element $G$ in $X^{*}$ such that $\{G(x_{n_{m}})\}_{m \in \mathbb{N}}$ is not converging to $G(x)$. ${x_{n_{m}}}$ attains a further subsequence which converges to $x$ by the reflexivity of $F$ so that a contradiction arises. Therefore, $x_n$ converges to $x$ weakly. Along with the Admissible limit point property, we conclude that $x$ lies in $F_A$ so that $F$ indeed attains its minimum at $x$.
\end{proof}

\quasi*

\begin{proof}
    First, we consider a recovery sequence $\left\{y_n \right\}_{n\in\mathbb{N}}$ for $y\in F_A$ to establish the following inequalities.
    \begin{align*}
        F(y)= \lim_{n\rightarrow\infty} F_n(y_n) = \limsup_{n\to \infty} F_n(y_n) \ge \limsup_{n\to \infty} (\inf_X F_n).
    \end{align*}
    Taking the infimum over $y$, we reformulate the inequality as $\inf_X F\ge \limsup_{n\to \infty} (\inf_x F_n)$. By the admissible limit point property, $x \in F_A$ and therefore, the liminf inequality yields the following. 
    \begin{align*}
        F(x) &\le \liminf_{n\to \infty} F_n(y_n) \le \limsup_{n\to \infty} (F_n (y_n)) \\&\le \limsup_{n\to \infty}(\inf_X F_n+\delta_n)=\limsup_{n\to \infty}(\inf_X F_n)\\&\le \inf_X(F).
    \end{align*}
    Consequently, $x$ should be a minimizer of $F$.
\end{proof}

\losszero*
\begin{proof}
Let $\epsilon > 0$ be given and let $u$ denote a strong solution of the Helmholtz equation \eqref{Helmholtz_eq}. By theorem \ref{universal}, there exists $u_{nn}(x,y)$ defined as in theorem \ref{universal} such that 
\begin{equation}\max_{|\alpha|\leq2} \|D^{\alpha}(u) - D^{\alpha}(u_{nn}(x,y))\|_{L^{\infty}(\bar\Omega)} < \epsilon.\nonumber
\end{equation}
Then, 
\begin{equation}
    \begin{aligned}
        &L_n(u_{nn}) \\&= \|\Delta u_{nn}(x,y) + k^2 u_{nn}(x,y) - q(x,y)\|_{L^2(\Omega)}\\
         &+ \beta\|u_{nn}(x) - g(x)\|_{L^2(\partial\Omega)} + \langle\lambda_n, u_{nn}(x)-g(x)\rangle_{L^2(\partial\Omega)} \\
         &= \| \Delta(u_{nn}(x,y)-u(x,y)) + k^2 (u_{nn}(x,y)-u(x,y))\|_{L^2(\Omega)}\\
         &+ \beta\|u_{nn}(x,y) - u(x,y)\|_{L^2(\partial\Omega)} + \langle\lambda_n, u_{nn}(x)-g(x)\rangle_{L^2(\partial\Omega)} \\
         &\leq \|\Delta(u_{nn}(x,y)-u(x,y))\|_{L^2(\Omega)} + \| k^2 (u_{nn}(x,y)-u(x,y))\|_{L^2(\Omega)} \\ &+\beta\|u_{nn}(x,y) - u(x,y)\|_{L^2(\partial\Omega)} + \|\lambda_n\|_{L^2(\partial\Omega)}\|u_{nn}-g\|_{L^2(\partial\Omega)}\\ &\leq C\epsilon
    \end{aligned}
\end{equation}
for some $C>0$, provided that $\|\lambda_n\|_{L^2(\partial\Omega)}<\infty$. This completes the proof.
\end{proof}

\equi*
\textbf{Proof of Theorem \ref{equi} for the Helmholtz equation}\\
The Helmholtz equation reads: 
\begin{equation}\label{Helmholtz_eq}
    \begin{aligned}
        &N(u):=\Delta u(x,y)+k^2 u(x,y)=f(x,y), &&\quad (x, y)\in {\Omega}
        \\&u=g,  &&\quad (x,y)\in {\partial \Omega}
    \end{aligned}
\end{equation}
for a constant $k$. In this paper, we only consider the case where $\Omega$ is a rectangle $[a, b]\times [c, d]$ and there exists $h\in H^{2}(\Omega)$ such that $T(h)=g$ for the trace operator $T$. There are existence and uniqueness theorems for linear elliptic equations. Let us denote the set of all eigenvalues of the Laplace operator $L(u):=-\Delta u $ by $\Sigma$ (i.e. if $k^2 \notin \Sigma$, then $L(u):=-\Delta u - k^2 u = f$ has a unique solution with $u=0$ on $\partial \Omega$). Then, the set $\Sigma$ is at most countable (see theorem 5 in Section 6.2 of \cite{evans1998partial}) and the following theorem holds.
\begin{theorem}[Thm 6 in Sec 6.2 of \cite{evans1998partial}] \label{Regularity_Spectrum}
    Let $f\in L^2(\Omega)$. Suppose $k^2 \notin \Sigma$ and let $u\in{H_{0}^{1}(\Omega)}$ be the unique solution of the following equation.
    \begin{equation} \label{eigenvalue}
        \begin{aligned}
                           & \Delta u +k^2 u= f, & \text{in} &&\Omega,
                \\ &u=0, &\text{on} &&\partial \Omega.        
        \end{aligned}
    \end{equation}
    Then, there exist a constant $C$ which depends on $k^2$ and $\Omega$ such that the following inequality holds. 
    \begin{align*}
        ||u||_{L^2(\Omega)} \le C(||f||_{L^2(\Omega)}).
    \end{align*}
\end{theorem}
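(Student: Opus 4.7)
The plan is to prove the $L^2$ bound by a standard contradiction-plus-compactness argument, leveraging the fact that $k^2$ avoids the Dirichlet spectrum $\Sigma$ of $-\Delta$ on $\Omega$. First I would suppose the conclusion fails: then for every $n\in\mathbb{N}$ there exists $f_n\in L^2(\Omega)$ and a corresponding weak solution $u_n\in H^1_0(\Omega)$ with
\begin{equation*}
\|u_n\|_{L^2(\Omega)}=1, \qquad \|f_n\|_{L^2(\Omega)}\le 1/n.
\end{equation*}
Normalizing this way is permissible because the problem is linear and $u_n\not\equiv 0$ for $n$ large (otherwise $\Delta u_n+k^2u_n=0$ forces $u_n=0$ since $k^2\notin\Sigma$).

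Next I would extract an energy estimate. Testing the weak formulation of \eqref{eigenvalue} against $u_n\in H^1_0(\Omega)$ gives
\begin{equation*}
-\|\nabla u_n\|_{L^2(\Omega)}^2 + k^2\|u_n\|_{L^2(\Omega)}^2 = \langle f_n,u_n\rangle_{L^2(\Omega)},
\end{equation*}
so by Cauchy--Schwarz
\begin{equation*}
\|\nabla u_n\|_{L^2(\Omega)}^2 \le k^2 + \|f_n\|_{L^2(\Omega)}\|u_n\|_{L^2(\Omega)} \le k^2 + 1.
\end{equation*}
Thus $\{u_n\}$ is bounded in $H^1_0(\Omega)$.

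Now I would apply compactness. Since $H^1_0(\Omega)$ is reflexive, a subsequence (still denoted $u_n$) converges weakly in $H^1_0(\Omega)$ to some $u$; by the Rellich--Kondrachov theorem the embedding $H^1_0(\Omega)\hookrightarrow L^2(\Omega)$ is compact, hence $u_n\to u$ strongly in $L^2(\Omega)$, giving $\|u\|_{L^2(\Omega)}=1$. Passing to the limit in the weak formulation
\begin{equation*}
-\langle \nabla u_n,\nabla\varphi\rangle + k^2\langle u_n,\varphi\rangle = \langle f_n,\varphi\rangle, \qquad \varphi\in H^1_0(\Omega),
\end{equation*}
using weak convergence in the gradient term, strong $L^2$ convergence in the zeroth-order term, and $\|f_n\|_{L^2}\to 0$ on the right, I obtain
\begin{equation*}
-\langle \nabla u,\nabla\varphi\rangle + k^2\langle u,\varphi\rangle = 0 \quad\text{for all }\varphi\in H^1_0(\Omega).
\end{equation*}
That is, $u\in H^1_0(\Omega)$ is a weak solution of $-\Delta u=k^2 u$ with zero Dirichlet data, so $k^2$ is a Dirichlet eigenvalue of $-\Delta$ with eigenfunction $u\neq 0$ (since $\|u\|_{L^2}=1$). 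This contradicts the hypothesis $k^2\notin\Sigma$, completing the proof.

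The main obstacle is really the subtle step of recognizing that the $L^2$ bound cannot be obtained by a direct Lax--Milgram-style coercivity estimate (the bilinear form $a(u,v)=\int\nabla u\cdot\nabla v - k^2\int uv$ is in general \emph{indefinite}), which is why one must exploit compactness and the spectral hypothesis rather than a direct inequality. Once the normalization and energy bound are in place, the Rellich compactness and the spectral assumption combine cleanly; the constant $C$ emerges non-explicitly from the closed graph theorem, depending on $\Omega$ and on how far $k^2$ is from $\Sigma$.
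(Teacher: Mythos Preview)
Your argument is correct and is precisely the standard contradiction--compactness proof of this estimate (normalize, bound in $H^1_0$ via the weak formulation, extract a weak limit using Rellich--Kondrachov, and contradict $k^2\notin\Sigma$). Note, however, that the paper does not supply its own proof of this statement: it is quoted verbatim as Theorem~6 in Section~6.2 of Evans and used as a black box in the equicoercivity proof for the Helmholtz equation. Your proof is essentially the one Evans gives, so there is nothing to compare beyond observing that you have reproduced the textbook argument faithfully.
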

Note that the constant $C$ does not depend on $f$. Since $f$ is a component of the loss function, the above theorem implies the boundedness of $||u||_{L^2(\Omega)}$ under the exact boundary condition. Now we refer the theorem about higher regularity of the solution. That is, the solution of \eqref{eigenvalue} indeed lies in $H^2(\Omega)$. 
\begin{theorem}[Thm 4.3.1.4 in \cite{grisvard2011elliptic}] \label {Helmholtz_Estimate} Suppose that $\Omega$ is bounded open and its boundary $\partial \Omega$ is the union of finite line segments. Then, we can have the following estimate of $\|u\|_{H^2(\Omega)}$ when $u\in H^{1}(\Omega)$ is the solution of  \eqref{eigenvalue}. 
    \begin{align*}
        \| u\|_{H^{2}(\Omega)} \le C(\| f \|_{L^2(\Omega)} +\|u\|_{L^2(\Omega)}),
    \end{align*}
where $C$ is a constant depending on $k^2$ and $\Omega$.
\end{theorem}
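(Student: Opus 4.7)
The plan is to establish this $H^2$ a priori estimate by the classical three-stage elliptic regularity program: interior regularity, regularity near the relative interiors of the edges, and regularity near the corners of the polygon. The first two stages rely on the Nirenberg difference-quotient method applied to the weak formulation; the third is the delicate one, because the difference-quotient method breaks down at a vertex, and one must instead invoke the corner-singularity calculus of Kondratiev--Grisvard.

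For the interior estimate I would start from the weak formulation $\int_\Omega \nabla u \cdot \nabla v\, dx = \int_\Omega (k^2 u + f)\, v\, dx$ for all $v \in H_0^1(\Omega)$. Fix $\Omega' \subset\subset \Omega$ and a cutoff $\eta \in C_c^\infty(\Omega)$ with $\eta \equiv 1$ on $\Omega'$, and test with $v = -D_i^{-h}(\eta^2 D_i^h u)$, where $D_i^h$ is the standard difference quotient in the $x_i$ direction. Expanding via the discrete product rule and using $\|D_i^h w\|_{L^2} \le \|\partial_i w\|_{L^2}$, one absorbs first-order terms by Young's inequality to obtain $\|D_i^h \nabla u\|_{L^2(\Omega')} \le C(\|f\|_{L^2(\Omega)} + \|u\|_{H^1(\Omega)})$, and passing $h \to 0$ yields local $H^2$ control. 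For the edge estimate, cover each open edge $E$ (away from its endpoints) by small half-balls in which, after a rigid motion, the edge lies on $\{x_2 = 0\}$ and $\Omega$ in $\{x_2 > 0\}$. Differencing only in the tangential direction $x_1$ keeps the test function in $H_0^1$ (because $D_1^h u$ still vanishes on the edge), and yields $L^2$ bounds on $\partial_1^2 u$ and $\partial_1 \partial_2 u$; the equation then supplies the missing pure-normal derivative through $\partial_2^2 u = -\partial_1^2 u - k^2 u - f$.

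The main obstacle is the corner estimate. At a vertex $P$ of interior angle $\omega$, introduce polar coordinates $(r,\theta)$ with $\theta \in (0,\omega)$; the Dirichlet Laplacian on the infinite sector has a complete family of homogeneous solutions $r^{j\pi/\omega}\sin(j\pi\theta/\omega)$, $j \in \mathbb{N}$. A Mellin transform in $r$ (Kondratiev's method) decomposes $u$ near $P$ into a regular $H^2$ remainder plus a finite sum of these singular modes whose exponents are strictly less than $1$. When $\omega \le \pi$ the smallest exponent $\pi/\omega$ is at least $1$, no mode with $H^2$-singular derivatives appears, and $u$ is locally in $H^2$ with a quantitative bound in terms of $\|f\|_{L^2}$ and $\|u\|_{L^2}$, the latter combined with Poincaré and the coercivity supplied by Theorem \ref{Regularity_Spectrum} to absorb $\|\nabla u\|_{L^2}$. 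This step implicitly requires convex interior angles at every vertex; for reentrant corners the estimate as stated fails and would have to be recast in weighted Sobolev spaces or with explicit singular terms subtracted. Finally, a finite partition of unity subordinate to one interior patch, edge patches, and vertex patches glues the three local estimates into the claimed global bound $\|u\|_{H^2(\Omega)} \le C(\|f\|_{L^2(\Omega)} + \|u\|_{L^2(\Omega)})$.
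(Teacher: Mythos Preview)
The paper does not give its own proof of this statement; it is quoted verbatim from Grisvard's monograph and used as a black box in the equicoercivity proof for the Helmholtz case. Your three-stage outline (interior difference quotients, tangential difference quotients near straight edges, Kondratiev--Mellin corner analysis) is precisely the route Grisvard follows in Chapter~4, so there is nothing to compare against in the present paper. Your remark that the corner step implicitly needs every interior angle to satisfy $\omega\le\pi$ is correct and worth flagging: the paper's hypothesis ``boundary is a union of finite line segments'' does not by itself exclude reentrant corners, but the only domain actually used is the square $[-1,1]^2$, for which convexity holds. One small clean-up: to reduce the intermediate $\|u\|_{H^1}$ term to $\|u\|_{L^2}$ you do not need Theorem~\ref{Regularity_Spectrum}; testing the equation $-\Delta u=k^2u+f$ against $u$ and integrating by parts already gives $\|\nabla u\|_{L^2}^2=k^2\|u\|_{L^2}^2+\int_\Omega fu\,dx$, hence $\|\nabla u\|_{L^2}\le C(\|u\|_{L^2}+\|f\|_{L^2})$.
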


Suppose that the neural network $w$ satisfies the following. 
    \begin{equation} 
        \begin{aligned}
                           & \Delta w +k^2 w= f^{*}, & \text{in} &&\Omega,
                \\ &w=g^{*}, &\text{on} &&\partial \Omega.        
        \end{aligned}
    \end{equation}    
Under the whole training process, we assume that the uniform boundedness of $\|\frac{d}{dy}(g^{*}(a, y))\|_{L^2(\{a\}\times[b, d])}$, $\|\frac{d}{dy}(g^{*}(c, y))\|_{L^2(\{c\}\times[b,d])}$ and $\|\frac{d}{dx}(g^{*}(x, b))\|_{L^2([a,c]\times \{b\})},\\ \|\frac{d}{dx}(g^*(x, d))\|_{L^2([a,c]\times \{d\})}$ for $x\in (a,c)$. That is, there exists a constant $\epsilon>0$ such that
\begin{align*}
    \|\frac{d^{2}}{dy^2}g^{*}(a, y)\|_{L^2(\{a\}\times[b, d])}+ \|\frac{d^2}{dy^2}g^{*}(c, y)\|_{L^2(\{c\}\times[b,d])} +\\ \|\frac{d^2}{dx^2}g^{*}(x, b)\|_{L^2([a,c]\times \{b\})}+ \|\frac{d^2}{dx^2}g^*(x, d)\|_{L^2([a,c]\times \{d\})} \leq \epsilon
\end{align*}
Note that the assumption implies the fact that $\|g^{*}(a, y)\|_{L^{\infty}([a,c]\times \{b\})}$ is uniformly bounded during the training process. To explain it more precisely, Morrey's inequality yields that 
\begin{align*}
    &\|g^{*}(a, y)\|_{L^{\infty}([a,c]\times \{b\})} \leq C\|g^{*}(a, y)\|_{H^1([a,c]\times \{b\})}, \\& \|d/dy(g^{*}(a, y))\|_{L^\infty([a,c]\times \{b\})} \leq C\|d/dy(g^{*}(a, y))\|_{H^1([a,c]\times \{b\})} .
\end{align*}
with the fact that $g^{*}(a, y)$ should be a smooth function since it is consistent with the smooth neural network $w$ on the boundary. Therefore, the desired property of $\|g^{*}(a, y)\|_{L^{\infty}([a,c]\times \{b\})}$ is valid by the uniform boundedness of $\|g^{*}(a, y)\|_{L^2([a,c]\times\{b\})}$ which will be proved below.
Now we are ready to prove the main theorem \ref{equi} for the Helmholtz equation.

\begin{proof}
    We consider the case $k^2\notin \Sigma$ where $\Sigma$ denotes the spectrum of the Laplace operator. By assumption, there exists a harmonic polynomial $h_{1}(x, y)=a_{0}(x^2 -y^2)+a_{1}xy+a_2 x+a_3 y+a_4 \in H^{2}(\Omega)$ which satisfies $h=g_{1}^{*}$ on the four vertices $[a, c], [a, d], [b, c]$ and $[b,d]$. And  $h_2(x,y)$ defined as below lies in  $H^{2}(\Omega)$ with a uniformly bounded norm during the entire training process. 
    \begin{align*}
    h_2(x, y)=& \frac{x-c}{a-c}\cdot(g-h_1)(a, y)+\frac{x-a}{c-a}(g-h_1)(c, y)+\\& \frac{y-d}{b-d}(g-h_1)(x, b)+\frac{y-b}{d-b}(g-h_2)(x, d)\in H^2(\Omega). 
    \end{align*}
    Now we estimate the difference $e(:=w-h=w-(h_1+h_2)).$ First of all, $e$ should satisfy the following equation.
    \begin{equation*}
        \begin{aligned}
                           & \Delta e +k^2 e= f^{*}-\Delta h-k^2 h, & \text{in} &&\Omega,
                \\ &e=0, &\text{on} &&\partial \Omega.        
        \end{aligned}
    \end{equation*}
    Using Theorem \ref{Regularity_Spectrum}, we get the estimate for $\|e\|_{L^2(\Omega)}$ where $C$ is a constant which depends only on $k^2$ and $\Omega$. 
    \begin{align*}
        \|e\|_{L^{2}(\Omega)}\le & C\|f^{*}-\Delta h-k^2 h\|_{L^{2}(\Omega)} \\\leq& C(\|f^{*}\|_{L^2(\Omega)}+(1+k^2)\|h\|_{H^2(\Omega)})  
    \end{align*}
    Now we can conclude that $w\in H^2(\Omega)$ with theorem \ref{Helmholtz_Estimate}.
    \begin{align*}
        \|w\|_{H^{2}(\Omega)} \leq& \|e\|_{H^2(\Omega)} +\|h\|_{H^2(\Omega)}\\ 
        \leq& C(k^2, \Omega) (\|e\|_{L^2(\Omega)}+ \|f^{*}-\Delta h-k^2 h\|_{L^2(\Omega)})+\|h\|_{H^2(\Omega)}\\
        \leq& C(k^2, \Omega)(C+1)\|f^*\|_{L^2(\Omega)} +((1+k^2)CC(k^2, \Omega)+\\&(1+k^2) C(k^2 , \Omega)+1)\|h\|_{H^2(\Omega)}
    \end{align*}
    where $C(k^2, \Omega)$ denotes the constant for theorem \ref{Helmholtz_Estimate}. Now, suppose that $L_n(w) \leq r$ for some $r\in\mathbb{R}$ and $n\in \mathbb{N}$. That is, 
    \begin{align*}
    L_n(w)&=\|Nw-f\|_{L^2(\Omega)}^2 + \beta \|Tw-g\|_{L^2{(\partial\Omega})}^2+ \langle \lambda_n ,  Tw-g\rangle_{L^2{(\partial \Omega)}}\\
        &=\|f^{*}-f\|_{L^2(\Omega)}+\beta\|g^{*}-g\|^{2}_{L^2(\Omega)}+\langle \lambda_n ,  g^*-g\rangle_{L^2{(\partial \Omega)}}\\
        &\leq r.
    \end{align*}
    By Lemma \ref{Bound} which covers the uniform boundedness of $\|\lambda_{n}\|_{L^2(\Omega)}$ during the whole training process, the H\"olders inequality $\langle \lambda_n, Tw - g\rangle_{L^2(\partial\Omega)} \ge -\|\lambda_n\|_{L^2(\partial \Omega)}\|Tw - g\|_{L^2(\partial\Omega)}$ implies the fact that  $\|g^*-g\|_{L^2(\partial \Omega)}$ is bounded by a constant depending on $r$. To explain it more precisely, note that $L_{n}(w)$ is bounded by the quadratic equation for $\|g^*-g\|_{L^2(\partial \Omega)}$, and therefore $\|g^{*}-g\|_{L^2(\Omega)}$ cannot be large indefinitely for a fixed constant $r$. Thus, $\|f^{*}\|_{L^2(\Omega)}$ and $\|g^*\|_{L^2(\partial\Omega)}$ are also bounded by a constant that depends on $r$ for a fixed $f$ and $g$ through standard argument using the triangular inequality. In conclusion, $\{w|L_{n}{w} \le r $ for some $n\in \mathbb{N}\}$ should be contained in a bounded set in $H^{2}(\Omega)$.
\end{proof}
\textbf{Proof of Theorem \ref{equi} for the viscous Burgers equation}\\
The viscous Burgers equation reads: 
\begin{equation}\label{Burgers_eq}
    \begin{aligned}
        &\partial_{t} u +\frac{1}{2}\partial_{x}u^2-\nu \partial_{x}^2u=0, && (t,x) \in [0, T]\times\Omega,\\
        &u(0, x)=u_0 (x), && x\in \Omega,\\
        &u(t, a)=u(t, b)=0, && t \in [0, T],\\
    \end{aligned}
\end{equation}
where $\Omega$ is an interval $[a, b]$ and $T$ is a fixed constant. Without loss of generality, set $a=0$ and $b=1$. Here, we note the solution space that is necessary to deal with the existence and uniqueness of the solution. The definition of anisotropic Sobolev space $H^{1,2}(R)$ is as follows.
\begin{align*}
    H^{1,2}(R)=\{u\in L^2(R)|\partial_t u \in L^2(R), \partial_x u \in L^2(R), \partial_x ^2 u \in L^2(R) \},
\end{align*}
where $R=\Omega \times [0, T]$ and $\|u\|_{L^2}(R)=\|u\|_{L^2([0, T];L^2(\Omega))}$ denotes \\ $(\int_{0}^{T}\|u(t, \cdot)\|_{L^2(\Omega)}^2 dt)^{1/2}$. The existence and uniqueness theorem for equation \eqref{Burgers_eq} are as follows.
\begin{theorem}(Thm 1.2 in \cite{benia2016existence})
    For the initial condition $u_{0}(x)$ which is contained in $H_0^1 (\Omega)$, the unique weak solution of equation \eqref{Burgers_eq} exists in $H^{1,2}(R)$.
\end{theorem}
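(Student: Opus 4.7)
The plan is to prove existence via a Galerkin approximation combined with a priori energy estimates and compactness, and uniqueness via a Gronwall argument. This is the classical strategy for parabolic equations with quadratic nonlinearities, and the feature that makes it succeed for Burgers (unlike three-dimensional Navier--Stokes) is the one-dimensional Sobolev embedding $H^{1}(\Omega)\hookrightarrow L^{\infty}(\Omega)$, which allows control of cubic nonlinear contributions in the higher-order energy estimate.

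I would first set up the Galerkin scheme using the eigenfunctions $\{e_k\}_{k\geq 1}$ of the Dirichlet Laplacian on $\Omega=[0,1]$, which form an orthonormal basis of $L^{2}(\Omega)$ and an orthogonal basis of $H^{1}_{0}(\Omega)$. For each $N$, define $u_N(t,x)=\sum_{k=1}^N c_k^N(t)\,e_k(x)$ by requiring $u_N$ to satisfy the weak form of \eqref{Burgers_eq} tested against $e_1,\ldots,e_N$; this reduces to an ODE system for the coefficients $c_k^N$ with a locally Lipschitz quadratic nonlinearity, so Picard--Lindel\"of gives local existence of $u_N$. For the basic energy bound, I test against $u_N$ and observe $\int_{\Omega} u_N\partial_x u_N\cdot u_N\,dx = \tfrac{1}{3}\int_{\Omega}\partial_x(u_N^3)\,dx = 0$ by the Dirichlet boundary condition, which gives
\begin{equation*}
\tfrac{1}{2}\tfrac{d}{dt}\|u_N\|_{L^{2}}^{2} + \nu\|\partial_x u_N\|_{L^{2}}^{2} = 0,
\end{equation*}
so $u_N$ is uniformly bounded in $L^{\infty}(0,T;L^{2})\cap L^{2}(0,T;H^{1}_{0})$. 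To reach the full $H^{1,2}$ regularity, I would then test against $-\partial_{xx}u_N$ and estimate the cubic contribution $\int u_N\partial_x u_N\cdot\partial_{xx}u_N\,dx$ by using $\|u_N\|_{L^{\infty}}\leq C\|u_N\|_{H^{1}}$ (the 1-D embedding) together with Young's inequality to absorb a fraction of $\|\partial_{xx}u_N\|_{L^{2}}^{2}$ into the viscous term. Combined with $u_{0}\in H^{1}_{0}$, this yields $u_N$ bounded in $L^{\infty}(0,T;H^{1}_{0})\cap L^{2}(0,T;H^{2}\cap H^{1}_{0})$, and via the equation $\partial_t u_N$ is then bounded in $L^{2}(0,T;L^{2})$.

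With these uniform bounds in hand, the Aubin--Lions lemma yields a subsequence converging strongly in $L^{2}(0,T;H^{1}_{0})$ to a limit $u\in H^{1,2}(R)$, which is precisely what is needed to pass to the limit in the nonlinear term $u_N\partial_x u_N$; weak limits handle the linear pieces, and the initial datum is attained via the embedding $H^{1,2}(R)\hookrightarrow C([0,T];H^{1}_{0})$. For uniqueness, I would take two solutions $u_1,u_2$, set $w=u_1-u_2$, rewrite $u_1\partial_x u_1-u_2\partial_x u_2 = u_1\partial_x w+w\partial_x u_2$, test against $w$, and deduce $\tfrac{d}{dt}\|w\|_{L^{2}}^{2}\leq C(t)\|w\|_{L^{2}}^{2}$ with $C(t)$ depending on $\|u_1\|_{L^{\infty}}$ or $\|\partial_x u_2\|_{L^{\infty}}$ (both a.e.\ finite by the $H^{1,2}$ regularity and the 1-D embedding), and close by Gronwall. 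The main obstacle will be the higher-order energy estimate: unlike the $L^{2}$ level, the cubic contribution $\int u_N\partial_x u_N\cdot\partial_{xx}u_N\,dx$ does not vanish by symmetry and must be carefully dominated by the diffusion, which is the step where the one-dimensional geometry is essential.
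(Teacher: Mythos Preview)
The paper does not supply its own proof of this statement: it is quoted verbatim as Theorem~1.2 of \cite{benia2016existence} and used as a black box to justify that the target functional $L$ has a unique minimizer in $H^{1,2}(R)$. There is therefore nothing in the paper to compare your argument against.

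That said, your proposal is the standard and correct route to this result. The Galerkin scheme with Dirichlet--Laplacian eigenfunctions, the vanishing of the cubic term at the $L^{2}$ level via $\int u\,u_{x}\,u\,dx=0$, the higher-order estimate obtained by testing with $-\partial_{xx}u_{N}$ and controlling $\int u_{N}\,\partial_{x}u_{N}\,\partial_{xx}u_{N}\,dx$ through the one-dimensional embedding $H^{1}\hookrightarrow L^{\infty}$ plus Young's inequality, Aubin--Lions compactness to pass to the limit in the nonlinearity, and the Gronwall argument for uniqueness---all of these are exactly the ingredients one expects, and each step goes through as you describe. One small comment: in the uniqueness step you invoke $\|\partial_{x}u_{2}\|_{L^{\infty}}$ as an alternative coefficient in the Gronwall inequality, but $H^{1,2}(R)$ regularity gives $u\in L^{2}(0,T;H^{2})$, which only ensures $\partial_{x}u\in L^{2}(0,T;L^{\infty})$ in one dimension, not $L^{\infty}(0,T;L^{\infty})$; this is still enough for Gronwall since the resulting coefficient is integrable in time, but the cleaner choice is the first option you list, namely $\|u_{1}(t)\|_{L^{\infty}}$, which is bounded uniformly in $t$ via $u\in L^{\infty}(0,T;H^{1}_{0})$.
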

We would like to introduce an assumption before we discuss the equi-coercivity. Let $w$ denote the smooth approximation of the solution $u$ such that
\begin{equation*}
    \begin{aligned}
    &\partial_{t}w+w\partial_{x}w-\nu \partial_{x}^2 w=f, && (t,x)\in [0, T]\times\Omega,
    \\&w(0, x)= g(x),  && x\in \Omega,   
    \\&w(t, a)=h_1(t), w(t, b)=h_2(t), && t\in [0, T].  
    \end{aligned}
\end{equation*} 
We assume the uniform boundedness of $\|\partial _t h_1(t)\|_{L^2([0, T])}$ and $\|\partial_t  h_2(t)\|_{L^2([0,T])}$ for the entire training process. That is, there exists a constant $\epsilon>0$ such that the boundary functions $g(x), h_1(t)$ and $h_2(t)$ satisfy the following.
\begin{align*}
    \|\frac{d}{dx} g(x)\|_{L^2([0, T])}, \|\frac{d}{dt} h_1(t)\|_{L^2([0,T])}, \|\frac{d}{dt} h_2(t)\|_{L^2([0,T])} \le \epsilon.
\end{align*}

Now, we begin the proof of \ref{equi} for the viscous Burgers equation.
\begin{proof}

    Now, we define the function $e(t,x)$ by $w(t,x)-I(t, x)$ when the interpolation function $I(t,x)\in H^{1,2}(R)$ denotes $h_1(t)\cdot(1-x) +h_2(t) \cdot x$. Then, $e(t, x)$ must be contained in $H^{1,2}(R)$ such that 
    \begin{equation}\label{Burgers_NeuralNet}
        \begin{aligned}
            &\partial_{t}e+e\partial_{x}e+e\partial_{x}I+I\partial_{x}e-\nu \partial_{x}^{2}e=f^*, && (t,x)\in [0, T]\times\Omega,
            \\&e(0, x)= g^*,  && x\in \Omega,   
            \\&e(t, a)= 0, e(t, b) = 0, && t\in [0, T],  
        \end{aligned}
    \end{equation}
    where $f^*=f-\partial_{t}I-I\partial_{x}I-\nu \partial_{x}^{2}I$ and $g^*=g(x) - g(0)(1-x)-g(1)x$. On the one hand, Morrey's inequality gives the inequality $g(0)=h_1(0)\leq \|h_1(t)\|_{L^{\infty}[0,T]} \leq C_{1} \|h_1(t) \|_{H^1([0,T])}$ so that $g^* \in H^1(\Omega)$ when $h_{1}(t) \in L^2([0, T])$.  By multiplying $e$ to the first equation and integrating over $\Omega$ with integration by parts, we derive the following equation. 
    \begin{align*}
        &\frac{1}{2} \frac{d}{dt}\|e(t, \cdot)\|^2_{L^2(\Omega)} + \nu \|e_x (t, \cdot)\|_{L^2(\Omega)}^{2}\\&=\int_{\Omega} f^{*} e dx +\nu \int_{\partial \Omega} e\partial_{x}e \cdot ndS - \int _{\Omega} e^2 \partial_{x} e dx - \int_{\Omega} e^2\partial_{x}I dx -\\&\int_{\Omega} Ie\partial_{x} edx.  
    \end{align*}
    By H\"older's inequality and scaled version of Cauchy's inequality with small $\eta>0$, we have
    \begin{align*}
        &\int_{\Omega} f^{*} e dx \\&\le \int_{\Omega} \frac{(f^{*})^2}{2} +\frac{ e^{2}}{2}dx = \frac{1}{2}(\|f^{*}(t, \cdot)\|^{2}_{L^2(\Omega)} +\|e(t, \cdot)\|^{2}_{L^2(\Omega)}).\\
        &\int_{\partial \Omega} e\partial_x e \cdot n dS = e(t, 1)\partial_{x}e(t,1)- e(t,0)\partial_{x} e(t, 0) = 0\\
        &\int_{\Omega}e^2 \partial_{x} e dx= \int_{\partial\Omega} \frac{1}{3} e^{3}\cdot n dS= 0\\
    \end{align*}
    \begin{align*}
        &\int_{\Omega}e^{2} \partial_{x} I dx \\&\le \int_{\Omega} \frac{e^{2}}{2}+\frac{ e^{2}(\partial_{x} I)^{2}}{2} dx \le\frac{\|e(t, \cdot)\|^{2}_{L^2(\Omega)}}{2}+\frac{\|e(t, \cdot)\|_{L^2(\Omega)}^{2}\|\partial_{x} I\|_{L^2(\Omega)}^2}{2}\\
        &\int_{\Omega} Ie\partial_{x} e dx \\&\le \int_{\Omega} \frac{I^2 e^2}{4\eta} + \eta (\partial_{x} e)^{2} dx \le \frac{1}{4\eta} \|I(t,\cdot)\|_{L^2(\Omega)}^{2}\|e\|_{L^2(\Omega)}^{2}+\eta \|\partial_{x} e\|^{2}_{L^2(\Omega)}
    \end{align*}

    Since $\eta>0$ is arbitrary small, we may assume that $2\eta<\nu$. Combining all estimates, we have the following upper bound of $d/dt(||w||_{L^2(\Omega)}^2)$.
    \begin{align} \label{Burgers_gronwall_1}
        &\frac{d}{dt} \|e\|_{L^2(\Omega)}^2 +\nu \|\partial_x e\|_{L^2(\Omega)}^{2} \\&\leq\nonumber  (2+\|\partial_{x} I\|_{L^2(\Omega)}^{2}+\frac{1}{2\eta}\|I\|_{L^2(\Omega)}^{2})\|e\|_{L^2(\Omega)}+\|f^{*}\|^{2}_{L^2(\Omega)}. 
    \end{align}
    Then, by the Gr\"onwall inequality, 
    \begin{align*}
        ||e||_{L^2([0,T]\times \Omega)}  &\leq  T\sup_{0\leq t \leq T}\|e\|_{L^2(\Omega)}^2 \\ &\leq T\exp(2T+\|\partial_{x}I\|_{L^2([0, T]\times \Omega)}+\frac{1}{2\eta}||I||^{2}_{L^2([0, T] \times \Omega)}) \cdot \\&\bigg( \|g^*\|_{L^2(\Omega)}^2 +||f^*||_{L^2([0, T] \times \Omega)} \bigg).
    \end{align*}
    for some constant $C$ depending only on $r, T$.
    Now we consider the case that the neural network $w$ whose image by $L_n(w)$ are bounded for some $n\in\mathbb{N}$. Suppose that 
    \begin{align*}
        L_{n}(w) =& L(w) + \langle \lambda_1(x), w(0, x)-u_0(x)\rangle_{L^2(\Omega)} + \\&  \langle \lambda_2(t), w(t,a)   \rangle_{L^2([0, T])} +\langle \lambda_3(t), w(t,b) \rangle_{L^2([0, T])}  \le r, 
    \end{align*}
    where
    \begin{align*}
        L(w) =& \|w_{t}+ww_{x}-\nu w_{xx}\|_{L^2([0,T]\times\Omega)}^2 + \|w(0, x)-u_0(x)\|_{L^2(\Omega)}^2 +\\& \|w(t,a)\|_{L^2([0,T])}^2 + \|w(t,b)\|_{L^2([0,T])}^2\\
        =& \|f\|_{L^2([0,T]\times\Omega)}^2 + \|g - u_0\|_{L^2(\Omega)}^2 + \|h_1\|_{L^2([0,T])}^2 + \|h_2\|_{L^2([0,T])}^2.
    \end{align*}
    for some $r > 0$. Since $\|\lambda_1(x)\|_{L^2(\Omega)}, \|\lambda_2(t)\|_{L^2([0, T])}$ and $\|\lambda_3(t)\|_{L^2([0, T])}$ are uniformly bounded during the training process as in \ref{Bound}, H\"older's inequality implies that $L(w)$ should be bounded by some constant $C$ which depends only on $r$. To explain this more precisely, H\"older's inequality yields 
    \begin{align*}
        &\langle \lambda_1(x), w(0, x)-u_0(x)\rangle_{L^2(\Omega)} + \langle \lambda_2(t), w(t,a)   \rangle_{L^2([0, T])} + \\&\langle \lambda_3(t), w(t,b) \rangle_{L^2([0, T])}
        \\&\geq -\|\lambda_{1}(x) \|_{L^2(\Omega)}\|w(0, x)- u_0(x)\|_{L^2(\Omega)}-\\&\|\lambda_2(t)\|_{L^2([0,T])}\|w(t,a)\|_{L^2([0, T])} -\| \lambda_{3}(t)\|_{L^2([0, T])} \|w(t, b)\|_{L^2([0, T])}.  
    \end{align*}
    Note that the last equation is of first order with respect to $\|g - u_0\|_{L^2(\Omega)}, \|h_1\|_{L^2([0,T])}$ and $\|h_2\|_{L^2([0,T])} $. Since $L(w)$ contains the second order terms, the three norms should be bounded by some constant which depends only on $r$ and the desired properties are followed. 
    \par Now we consider the composition of $w=(w-I)+I=e+I$. Then $\|e\|_{L^2([0, T]\times \Omega)}$ is bounded by a constant depending on $r$ due to the boundedness of three terms $\| g \|_{L^2(\Omega)}, \|h_1\|_{L^2([0,T])}$ and $\|h_2\|_{L^2([0,T])} $ with the above observation and Gr\"onwall's inequality. Note that $\| I \|_{L^2([0, T] \times \Omega)}$ has the same form of upper bound by construction. In conclusion, the usual triangular inequality gives the boundedness for $||w||_{L^2([0, T] \times \Omega)}$, and therefore, the set $M(r)=\{w\in L^2(\Omega)\cap| L_{n}(w) \leq r $ for some $ n\in\mathbb{N}\}$ is bounded in $L^2([0, T]\times \Omega)$ for $r>0$. 
    
    The set $M(r)$ is also bounded in $L^{2}([0,T]; H^1(\Omega))$ by the above estimate. If we integrate \eqref{Burgers_gronwall_1} over $[0, T]$ and considering the previous discussion about the boundedness of $\|e\|_{L^{2}(\Omega)}^{2}$, the term $\int_{0}^{T}\|\partial_{x} e\|_{L^2(\Omega)}^{2}$ should be bounded by a constant which depends only on $r$ when $w\in M(r)$.
    To derive further properties about the boundedness of $M(r)$, we multiply the equation (\ref{Burgers_NeuralNet}) by $-\partial_{x}^{2} e$. Using integration by parts, the following holds.
    \begin{align*}
        &\frac{1}{2} \frac{d}{dt}\|\partial_{x}e\|^{2}_{L^2(\Omega)}+\nu\|\partial_{x}^{2}e\|_{L^2(\Omega)}^{2}\\&=\nu \int_{\partial\Omega}\partial_{t} e\partial_{x} e\cdot n dS +\int_{\Omega} e\partial_{x}e\partial_{x}^{2}e dx +\int_{\Omega} e\partial_{x} I \partial_{x}^{2} e dx +\\&\int_{\Omega} I \partial_{x}e \partial_{x}^{2} e dx -\int_{\Omega} f^{*}\partial_{x}^2 edx.
    \end{align*}
    Using Morrey's inequality and H\"older's inequality, we can estimate the right side of the equation as follows. For an arbitrary small $\eta>0$, 
    \begin{align*}
        \int_{\partial\Omega} \partial_{t}e\partial_{x} e \cdot n dS &= \partial_{t}e(t, a)\partial_{x}e(t,a)-\partial_{t}e(t, b)\partial_{x}e(t,a)=0,
        \\ \int_{\Omega} e \partial_{x} e \partial_{x}^2 e dx &\leq  \frac{1}{4\eta}\|e\partial_{x}e\|_{L^2(\Omega)}^{2}+\eta\|\partial_{x}^2 e\|_{L^2(\Omega)}^{2} \\&\leq \frac{1}{4\eta} \|e\|^{2}_{L^{\infty}(\Omega)}\|\partial_{x}e\|_{L^2(\Omega)}^{2} +\eta\|\partial_{x}^{2}e\|_{L^2(\Omega)}^{2},
        \\ \int_{\Omega}e\partial_{x}I\partial_{x}^{2}e dx &\leq  \frac{1}{4\eta}\|e\partial_{x}I\|_{L^2(\Omega)}^{2}+\eta\|\partial_{x}^2 e\|_{L^2(\Omega)}^{2} \\&\leq \frac{1}{4\eta}\|e\|_{L^{\infty}(\Omega)}^{2}\|\partial_{x} I\|_{L^2(\Omega)}^{2}+\eta\|\partial_{x}^{2}e\|_{L^2(\Omega)}^{2},
        \\ \int_{\Omega}I\partial_{x}e\partial_{x}^{2}e dx &\leq \frac{1}{4\eta}\|I\partial_{x}e\|_{L^2(\Omega)}^{2}+\eta\|\partial_{x}^{2} e\|_{L^2(\Omega)}^2 \\&\leq \frac{1}{4\eta}\|I\|_{L^{\infty}(\Omega)}^{2}\|\partial_{x}e\|_{L^2(\Omega)}^2+\eta\|\partial_{x}^{2}e\|_{L^2(\Omega)}^{2},
        \\ \int_{\Omega}f^{**}\partial_{x}^{2}e dx &\leq \frac{1}{4\eta} \|f^{**}\|_{L^2(\Omega)}^{2} +\eta\|\partial_{x}^{2} e\|_{L^2(\Omega)}^{2},
        \\ \|e\|_{L^{\infty}(\Omega)}^{2} \le &  C_{1}\|e\|_{H^1(\Omega)}^{2}=  C_{1}\|\partial_{x} e\|_{L^2(\Omega)}^{2} \\ \|I\|_{L^{\infty}(\Omega)}^{2} \le & C_{1}\|I\|_{H^{1}(\Omega)}^{2}.
     \end{align*}
     Since $\eta>0$ is an arbitrary constant, we may assume that $\nu>8\eta$.  Consequently, we can derive the following inequality.
     \begin{align}\label{Burgers_Gronwall_H2}
         &\frac{d}{dt}\|\partial_{x} e\|^{2}_{L^2(\Omega)}+\nu \|\partial_{x}^{2}e\|_{L^2(\Omega)} \nonumber 
         \\&\leq (\frac{C_1}{2\eta}\|\partial_{x}e\|^{2}_{L^2(\Omega)}+\frac{C_{1}}{4\eta}\|I\|_{H^1(\Omega)}^{2})\|\partial_{x}e\|_{L^2(\Omega)}^{2} + \frac{1}{4\eta}\|f^{**}\|_{L^2(\Omega)}^{2}.
     \end{align}
     By the above discussion, the terms $\|\partial_{x} e\|_{L^2([0, T]\times \Omega)}, \|\partial_{x} g^{*}\|_{L^2(\Omega)}$, $\|\partial_{x}I\|_{L^2(\Omega)}$, and $\|f^{**}\|_{L^2(\Omega)}$ are bounded by a constant which depends only on $r, T$ and an uniform constant $\epsilon$ in the assumption. Therefore, by Gr\"onwall's inequality, we can conclude that there exists a constant $c(r, T,\Omega)$ such that 
     \begin{align*}
         \sup_{0\le t\le T}\|\partial_{x} e\|_{L^2(\Omega)}^{2}  \leq& \exp(\int_{0}^{T}\frac{C_1}{2\eta}\|\partial_{x}e\|_{L^2(\Omega)}+\frac{C_{1}}{4\eta}\|I\|_{H^1(\Omega)}dt)\cdot\\&(\|\partial_{x}g^{*}\|_{L^2(\Omega)}+\int_{0}^{T}\frac{1}{4\eta}\|f^{**}\|_{L^2(\Omega)}^{2}dt\le C(r, T, \epsilon)
     \end{align*}
     Now, if we integrate the inequality (\ref{Burgers_Gronwall_H2}) over $[0, T]$, the term $\|\partial_{x}^{2}e\|_{L^2([0, T]\times \Omega)}$ should be bounded since the terms $\|\partial_{x}e(0,\cdot)\|_{L^2(\Omega)}^{2}$ and $\|\partial_{x}e(T, \cdot)\|_{L^2(\Omega)}^{2}$ are bounded by $\sup_{0\le t\le T}\|\partial_{x}e\|_{L^2(\Omega)}^{2} $. Finally, the triangular inequality for \\$\|\partial_{x}^{2} w\|_{L^2([0, T]\times\Omega)}$ yields the boundedness of $\|\partial_{x}^{2}w\|_{L^2([0,T]\times\Omega)}$. On the one hand, 
     \begin{align*}
         \partial_{t} w= -w \partial_{x}w+\nu\partial_{x}^{2}w+f
     \end{align*}
    where 
    \begin{align*}
         \|w\partial_{x}w\|_{L^2([0, T]\times \Omega)}&\le\|\partial_{x}w\|_{L^2([0, T]\times \Omega)}\|w\|_{L^2([0, T]\times \Omega)}
         \\&\le T\sup_{0\le t \le T}\|\partial_{x}w\|_{L^2(\Omega)}\|w\|_{L^2([0, T]\times\Omega)} \\&\le T(\sup_{0\le t \le T }\|\partial_{x} I\|+\sup_{0\le t \le T}\|\partial_{x} e\|)\|w\|_{L^2([0, T]\times\Omega})^{2}.
    \end{align*}
    Therefore, $\partial_{t}w$ lies in ${L^2([0, T]\times \Omega})$ and we can conclude that $\|w\|_{L^2([0, T];H^2(\Omega))}$ and $\|\partial_{t} w\|_{L^2([0, T];L^2(\Omega)}$ are bounded by a constant which only depends on $r, T,\epsilon$,
\end{proof}

\textbf{Proof of Theorem \ref{equi} for Klein--Gordon equation}\\
The Klein--Gordon equation reads:
\begin{equation}\label{Klein_eq}
    \begin{aligned}
        &\partial_{t}^2 u - \alpha \partial_{x}^2 u+\beta u +\gamma u^k =f(t, x), &&(t, x)\in [0, T]\times\Omega\\
        &u(0, x)=g_{1}(x), && x \in \Omega\\
        &\partial_{t} u(0, x)=g_{2}(x), && x \in \Omega\\
        &u(t, a)=h_{1}(t), u(t, b)=h_{2}(t) ,&&  t\in [0, T]\\
    \end{aligned}
\end{equation}
where $\alpha, \beta, \gamma$ are positive constants with an odd number $k>0$. We consider the case where $\Omega$ is a one dimensional interval $[0, 1]$ and $k=3$. We assume the uniform boundedness of $\partial_{x} g_{1}^{*}, \partial_t h_1^{*}, \partial_t ^2 h_1^{*}$ and $\partial_t h_2^{*}, \partial_t ^2 h_2^{*}$ where $h_1^*,h_2^*$ and $g_1^*$ are boundary and initial conditions of the neural network solution $w$. The details are as follows.
\begin{assumption}
    There exists a constant $\epsilon>0$ such that $\int_{[0, T]} (\partial_{x} g_1)^{2}+(\partial_t h_1)^2 +(\partial_t ^2 h_1)^2+(\partial_t h_2)^2+(\partial_t ^2 h_2)^2 dt \le \epsilon$.
\end{assumption}
\begin{proof}
    Suppose that the neural network solution $w$ satisfies the following. 
    \begin{equation*}
        \begin{aligned}
            &\partial_{t}^2 w-\alpha \partial_{x}^2 w+\beta w +\gamma w^3 =f^{*}(t, x), &&(t, x)\in [0, T]\times\Omega\\
            &w(0, x)=g_{1}^{*}(x), && x \in \Omega\\
            &\partial_{t} w(0, x)=g_{2}^{*}(x), && x \in \Omega\\
            &w(t, a)=h_{1}^{*}(t), w(t, b)=h_{2}^{*}(t) ,&&  t\in [0, T]\\
        \end{aligned}
    \end{equation*}
    We again consider the interpolation function $I(t, x)=h_1(t)(1-x) +h_2(t)x$. Note that the norms of $I(t,x), \partial_t I(t,x)$ and $\partial_t^2 I(t, x)$ are bounded by a constant multiple of $\sum_{0\leq \alpha\le 2}||\partial_{t}^{\alpha}h_1||_{L^2([0,T])}+||\partial_{t}^{\alpha}h_2||_{L^2([0,T])}$ in  $L^2([0, T]; H^2(\Omega))$. Furthermore, the integrability of the square of nonlinear term $I(t,x)^{k}$ can be found as follows by applying Morrey's inequality. 
    \begin{align*}
        \int_{[0,T]\times \Omega} I^{2k}(t, x) dtdx &\leq ||I||^{2k-2}_{L^{\infty}([0,T]\times \Omega)}||I||^{2}_{L^2([0,T]\times \Omega)}\\&\leq C_1 ||I||^{2k-2}_{H^{1}([0, T] \times\Omega)}||I||^{2}_{L^2([0,T]\times \Omega)}  
    \end{align*}
    where $C_1$ is a constant depending only on $T$. On the other hand, Morrey's inequality yields that there exists a constant $C$ such that 
    \begin{align*}
        h_{1}(0), h_2(0) &\leq ||h_{1}||_{L^{\infty}([0, T])}+||h_2||_{L^{\infty}([0, T])} \\&\leq C ||h_1||_{H^1([0,T])} +||h_2||_{H^1([0, T])}, \\
        \partial_{t} h_{1}(0), \partial_{t} h_2(0) &\leq ||\partial_{t}h_{1}||_{L^{\infty}([0, T])}+||\partial_{t} h_2||_{L^{\infty}([0, T])} \\&\leq C ||h_1||_{H^2([0,T])} +||h_2||_{H^2([0, T])}.
    \end{align*}
    Thus, we can conclude that $u(0, x)-I(0, x)$ should be included in $L^2(\Omega)$ with $u(0,0)-I(0,0)=u(0,1)-I(0,1)=0$. Let $e(t,x):=w(t, x)-I(t,x)$. Now we estimate the terms in the equation below. 
    \begin{equation}\label{klein_gordon_approxim}
        \begin{aligned}
            &\partial_{t}^2 e-\alpha \partial_{x}^2 e+\beta e +\gamma(3I^2e +3Ie^2+e^3)=f^{**}, &&(t, x)\in [0, T]\times\Omega,\\
            &e(0, x)=g_{1}^{**}(x), && x \in \Omega,\\
            &\partial_{t} e(0, x)=g_{2}^{**}(x), && x \in \Omega,\\
            &e(t, a)=0, e(t, b)=0 ,&&  t\in [0, T],\\
        \end{aligned}
    \end{equation}
    where 
    \begin{align*}
        &f^{**}(t,x) = f^{*}(t, x)- (\partial_{t}^{2} I+ \alpha\partial_{x}^{2}I -\beta I -\gamma I^{3}), \\
        &g_{1}^{**}(x) = g_1^{*}(x)- g_1^{*}(0)(1-x)-g_1^{*}(1)x, \;\text{and} \\ &g_{2}^{**}(x) = g_2^{*}(x) -g_2^{*}(0)(1-x)- g_2^{*}x.
    \end{align*}
    Note that the $L^2$ norms of $g_1^{**},g_2^{**} \in L^2(\Omega)$ and $f^{**}\in L^2([0,T]\times\Omega)$ are bounded by $\|g_1\|_{L^2(\Omega)}, \|g_{2}\|_{L^2(\Omega)}$, $\|f\|_{L^2(\Omega)}$ and $||\partial_{t}^{\alpha}h_{i}||_{L^2([0,T])}$ through the above discussion involving Morrey's inequality. Let us define the functional $E(u)$ as below. We aim to estimate the following for arbitrary small $\eta>0$. 
    \begin{align*}
        E(u)=\int_{\Omega} |\partial_{t} e| ^2 +\alpha|\partial_{x} e|^2+\beta|e|^2+\gamma|e|^4 dx.
    \end{align*}
    Then, the following holds due to the fact that $e$ is sufficiently differentiable. 
    \begin{align*}
        &\frac{1}{2}\frac{d}{dt} E(u) \\
        &=\int _{\Omega} \partial_{t}^{2}e \partial_{t} e + \alpha \partial_{x}e\partial_{x}\partial_{t}e+\beta e\partial_{t}e +\gamma e^3 \partial_{t} e dx 
        \\&=\int _{\Omega} \partial_{t}^{2}e \partial_{t} e - \alpha \partial_{x}^{2}e\partial_{t}e+\beta e\partial_{t}e +\gamma e^3 \partial_{t} e dx  -\int_{\partial \Omega} \partial_{x}e\partial_{t}e \cdot n dS
        \\&= \int_{\Omega} (f^{**}-\gamma(3I^2e+3Ie^2))\partial_{t} e dx .
    \end{align*}
    With Young's inequality and H\"older's inequality, we can bound the three terms on the right hand side.
    \begin{align*}
         \int_{\Omega} f^{**}\partial_{t} e dx &\leq \frac{1}{2} |f^{**}|_{L^2(\Omega)}^{2}+\frac{1}{2} \int_{\Omega}|\partial_{t}e|^{2}dx,
        \\ \int_{\Omega} I^2 e \partial_{t} e dx &\leq \int_{\Omega}  I^4 e^2 dx +\frac{1}{4}\int_{\Omega} |\partial_{t}e|^{2} dx 
        \\ &\leq  \int_{\Omega}e^{4}dx+\frac{1}{4}(\|I^{4}\|^{2}_{L^2(\Omega)}+\int_{\Omega}|\partial_{t}e|^{2} dx),
        \\ \int_{\Omega} Ie^2 \partial_{t} e dx &\leq \int_{\Omega} I^2 e^{4} dx + \frac{1}{4}\int_{\Omega}|\partial_{t}e|^{2} dx \\&\leq \|I\|_{L^{\infty}(\Omega)}^{2}\int_{\Omega}e^{4} dx + \frac{1}{4}\int_{\Omega}|\partial_{t}e|^{2} dx.
    \end{align*}
    Consequently, we have the following inequality.
    \begin{align*}
        \frac{d}{dt}E(u) \leq (1+\frac{3}{2}\gamma+\|I\|_{L^{\infty}(\Omega)}^{2})E(u)+\frac{3}{4}\gamma\|I^{4}\|_{L^2(\Omega)}^{2}+\|f^{**}\|_{L^2(\Omega)}
    \end{align*}
    Finally, Gr\"onwall's inequality implies that 
    \begin{align*}
        \sup_{0\le t \le T}E(u)\leq & (\int_{0}^{T} 1+\frac{3}{2}\gamma+\|I\|^{2}_{L^{\infty}(\Omega)} dt) (\|g_{2}^{**}\|_{L^2(\Omega)}^{2}+\alpha\|\partial_{x}g_{1}^{**}\|_{L^2(\Omega)}^{2}\\&+\beta\|g_{1}^{**}\|_{L^{2}(\Omega)}^{2}+\gamma\|g_{1}^{**}\|_{L^4(\Omega)}^{4}+\int_{0}^{T} \|f^{**}\|_{L^2(\Omega)} )
    \end{align*}
    Note that $\int_{0}^{T} \|I\|_{L^{\infty}(\Omega)}^{2}$ can be bounded by a constant multiple of $\|h_{1}\|^{2}_{H^{2}([0, T])}+\|h_{2}\|^{2}_{H^2([0, T])}$ by the above discussion with Morrey's inequality. The Poincar\'e inequality yields an upper bound of $\|g_{1}^{**}\|_{L^4(\Omega)}$ which is a constant multiple of $\|g_{1}^{**}\|_{H^1(\Omega)}^{2}$.
    Now we suppose that the neural network $w$ such that $L_n(w)\leq r$ for some $n\in\mathbb{N}$. That is,
    \begin{align*}
        L_{n}(w) = &L(w) + \langle \lambda_1(x), w(0, x)-g_1(x)\rangle_{L^2(\Omega)} +\\&\langle \lambda_2(t), w_{t}(0, x)-g_2(x) \rangle_{L^2(\Omega)}+\\&\langle \lambda_3(t), w(t,a)-h_1(a)   \rangle_{L^2([0, T])} +\\& \langle \lambda_4(t), w(t,b)-h_2(b) \rangle_{L^2([0, T])}\le r,
    \end{align*}
    where
    \begin{align*}
        L(w) =& \|f-f^*\|_{L^2([0,T]\times\Omega)}^2 + \|g_1-g_1^{*}\|_{L^2(\Omega)}^2 + \|g_2-g_2^{*}\|_{L^2(\Omega)}^2+\\&\|h_1-h_1^{*}\|_{L^2([0,T])}^2 + \|h_2-h_2^{*}\|_{L^2([0,T])}^2.
    \end{align*}
    for some $r > 0$. If we develop similar arguments as before, we can conclude that the set $\{w\in L^2(\Omega)| L_{n}(w) \leq r $ for some $ n\in\mathbb{N}\}$ should be bounded in $H^{1,1}([0, T]\times \Omega)$.    
\end{proof}

\Bound*

\begin{proof}
    For an arbitrary small $\varepsilon>0$, there exists a neural network with width $n\in \mathbb{N}$ such that $\|Nu -f\|_{L^2(\Omega)}<\varepsilon$ by Theorem \ref{universal}. Since trace operator is continuous, $\|Tu_n -Tg\|_{L^2(\partial \Omega)}=\|Tu_{n}-Tu\|_{L^2(\partial \Omega)}$ is also bounded by a constant multiple of $\varepsilon$. Then, by the definition of quasi-minimizer, 
    \begin{align*}
        L_n (u_{n}) &\le \beta||Tu_{n}-g||_{L^2(\partial \Omega)}^2 +\langle \lambda_n, Tu_{n}-g \rangle_{L^2(\partial\Omega)}
        \\&\le \delta_{n}+O(\varepsilon).   
    \end{align*}
    We have, 
    \begin{align*}
        \|\lambda_{n+1}\|_{L^2(\partial\Omega)} &= \langle \lambda_{n+1}, \lambda_{n+1}\rangle_{L^2(\partial\Omega)} \\
        =&\langle \lambda_n + \eta_{\lambda}(Tu_n -g), \lambda_n + \eta_{\lambda}(Tu_n -g) \rangle_{L^2(\partial \Omega)} \\
        =&\langle \lambda_n, \lambda_n \rangle_{L^2(\partial \Omega)}+2\eta_{\lambda}\langle \lambda_n, Tu_n- g \rangle_{L^2(\partial \Omega)}+\\& \eta_{\lambda}^2\langle Tu_n -g, Tu_n -g \rangle_{L^2(\partial \Omega)}
        \\\leq & \| \lambda_n \|_{L^2(\partial \Omega)}
        +2\eta_{\lambda}\delta_n +(\eta_{\lambda}^2-2\eta_{\lambda}\beta)||Tu_n - g||_{L^2(\partial \Omega)}^2
    \end{align*}
    Therefore, if $\beta$ is sufficiently large so that $\beta > \frac{1}{2} (\eta_{\lambda} + \frac{2\delta_n}{\|Tu_n-g\|_{L^2(\partial\Omega)}})$ for all $n\in\mathbb{N}$, then $\{\|\lambda_n\|_{L^2(\partial \Omega)}\}_{n\in\mathbb{N}}$ is a decreasing sequence. In conclusion, $\left\{\lambda_n\right\}$ must be a bounded sequence in $L^2(\partial \Omega)$.
\end{proof}

\textbf{Proof Sketch of equi-coercivity for linear elliptic PDE}
For $Nu:=-\sum_{i, j=1}^{n} (a^{ij}(x)u_{x_i})_{x_j}+\sum_{i=1}^{n}b^{i}(x)u_{x_i}+c(x)u$, we consider a class of PDEs that reads : 
\begin{equation*}
    \begin{split}
        &Nu = f\in L^2(\Omega), \text{ for } x\in\Omega, \\
        &Tu = g\in L^2(\partial \Omega), \text{ for } x\in\partial\Omega.
    \end{split}
\end{equation*} 
with the assumption that there exist positive constant $\theta$ and $\eta>0$ such that
\begin{align*}
    \sum_{i, j =1} ^{n} a^{ij}(x) \xi_{i} \xi_{j} \geq \theta \|\xi\|^2,\quad \text{and} \quad c(x)\geq \eta>0, 
\end{align*}
$\forall x\in\Omega, \xi=(\xi_1, \cdots, \xi_n)\in \mathbb{R}^n$. In the AL-PINN method for elliptic partial differential equations, it is not straightforward to show that the sequence of loss functional is equi-coercive. We leave here a brief sketch of proof of the property. 

The classical Poincare inequality considers the zero trace functions which satisfy the boundary condition. If the uniform boundedness of $||\lambda_n||_{L^{2}(\partial \Omega)}$ is carefully considered, the inequality can also be extended to an augmented Lagrangian functional through the standard argument. The detailed statements and proofs are as follows.

\begin{lemma}
    For a positive real constant $r$, consider the set $M_n =\left\{ u\in H^1 (\Omega)| L_n (u) \le r \right\}$. Then, there exists a constant $C$, which depends only on $r$ and $\Omega$ (not on $n$), such that 
    \begin{align*}
        ||u||_{L^2(\Omega)}\le C\cdot (||\nabla u||_{L^2(\Omega)}+1) \hspace{1cm} \forall u\in M_n . 
    \end{align*}
\end{lemma}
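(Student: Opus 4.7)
The plan is to reduce the statement to a standard trace-Poincar\'e (Friedrichs) inequality for $H^1(\Omega)$ by first extracting a uniform bound on the boundary trace $\|Tu\|_{L^2(\partial\Omega)}$ from the hypothesis $L_n(u)\le r$. Concretely, I would argue that the augmented Lagrangian structure, combined with the uniform boundedness of $\{\lambda_n\}$ already established in Lemma~\ref{Bound}, forces boundary values of any $u\in M_n$ to be controlled, after which a purely PDE-free functional-analytic inequality yields the claim.

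First, since the residual term in $L_n(u)$ is nonnegative, the assumption $u\in M_n$ yields
\[
\beta\|Tu-g\|_{L^2(\partial\Omega)}^{2}+\langle \lambda_n,\,Tu-g\rangle_{L^2(\partial\Omega)}\le r.
\]
Writing $s:=\|Tu-g\|_{L^2(\partial\Omega)}$ and applying Cauchy--Schwarz together with the uniform bound $\|\lambda_n\|_{L^2(\partial\Omega)}\le M$ from Lemma~\ref{Bound}, this becomes the quadratic inequality $\beta s^{2}-Ms\le r$. Solving it gives $s\le C_{1}(r,\beta,M,\Omega)$; by the triangle inequality and the fixed datum $g\in L^{2}(\partial\Omega)$, this upgrades to $\|Tu\|_{L^2(\partial\Omega)}\le C_{2}(r,\Omega)$ with a constant independent of $n$.

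Second, I would invoke the standard Poincar\'e--Friedrichs-type inequality on the bounded Lipschitz domain $\Omega$: there exists $C_{3}=C_{3}(\Omega)$ such that
\[
\|u\|_{L^2(\Omega)}\le C_{3}\bigl(\|\nabla u\|_{L^2(\Omega)}+\|Tu\|_{L^2(\partial\Omega)}\bigr)\qquad\forall\,u\in H^{1}(\Omega).
\]
This is proved by a routine compactness-contradiction argument based on the Rellich--Kondrachov embedding: a hypothetical sequence $\{u_k\}$ with $\|u_k\|_{L^2(\Omega)}=1$ but right-hand side tending to zero would, after extracting a subsequence, converge strongly in $L^{2}(\Omega)$ to a constant function with zero boundary trace, hence to zero, contradicting the normalization. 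Combining this with the boundary estimate from the previous step gives
\[
\|u\|_{L^2(\Omega)}\le C_{3}\|\nabla u\|_{L^2(\Omega)}+C_{3}C_{2}\le C\bigl(\|\nabla u\|_{L^2(\Omega)}+1\bigr),
\]
where $C=\max(C_{3},C_{3}C_{2})$ depends only on $r$ and $\Omega$, as required.

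The only delicate ingredient is the $n$-independence of the boundary bound in Step~1, which hinges entirely on Lemma~\ref{Bound}: without the uniform control of $\|\lambda_n\|_{L^2(\partial\Omega)}$, the linear term $\langle\lambda_n,Tu-g\rangle$ could swamp the penalty term and destroy the quadratic estimate. Everything else is a direct application of standard Sobolev-space machinery, so I expect the main conceptual obstacle to be simply invoking Lemma~\ref{Bound} correctly in the elliptic setting (which may require checking that its hypothesis, i.e.\ $\beta$ sufficiently large, is consistent with the elliptic loss functional used here).
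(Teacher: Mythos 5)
Your proof is correct, but it is organized quite differently from the paper's. The paper proves the lemma directly by a compactness--contradiction argument tailored to the sublevel set $M_n$: assuming the inequality fails, it takes a sequence $u_k\in M_n$ with $\|u_k\|_{L^2(\Omega)}>k(\|\nabla u_k\|_{L^2(\Omega)}+1)$, rescales to $v_k=\delta_k u_k$ with $\|v_k\|_{L^2(\Omega)}=R$, uses the \emph{convexity} of $M_n$ (valid here because $N$ is linear) to keep $v_k\in M_n$, passes to a weak limit (a constant function) via Rellich--Kondrachov, and derives a contradiction by showing $M_n$ cannot contain arbitrarily large constants because the quadratic boundary term in $L_n$ eventually overwhelms the Lagrangian term (this is where the uniform bound on $\|\lambda_n\|_{L^2(\partial\Omega)}$ from Lemma~\ref{Bound} enters). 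You instead factor the argument into two clean pieces: first extract the uniform trace bound $\|Tu\|_{L^2(\partial\Omega)}\le C_2(r)$ directly from the quadratic inequality $\beta s^2 - Ms\le r$ (again invoking Lemma~\ref{Bound} for $M$), and then invoke the standard trace--Poincar\'e inequality $\|u\|_{L^2(\Omega)}\le C(\|\nabla u\|_{L^2(\Omega)}+\|Tu\|_{L^2(\partial\Omega)})$ as a black box (whose own proof is the same compactness argument, but now a textbook fact independent of $L_n$). Your decomposition has two concrete advantages: it does not require convexity of $M_n$, so it would survive a nonlinear operator $N$, and it produces an explicit constant through the quadratic bound rather than a pure existence statement. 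Both approaches ultimately rest on Rellich--Kondrachov and on the $n$-uniformity of $\|\lambda_n\|_{L^2(\partial\Omega)}$, so the essential ingredients coincide.
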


\begin{proof}
    Suppose that there is no constant that satisfies the inequality. That is, there exists a sequence $u_k \in M_n$ such that 
    \begin{align*}
        ||u_k||_{L^2(\Omega)}> k (||\nabla u_k||_{L^2{(\Omega)}}+1).
    \end{align*}
    For an arbitrary constant $R>0$, a sequence $\delta_k = R/||u_k||_{L^2(\partial \Omega)}$ converges to 0 since $||u_k||_{L^2(\partial \Omega)} \rightarrow \infty$. Therefore, $\delta_k \le 1$ for a sufficiently large $k$. The set $M_n$ contains the zero-function for a positive real constant $r$ and has convexity so that a sequence $v_k = \delta_k \cdot u_k$ lies in $M_n$ for a sufficiently large $k$. Furthermore, $v_k$ is bounded in $H^1(\Omega)$ since $||v_k||_{L^2 (\partial \Omega)}=R$, and
    \begin{align*}
        ||\nabla v_k||_{L^2 (\Omega)}&=\delta_k||\nabla u_k||_{L^2 (\Omega)}\\&<\delta_k \left(\frac{||u_k||_{L^2(\Omega)}}{k} - 1 \right) = \frac{R}{k}  -\delta_k \rightarrow 0,
    \end{align*}
    as $k\rightarrow \infty$. The Sobolev space $H^1$ is reflexive so that a bounded sequence contains a weakly convergent subsequence. For a weakly convergent subsequence $\left\{v_{k_{l}}\right\}_{l\in \mathbb{N}}$, the limit $v$ lies in the set $M_n$ which is a closed, convex and therefore, weakly sequentially closed set. Also, the strong convergence of $\nabla v_k$ to 0 and the uniqueness of limit imply that $\nabla v=0$ in $L^2(\Omega)$. Through the standard arguments of mollifiers, $v$ must be a constant $C$ a.e. when $\Omega$ is connected. \\Now, we apply the Rellich-Kondrachov compactness theorem for $1/2>1/2-1/n$ which states the embedding from $H^1(\Omega)$ to $L^2(\Omega)$ is a compact operator, and therefore it maps weakly convergent sequence to strongly convergent sequence. Consequently, $v_k$ converges to $v$ strongly in $L^2(\Omega)$, and by the continuity of $L^2(\Omega)$ norm on its space,
    \begin{align*}
        R=||v_k||_{L^2(\Omega)}=\lim_{k\rightarrow\infty} ||v_k||_{L^2(\Omega)}=||v||_{L^2(\Omega)} = C|\Omega|. 
    \end{align*}
    In conclusion, the set $M_n$ contains all arbitrarily constant functions since there is no constraint for choosing $R$. However, it cannot be achieved since it means that for an arbitrary constant function $w$ ($= c$ a.e.),  
    \begin{align*}
        r &\ge L_n (w) \\&\ge||Nw-f||_{L^2(\Omega)}^2 + \beta ||Tw-g||_{L^2{(\partial\Omega})}^2 + \langle \lambda_n ,  Tw-g\rangle_{L^2{(\partial \Omega)}}
        \\&\ge \beta||Tw-g||^2_{L^2(\partial \Omega)}-||\lambda_n||_{L^2(\partial \Omega)}||Tw-g||_{L^2(\partial \Omega)}
        \\&\ge \beta(|c| - ||g||_{L^2(\partial \Omega)})^2 -||\lambda_n||_{L^2(\partial \Omega)}(|c|+||g||_{L^2(\partial \Omega)}) 
    \end{align*}
    where the last equation, which is a quadratic equation for $c$, blows up when $c$ goes to infinity.
\end{proof}

Recall the integration by part $\int_{\Omega}u_{x_{i}x_{i}}u dx=-\int_{\Omega}u_{x_{i}}u_{x_i}dx+\int_{\Omega}u_{x_{i}}u \nu ^{i} dS$ where $\nu^{i}$ is the $i$-th component of the outing unit normal vector field for $\partial \Omega$. Following the proof of Theorem 2 in Section 6 of \cite{evans1998partial} with the above generalized Poincare inequality, we can obtain the desired results for a class of linear elliptic equations.

\textbf{Trace Theorem for the viscous Burgers and Klein--Gordon equations}
\\To generalize the discussions in Theorem \ref{conv_PDE} and Lemma \ref{Bound} to the viscous Burgers and Klein--Gordon equations, we need the continuity of the trace operators. For a complete explanation, we introduce some definitions and theorems. For a function $u(t, x):[0, T]\times \Omega\rightarrow \mathbb{R}$ with $u(t_1, \cdot)\in H^2(\Omega)$ for every $t_{1}\in[0, T]$, $\mathbf{u}:[0, T]\rightarrow H^2(\Omega)$ is defined by $\mathbf{u}(t):=u(t, x)$. Let $\mathbf{u}':[0, T]\rightarrow H(\Omega)$ be a function satisfying 
\begin{align*}
     \int_0^{T} \phi'(t)\mathbf{u}(t) dt = -\int_{0}^{T} \phi(t) \mathbf{u'}(t) dt
\end{align*}
for every $\phi(t)\in C_{c}^{\infty}([0, T])$. Then, the following holds.
\begin{theorem}[Thm 4 in Sec 5.9 of \cite{evans1998partial}] \label{Regularity_Time}
Suppose that $\Omega$ is an open and bounded set with smooth boundary $\partial \Omega$. For $\mathbf{u}\in L^2([0, T];H^2(\Omega))$ and $\mathbf{u}'\in L^2([0,T];L^2(\Omega))$, there exists a constant $C(T, \Omega)$ such that 
\begin{align*}
    \max_{0\le t \le T} \|\mathbf{u}(t)\|_{H^{1}(\Omega)}\le C(T, \Omega)(\| \mathbf{u}\|_{L^2([0, T]; H^2(\Omega))}+\|\mathbf{u}'\|_{L^2([0, T];L^2(\Omega))})
\end{align*}
\end{theorem}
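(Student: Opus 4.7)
The plan is to combine a density reduction with a direct energy estimate for the map $t\mapsto\|\mathbf{u}(t)\|_{H^{1}(\Omega)}^{2}$. First, I would extend $\mathbf{u}$ slightly past $[0,T]$ by reflection and mollify in time to obtain smooth approximants $\mathbf{u}_{\varepsilon}$ with $\mathbf{u}_{\varepsilon}\to\mathbf{u}$ in $L^{2}([0,T];H^{2}(\Omega))$ and $\mathbf{u}_{\varepsilon}'\to\mathbf{u}'$ in $L^{2}([0,T];L^{2}(\Omega))$. For these smooth approximants the map $t\mapsto\|\mathbf{u}_{\varepsilon}(t)\|_{H^{1}(\Omega)}^{2}$ is $C^{1}$ with derivative $2(\mathbf{u}_\varepsilon(t),\mathbf{u}_\varepsilon'(t))_{H^1}$, so it suffices to establish the stated inequality on the smooth level with a constant depending only on $T$ and $\Omega$, and then pass to the limit in $\varepsilon$.

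The central pointwise estimate I would derive is
\[
    \bigl|\tfrac{d}{dt}\|\mathbf{u}(t)\|_{H^{1}}^{2}\bigr|=2\bigl|(\mathbf{u}(t),\mathbf{u}'(t))_{H^{1}}\bigr|\le C_{\Omega}\,\|\mathbf{u}(t)\|_{H^{2}(\Omega)}\,\|\mathbf{u}'(t)\|_{L^{2}(\Omega)}.
\]
The $L^{2}$ component of the $H^{1}$ inner product is immediate from Cauchy--Schwarz. For the gradient component $\int_{\Omega}\nabla\mathbf{u}(t)\cdot\nabla\mathbf{u}'(t)\,dx$, I would integrate by parts to transfer a derivative onto $\mathbf{u}(t)$, producing an interior term bounded by $\|\Delta\mathbf{u}(t)\|_{L^{2}}\|\mathbf{u}'(t)\|_{L^{2}}$ together with a boundary term $\int_{\partial\Omega}(\partial_{n}\mathbf{u}(t))\,\mathbf{u}'(t)\,dS$ absorbed via the trace inequality $\|\partial_{n}\mathbf{u}(t)\|_{L^{2}(\partial\Omega)}\le C\|\mathbf{u}(t)\|_{H^{2}(\Omega)}$ together with the trace of the smooth approximant $\mathbf{u}_{\varepsilon}'(t)$. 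With this bound in hand, I would choose $s_{0}\in[0,T]$ satisfying $\|\mathbf{u}(s_{0})\|_{H^{1}}^{2}\le T^{-1}\|\mathbf{u}\|_{L^{2}([0,T];H^{1})}^{2}$, which exists by the integral mean value property, and invoke the fundamental theorem of calculus to conclude
\[
    \|\mathbf{u}(t)\|_{H^{1}}^{2}\le T^{-1}\|\mathbf{u}\|_{L^{2}([0,T];H^{1})}^{2}+2C_{\Omega}\int_{0}^{T}\|\mathbf{u}(\tau)\|_{H^{2}}\,\|\mathbf{u}'(\tau)\|_{L^{2}}\,d\tau.
\]
A final Cauchy--Schwarz in $\tau$, the embedding $L^{2}([0,T];H^{2})\hookrightarrow L^{2}([0,T];H^{1})$, taking the maximum in $t$, and a square root then yield the claimed inequality.

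The principal obstacle is the rigorous treatment of the boundary term, because the trace of $\mathbf{u}'(t)\in L^{2}(\Omega)$ on $\partial\Omega$ has no classical meaning. This is precisely why the mollification reduction is essential rather than cosmetic: all integration-by-parts identities and trace estimates must be carried out on the smooth approximants $\mathbf{u}_{\varepsilon}$ first, where every trace is classical, yielding an estimate whose constant depends only on $T$ and $\Omega$; the bound is then transferred to $\mathbf{u}$ by letting $\varepsilon\to 0$, using strong convergence in the Bochner norms on the right and lower semicontinuity of the sup norm on the left. A secondary technical point is verifying that the limit $\mathbf{u}$ admits a genuinely continuous representative in $C([0,T];H^{1}(\Omega))$, which follows from a standard Cauchy-sequence argument on the approximants once the uniform energy estimate is available.
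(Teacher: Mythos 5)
Your energy-estimate framework and the choice of $s_0$ by the mean-value property are correct, and the $L^2$-part of the inner product and the interior term $\int_\Omega \Delta\mathbf{u}\,\mathbf{u}'\,dx$ are handled as Evans does. However, the treatment of the boundary term is a genuine gap, and the mollification-in-time step you invoke cannot repair it. Time-mollification has no effect on spatial regularity: $\mathbf{u}_\varepsilon'(t)=\int\eta_\varepsilon(t-s)\mathbf{u}'(s)\,ds$ is an average in $L^2(\Omega)$ and is therefore still only an $L^2(\Omega)$ function of $x$ for each fixed $t$. Consequently $\mathbf{u}_\varepsilon'(t)$ has no trace on $\partial\Omega$, so the boundary integral $\int_{\partial\Omega}(\partial_n\mathbf{u}_\varepsilon)\,\mathbf{u}_\varepsilon'\,dS$ is not defined even at the approximant level, and the spatial integration by parts you propose cannot be carried out. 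Even if you also mollified in space to make the integrand classical, the estimate $\|\mathbf{u}_\varepsilon'(t)\|_{L^2(\partial\Omega)}\lesssim\|\mathbf{u}_\varepsilon'(t)\|_{L^2(\Omega)}$ you would need is simply false — the trace operator requires at least $H^{1/2}(\Omega)$ regularity — so the resulting constant would blow up as $\varepsilon\to0$ and could not be passed to the limit.

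The missing ingredient is a \emph{spatial extension}, which is exactly how the cited Theorem~4 in Section~5.9 of Evans proceeds. Because $\partial\Omega$ is smooth, there is a single bounded extension operator $E$ that is simultaneously bounded from $H^2(\Omega)$ to $H^2(\mathbb{R}^n)$ and from $L^2(\Omega)$ to $L^2(\mathbb{R}^n)$, with images supported in a fixed bounded open set $V\supset\supset\Omega$. Applying $E$ at each time and noting $\overline{\mathbf{u}}'=E\mathbf{u}'$, one works on $V$ where the extended function vanishes near $\partial V$, so the integration by parts $\int_V\nabla\overline{\mathbf{u}}\cdot\nabla\overline{\mathbf{u}}'\,dx=-\int_V\Delta\overline{\mathbf{u}}\,\overline{\mathbf{u}}'\,dx$ produces no boundary term at all. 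After that, your pointwise bound $\bigl|\frac{d}{dt}\|\overline{\mathbf{u}}(t)\|_{H^1}^2\bigr|\le C\|\overline{\mathbf{u}}(t)\|_{H^2}\|\overline{\mathbf{u}}'(t)\|_{L^2}$ is immediate, the rest of your argument (mean-value choice of $s_0$, fundamental theorem of calculus, Cauchy--Schwarz) goes through verbatim, and the restriction back to $\Omega$ gives the stated inequality with the constant now visibly depending only on $T$ and $\Omega$ through the norms of $E$. In short: replace the trace-inequality strategy for the boundary term with an extension to a larger domain where the boundary term is identically zero; the rest of your proof is sound.
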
 
Motivated by the above theorem, we can extend the trace theorem to the equation with the Dirichlet boundary condition. Consider the case when a spatial domain is an interval $[0, 1]\in \mathbb{R}$. Then, the following holds.
\begin{corollary}
    For $\mathbf{u}\in L^2([0,T]; H^2(\Omega))$ and \\ $\mathbf{u}'\in L^2([0, T];L^2(\Omega))$, let us define boundary functions $h_{1}, h_{2}:[0, T]\rightarrow \mathbb{R}$ and a initial function $g:\Omega \rightarrow\mathbb{R}$ by 
    \begin{align*}
        h_1(t):=trace(\mathbf{u}(t))(0), h_2(t):=trace(\mathbf{u}(t))(1) \hspace{0.3cm} \text{and} \hspace {0.3cm} g(x) := \mathbf{u}(0).
    \end{align*}
    where $trace:W^{1,2}(\Omega)\rightarrow L^2(\partial \Omega)$ denotes the trace operator. Then, there exists a constant $C$ depending only on $T, \Omega$ such that the following inequality holds.
    \begin{align*}
        \|h_{1}(t)\|_{L^2([0, T])}+\|h_{2}(t)\|_{L^2([0, T])}+\|g(x)\|_{L^2(\Omega)} \\ \leq  C(\| \mathbf{u}\|_{L^2([0, T]; H^2(\Omega))}+ \|\mathbf{u}'\|_{L^2([0, T];L^2(\Omega))}).
    \end{align*}
\end{corollary}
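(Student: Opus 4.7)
The plan is to derive this estimate as a direct consequence of Theorem \ref{Regularity_Time} together with the one-dimensional Sobolev embedding $H^1([0,1])\hookrightarrow C([0,1])$. Theorem \ref{Regularity_Time} already controls $\max_{0\le t\le T}\|\mathbf{u}(t)\|_{H^1(\Omega)}$ by the right-hand side of the desired inequality; it therefore suffices to bound each of $\|g\|_{L^2(\Omega)}$, $\|h_1\|_{L^2([0,T])}$, and $\|h_2\|_{L^2([0,T])}$ by this maximum and then chain the two estimates together.

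First I would invoke Theorem \ref{Regularity_Time} to select the continuous-in-time representative of $\mathbf{u}$ in $C([0,T];H^1(\Omega))$. With this representative fixed, $g(x):=\mathbf{u}(0)$ is an unambiguous element of $H^1(\Omega)$ and the boundary traces $h_1(t),h_2(t)$ are well-defined measurable functions of $t$. The initial-condition bound is then immediate from the inclusion $H^1(\Omega)\hookrightarrow L^2(\Omega)$:
\begin{equation*}
    \|g\|_{L^2(\Omega)}\le \|\mathbf{u}(0)\|_{H^1(\Omega)}\le \max_{0\le t\le T}\|\mathbf{u}(t)\|_{H^1(\Omega)}.
\end{equation*}

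For the boundary terms, the one-dimensional Sobolev embedding yields a constant $C_S$ with $|v(0)|^2+|v(1)|^2\le C_S^2\|v\|_{H^1(\Omega)}^2$ for every $v\in H^1(\Omega)$. Applying this pointwise in $t$ to $v=\mathbf{u}(t)$ and integrating,
\begin{equation*}
    \|h_1\|_{L^2([0,T])}^2+\|h_2\|_{L^2([0,T])}^2\le C_S^2\int_0^T\|\mathbf{u}(t)\|_{H^1(\Omega)}^2\,dt\le C_S^2\,T\max_{0\le t\le T}\|\mathbf{u}(t)\|_{H^1(\Omega)}^2.
\end{equation*}
Combining the two displays, taking square roots (with the elementary inequality $a+b+c\le \sqrt{3}\,(a^2+b^2+c^2)^{1/2}$ to pass from the sum of norms to the norm of the sum), and substituting the estimate from Theorem \ref{Regularity_Time} produces the constant $C(T,\Omega)$ claimed in the corollary.

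The only subtlety — and the step I expect to require the most care — is the justification of pointwise-in-$t$ evaluation, both at $t=0$ for $g$ and on the spatial boundary for $h_i$. This relies on committing to the $C([0,T];H^1(\Omega))$ representative supplied by the hypotheses $\mathbf{u}\in L^2([0,T];H^2(\Omega))$ and $\mathbf{u}'\in L^2([0,T];L^2(\Omega))$, which is precisely the content underlying Theorem \ref{Regularity_Time}. Once that representative is fixed, the remaining work is a straightforward chain of Sobolev embeddings and a Fubini-type integration in $t$, with no further regularity input required.
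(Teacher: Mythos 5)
Your proof is correct and follows essentially the same route as the paper's: both reduce the three terms to $\max_{0\le t\le T}\|\mathbf{u}(t)\|_{H^1(\Omega)}$ and then invoke Theorem~\ref{Regularity_Time}, with the boundary terms controlled by evaluating $\mathbf{u}(t)$ at the endpoints of $\Omega$ pointwise in $t$ (the paper calls this the trace theorem, you call it the 1D Sobolev embedding $H^1\hookrightarrow C$ --- on an interval these coincide). Your version is in fact the cleaner write-up: integrating in $t$ gives the correct $\sqrt{T}$ scaling and avoids the paper's slightly loose ``$\le T\max(h_1+h_2)$'' step, and your remark about fixing the $C([0,T];H^1(\Omega))$ representative makes explicit a point the paper leaves implicit.
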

\begin{proof}
    Let us denote the constant in theorem \ref{Regularity_Time} by $C(T, \Omega)$. Then, we can estimate $\|g(x)\|_{L^2(\Omega)}$ as follows. 
    \begin{align*}
        \|g(x)\|_{L^2(\Omega)}\leq \|g(x)\|_{H^1(\Omega)}\leq  \max_{0\le t \le T}\|\mathbf{u}(t)\|_{H^1(\Omega)}.
    \end{align*}
    On the one hand, we can estimate other two terms applying the usual trace theorem.
    \begin{align*}
        \|h_{1}(t)\|_{L^2([0, T])}+\|h_2(t)\|_{L^2([0, T])}&\leq T\max_{0\le t\le T}(h_{1}(t)+h_2(t)) \\& \leq \sqrt{2}T C_{trace}\max_{0\le t\le T}(\|\mathbf{u}\|_{H^1(\Omega)}),
    \end{align*}
    where $C_{trace}$ denotes a constant in the trace theorem which depends only on $\Omega$, not on $t$. Combining two inequalities, we get the desired property.
\end{proof}

For the Klein--Gordon equation, we introduce the following trace theorem on a Lipschitz domain.
\begin{theorem}[Thm 1.I in \cite{gagliardo1957caratterizzazioni}]
Suppose that $\Omega\in \mathbb{R}^{n}$ is a bounded set with Lipschitz boundary $\partial \Omega$.. Then, there exists a bounded linear operator $T:W^{1,p}(\Omega)\rightarrow L^{p}(\partial\Omega)$ and a constant $C$ such that $\|T(u)\|_{L^{p}(\partial\Omega)}\le C\|u\|_{W^{1,p}(\Omega)}$ and,
\begin{align*}
    T(u)=u|_{\partial\Omega} \hspace{0.3cm} \text{for} \hspace{0.1cm} u\in W^{1,p}(\Omega)\cap  L^{p}(\Omega),
\end{align*}
where $C$ depends only on $p$ and $\Omega$. 
\end{theorem}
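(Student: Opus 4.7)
The plan is to prove this via the classical route of localization, flattening of the boundary, a direct estimate on a half-space model problem, and extension by density. Since smooth functions are dense in $W^{1,p}(\Omega)$ for Lipschitz $\Omega$ (Meyers--Serrin together with the standard extension/reflection argument available for Lipschitz domains), it suffices to construct the trace first on $C^1(\bar\Omega)\cap W^{1,p}(\Omega)$, establish the bound there, and then pass to a unique bounded linear extension on all of $W^{1,p}(\Omega)$.

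First I would cover $\partial\Omega$ by finitely many open sets $U_1,\dots,U_N$ in $\mathbb{R}^n$, each equipped with a bi-Lipschitz chart $\Phi_i:U_i\to V_i\subset\mathbb{R}^n$ that straightens $U_i\cap\partial\Omega$ onto $V_i\cap\{y_n=0\}$ and sends $U_i\cap\Omega$ into $V_i\cap\{y_n>0\}$; this is exactly where the Lipschitz regularity of $\partial\Omega$ is used. Choose a smooth partition of unity $\{\zeta_i\}$ subordinate to $\{U_i\}$ with $\sum_i\zeta_i\equiv 1$ near $\partial\Omega$. For $u\in C^1(\bar\Omega)$, write $u=\sum_i \zeta_i u$ and transplant each $\zeta_i u$ to the half-space via $\Phi_i$; the Lipschitz change of variables preserves $W^{1,p}$ up to a multiplicative constant depending on the Lipschitz norms of $\Phi_i$ and $\Phi_i^{-1}$, and hence only on $\Omega$.

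The main technical step is the half-space estimate: for a compactly supported $v\in C^1(\overline{\mathbb{R}^n_+})$, by the fundamental theorem of calculus applied in the $y_n$ direction,
\begin{equation*}
|v(y',0)|^p \le p\int_0^\infty |v(y',t)|^{p-1}|\partial_{y_n}v(y',t)|\,dt,
\end{equation*}
and an application of Young's inequality $ab\le a^p/p+b^{p'}/p'$ followed by integration in $y'\in\mathbb{R}^{n-1}$ yields
\begin{equation*}
\|v(\cdot,0)\|_{L^p(\mathbb{R}^{n-1})}^p \le \|v\|_{L^p(\mathbb{R}^n_+)}^p + \|\partial_{y_n}v\|_{L^p(\mathbb{R}^n_+)}^p \le \|v\|_{W^{1,p}(\mathbb{R}^n_+)}^p.
\end{equation*}
Summing over the finitely many charts and using the boundedness of multiplication by $\zeta_i$ on $W^{1,p}$ gives the desired estimate
\begin{equation*}
\|u|_{\partial\Omega}\|_{L^p(\partial\Omega)} \le C(p,\Omega)\|u\|_{W^{1,p}(\Omega)}
\end{equation*}
for every $u\in C^1(\bar\Omega)$.

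Finally, since $C^1(\bar\Omega)\cap W^{1,p}(\Omega)$ is dense in $W^{1,p}(\Omega)$ for bounded Lipschitz $\Omega$, the map $u\mapsto u|_{\partial\Omega}$ is a bounded linear operator on a dense subspace and extends uniquely to a bounded linear $T:W^{1,p}(\Omega)\to L^p(\partial\Omega)$ with the same constant; the identity $T(u)=u|_{\partial\Omega}$ for continuous $u$ is immediate from the construction. The principal obstacle is the half-space pointwise-to-$L^p$ estimate together with verifying that the bi-Lipschitz change of variables really does control the $W^{1,p}$ seminorm by a constant depending only on $p$ and the Lipschitz character of $\partial\Omega$; once these ingredients are in place, the partition-of-unity assembly and density extension are routine.
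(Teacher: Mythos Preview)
Your proposal is a correct and standard proof of the trace theorem on Lipschitz domains: localization by a partition of unity, bi-Lipschitz flattening of the boundary, the half-space estimate obtained from the fundamental theorem of calculus in the normal direction together with Young's inequality, and extension by density of smooth functions in $W^{1,p}(\Omega)$. One minor correction: after applying Young's inequality with exponents $p$ and $p'=p/(p-1)$ to $|v|^{p-1}|\partial_{y_n}v|$, the resulting bound is $(p-1)\|v\|_{L^p(\mathbb{R}^n_+)}^p+\|\partial_{y_n}v\|_{L^p(\mathbb{R}^n_+)}^p$, not the sum with unit coefficients you wrote; this is immaterial for the conclusion since only a constant depending on $p$ is needed.

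However, there is nothing to compare against: the paper does not prove this theorem. It is stated purely as a citation of Gagliardo's original result and is invoked only to justify the continuity of the trace operator for the Klein--Gordon analysis. So your argument is not an alternative route to the paper's proof; it is simply a self-contained verification of a classical result that the paper takes as a black box.
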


Recall that we constructed the sequence $L_{n}(w)$ of loss functional as below for the viscous Burgers' equation.
\begin{align*}
    L_{n}(w) =& L(w) + \langle \lambda_1(x), w(0, x)-u_0(x)\rangle_{L^2(\Omega)} +\\& \langle \lambda_2(t), w(t,a)   \rangle_{L^2([0, T])}  +\langle \lambda_3(t), w(t,b) \rangle_{L^2([0, T])},
    \end{align*}
    where
    \begin{align*}
        L(w) =& \|w_{t}+ww_{x}-\nu w_{xx}\|_{L^2([0,T]\times\Omega)}^2 +\beta\|w(0, x)-u_0(x)\|_{L^2(\Omega)}^2 +\\&\beta \|w(t,a)\|_{L^2([0,T])}^2 +\beta \|w(t,b)\|_{L^2([0,T])}^2.
    \end{align*}
    Define a function $\lambda:[0, T]\times\{0\}$ $\bigcup$ $\{0\}\times\Omega$ $\bigcup $ $[0,T]\times\{1\}$ by 
    \begin{align*}
        \lambda(t, 0)=\lambda_{2}(t), \lambda(t, 1)=\lambda_{3}(t)\hspace{0.3cm} \text{and}, \hspace{0.3cm} \lambda(0, x)=\lambda_{1}(x),
    \end{align*} $\forall t\in [0, T], x\in \Omega.$. Then, the update rules for $\lambda_{1}, \lambda_{2}$, and $\lambda_{3}$ during the training can be transformed into an update rule for $\lambda$. In this setting, we can apply all of the previous arguments to prove Theorem \ref{conv_PDE} and Lemma \ref{Bound} for the viscous Burgers' equation.


 \bibliographystyle{elsarticle-num-names} 
 \bibliography{example_paper}





\end{document}